\newif\ifdraft
\newif\iffinal
\newif\ifarxiv
\newcolumntype{L}[1]{>{\raggedright\let\newline\\\arraybackslash\hspace{0pt}}m{#1}}
\newcolumntype{C}[1]{>{\centering\let\newline\\\arraybackslash\hspace{0pt}}m{#1}}
\newcolumntype{R}[1]{>{\raggedleft\let\newline\\\arraybackslash\hspace{0pt}}m{#1}}
\newtheorem{lemma}{Lemma}
\newenvironment{proof}[1][Proof]{\textbf{#1.} }{\ \rule{0.5em}{0.5em}}
\newenvironment{sciabstract}{%
\begin{quote} \bf}
{\end{quote}}
\newcounter{lastnote}
\newenvironment{scilastnote}{%
\setcounter{lastnote}{\value{enumiv}}%
\addtocounter{lastnote}{+1}%
\begin{list}%
{\arabic{lastnote}.}
{\setlength{\leftmargin}{.22in}}
{\setlength{\labelsep}{.5em}}}
{\end{list}}
\newenvironment{scilastnote}{\paragraph{Acknowledgements.}}{}
\DeclareSymbolFont{extraup}{U}{zavm}{m}{n}
\DeclareMathSymbol{\heartsuit}{\mathalpha}{extraup}{86}
\DeclareMathSymbol{\diamondsuit}{\mathalpha}{extraup}{87}
\newcommand{\HUNL}{{HUNL}}
\newtheorem{theorem}{Theorem}
\title{DeepStack: Expert-Level Artificial Intelligence in Heads-Up No-Limit Poker}
\author{
Matej Morav\v{c}\'{i}k$^{\spadesuit,\heartsuit,\dag}$, Martin Schmid$^{\spadesuit,\heartsuit,\dag}$, 
Neil Burch$^\spadesuit$, Viliam Lis\'y$^{\spadesuit,\clubsuit}$, \\
Dustin Morrill$^\spadesuit$, Nolan Bard$^\spadesuit$, Trevor Davis$^\spadesuit$, \\
Kevin Waugh$^\spadesuit$, Michael Johanson$^\spadesuit$, Michael Bowling$^{\spadesuit,\ast}$ \\
\normalsize{$^{\spadesuit}$Department of Computing Science, University of Alberta,}\\
\normalsize{Edmonton, Alberta, T6G2E8, Canada}\\
\normalsize{$^{\heartsuit}$Department of Applied Mathematics, Charles University,}\\
\normalsize{Prague, Czech Republic}\\
\normalsize{$^{\clubsuit}$Department of Computer Science, FEE, Czech Technical University,}\\
\normalsize{Prague, Czech Republic}\\
\\
\normalsize{$^\dag$These authors contributed equally to this work and are listed in alphabetical order.}
\\
\normalsize{$^\ast$To whom correspondence should be addressed; E-mail:  bowling@cs.ualberta.ca}
}
\date{}
\begin{document} 
\maketitle 

\begin{sciabstract}
Artificial intelligence has seen several breakthroughs in recent years, with games often serving as milestones.  A common feature of these games is that players have perfect information.  Poker is the quintessential game of imperfect information, and a longstanding challenge problem in artificial intelligence.  We introduce DeepStack, an algorithm for imperfect information settings. It combines recursive reasoning to handle information asymmetry, decomposition to focus computation on the relevant decision, and a form of intuition that is automatically learned from self-play using deep learning.  In a study involving 44,000 hands of poker, DeepStack defeated with statistical significance professional poker players in heads-up no-limit Texas hold'em.  
The approach is theoretically sound and is shown to produce more difficult to exploit strategies than prior approaches.
\iffinal\else
\footnote{This is the author's version of the work. It is posted here by permission of the AAAS for personal use, not for redistribution. The definitive version was published in {\em Science}, (March 02, 2017), doi: 10.1126/science.aam6960.}
\fi
\end{sciabstract}

Games have long served as benchmarks and marked milestones of progress in artificial intelligence (AI).  
In the last two decades, computer programs have reached a performance that exceeds expert human players in many games, e.g., backgammon~\cite{Tesauro95}, checkers~\cite{SchaefferEtAl96}, chess~\cite{CampbellEtAl02}, Jeopardy!~\cite{Ferucci12}, Atari video games~\cite{Mnih15:DQN}, and go~\cite{Silver16:AlphaGo}.
These successes all involve games with information symmetry, where all players have identical information about the current state of the game.
This property of perfect information is also at the heart of the algorithms that enabled these successes, e.g., local search during play~\cite{Samuel:AlphaBeta,Kocsis06:UCT}.

The founder of modern game theory and computing pioneer, von Neumann, envisioned reasoning in games without perfect information. ``Real life is not like that.  Real life consists of bluffing, of little tactics of deception, of asking yourself what is the other man going to think I mean to do.  And that is what games are about in my theory.''~\cite{Bronowski73}
One game that fascinated von Neumann was poker, where players are dealt private cards and take turns making bets or bluffing on holding the strongest hand, calling opponents' bets, or folding and giving up on the hand and the bets already added to the pot.
Poker is a game of imperfect information, where players' private cards give them asymmetric information about the state of game.

Heads-up no-limit Texas hold'em (HUNL) is a two-player version of poker in which two cards are initially dealt face-down to each player, and additional cards are dealt face-up in three subsequent rounds. No limit is placed on the size of the bets although there is an overall limit to the total amount wagered in each game~\cite{SOM}. AI techniques have previously shown success in the simpler game of heads-up limit Texas hold'em, where all bets are of a fixed size resulting in just under $10^{14}$ decision points~\cite{progress-in-hulhe}.
By comparison, computers have exceeded expert human performance in go~\cite{Silver16:AlphaGo}, a perfect information game with approximately $10^{170}$ decision points~\cite{allis1994}. The imperfect information game HUNL is comparable in size to go, with the number of decision points exceeding $10^{160}$~\cite{Johanson13:Sizes}.

Imperfect information games require more complex reasoning than similarly sized perfect information games. 
The correct decision at a particular moment depends upon the probability distribution over private information that the opponent holds, which is revealed through their past actions.
However, how our opponent's actions reveal that information depends upon their knowledge of our private information and how our actions reveal it.
This kind of recursive reasoning is why one cannot easily reason about game situations in isolation, which is at the heart of heuristic search methods for perfect information games.  
Competitive AI approaches in imperfect information games typically reason about the entire game and produce a complete strategy prior to play~\cite{ZinkevichEtAl07,GilpinEtAl07,endgame-solving-exception}.
Counterfactual regret minimization (CFR)~\cite{ZinkevichEtAl07,cprg:cfrd,Bowling15:Cepheus} is one such technique that uses self-play to do recursive reasoning through adapting its strategy against itself over successive iterations.  If the game is too large to be solved directly, the common response is to solve a smaller, abstracted game.  To play the original game, one translates situations and actions from the original game to the abstract game.

Although this approach makes it feasible for programs to reason in a game like \HUNL, it does so by squeezing \HUNL's $10^{160}$ situations down to the order of $10^{14}$ abstract situations.  Likely as a result of this loss of information, such programs are behind expert human play.  In 2015, the computer program Claudico lost to a team of professional poker players by a margin of 91 mbb/g~\cite{milli-big-blinds},
which is a ``huge margin of victory''~\cite{Wood15:PokerFuse-Claudico}.  Furthermore, it has been recently shown that abstraction-based programs from the Annual Computer Poker Competition have massive flaws~\cite{Lisy17:LocalBR}. Four such programs (including top programs from the 2016 competition) were evaluated using a local best-response technique that produces an approximate lower-bound on how much a strategy can lose.  All four abstraction-based programs are beatable by over 3,000 mbb/g, which is four times as large as simply folding each game.

DeepStack takes a fundamentally different approach.  It continues to use the recursive reasoning of CFR to handle information asymmetry.  However, it does not compute and store a complete strategy prior to play and so has no need for explicit abstraction.  Instead it considers each particular situation as it arises during play, but not in isolation.  It avoids reasoning about the entire remainder of the game by substituting the computation beyond a certain depth with a fast approximate estimate.  This estimate can be thought of as DeepStack's intuition: a gut feeling of the value of holding any possible private cards in any possible poker situation.  Finally, DeepStack's intuition, much like human intuition, needs to be trained.  We train it with deep learning~\cite{deep-learning-in-poker} using examples generated from random poker situations.  We show that DeepStack is theoretically sound, produces strategies substantially more difficult to exploit than abstraction-based techniques, and defeats professional poker players at \HUNL{} with statistical significance. 

\section*{DeepStack}

DeepStack is a general-purpose algorithm for a large class of sequential imperfect information games.  For clarity, we will describe its operation in the game of \HUNL.  The state of a poker game
can be split into the players' private information, hands of two cards dealt face down,
and the public state,
consisting of the cards laying face up on the table and the sequence
of betting actions made by the players. Possible sequences of public states in
the game form a public tree with every public state having an associated public subtree (Fig.~\ref{fig:public_tree}).

\iffinal\else 
\begin{figure}[t]
\centering
\includegraphics[width=0.8\textwidth]{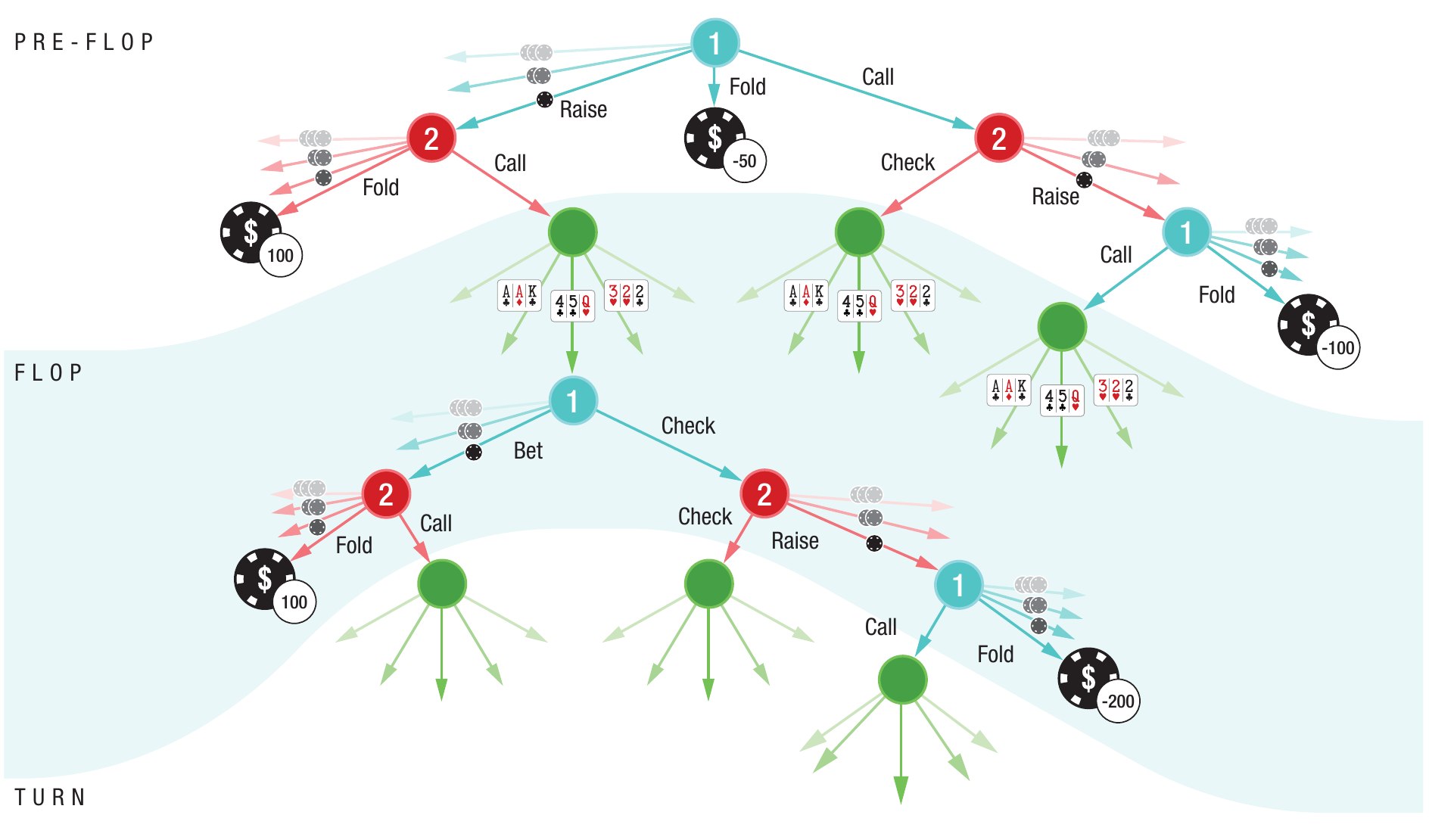}
\caption{A portion of the public tree in \HUNL{}.  Nodes represent public states, whereas edges represent actions: red and turquoise showing player betting actions, and green representing public cards revealed by chance.  The game ends at terminal nodes, shown as a chip with an associated value.  For terminal nodes where no player folded, the player whose private cards form a stronger poker hand receives the value of the state.}\label{fig:public_tree}
\end{figure}
\fi

A player's strategy defines a probability distribution over
valid actions for each decision point, where a decision point is the combination
of the public state and the hand for the acting player. Given a player's
strategy, for any public state one can compute the player's
range, which is the probability distribution over the player's
possible hands given that the public state is reached.

Fixing both players' strategies, the utility for a particular player
at a terminal public state, where the
game has ended, is a bilinear function of both players' ranges
using a payoff matrix determined by the rules of the game. The
expected utility for a player at any other public state, including the initial
state, 
is the expected utility over reachable terminal states 
given the players'
fixed strategies. A best-response strategy is one that
maximizes a player's expected utility against an opponent strategy.
In two-player zero-sum games, like \HUNL, a solution or
Nash equilibrium strategy~\cite{Nash50:equilibrium} maximizes
the expected
utility when playing against a best-response opponent strategy.
The exploitability of a strategy is the difference in expected
utility against its best-response opponent and the 
expected utility under a Nash equilibrium.  

The DeepStack algorithm seeks to compute and play a low-exploitability strategy for the game, i.e., solve for an approximate Nash equilibrium.  DeepStack computes this strategy during play only for the states of the public tree that actually arise.  Although computed during play, DeepStack's strategy is static, albeit stochastic, because it is the result of a deterministic computation that produces a probability distribution over the available actions.

The DeepStack algorithm (Fig.~\ref{fig:deepstack-overview}) is composed of three ingredients: a sound local strategy computation for the current public state, depth-limited lookahead using a learned value function to avoid reasoning to the end of the game, and a restricted set of lookahead actions.  
At a conceptual level these three ingredients describe heuristic search, which is responsible for many of AI's successes in perfect information games.
Until DeepStack, no theoretically sound application of heuristic search was known in imperfect information games.
The heart of heuristic search methods is the idea of ``continual re-searching'', where a sound local search procedure is invoked whenever the agent must act without retaining any memory of how or why it acted to reach the current state.  At the heart of DeepStack is continual re-solving, a sound local strategy computation which only needs minimal memory of how and why it acted to reach the current public state.  
 
\paragraph*{Continual re-solving.}
Suppose we have taken actions according to a particular solution strategy 
but then in  some public state forget this strategy.  Can we reconstruct 
a solution strategy for the subtree without having to solve the entire game again?
We can, through the process of re-solving~\cite{cprg:cfrd}.  We need to know both our range at the public state and a vector of expected values achieved by the opponent under the previous solution for each opponent hand~\cite{re-solving-values}.  With these values, we can reconstruct a strategy for only the remainder of the game, which does not increase our overall exploitability.
Each value in the opponent's vector is a counterfactual value, a conditional ``what-if'' value that gives the expected
value if the opponent reaches the public state with a particular hand.  The CFR algorithm also uses
counterfactual values, and if we use CFR as our solver, it is easy to compute the vector of opponent
counterfactual values at any public state.  

Re-solving, however, begins with a strategy, whereas our goal is to avoid ever maintaining a strategy for the entire game.  We get around this by doing continual re-solving: reconstructing a strategy by re-solving every time we need to act; never using the strategy beyond our next action.  To be able to re-solve at any public state, we need only keep track of our own range and a suitable vector of opponent counterfactual values.  These values must be an upper bound on the value the opponent can achieve with each hand in the current public state, while being no larger than the value the opponent could achieve had they deviated from reaching the public state.
This is an important relaxation of the counterfactual values typically used in re-solving, with a proof of sufficiency included in our proof of Theorem~\ref{thm} below~\cite{SOM}.

At the start of the game, our range is uniform and the opponent counterfactual values are initialized to the value of being dealt each private hand.  When it is our turn to act we re-solve the subtree at the current public state using the stored range and opponent values, and act according to the computed strategy, discarding the strategy before we act again.  After each action, either by a player or chance dealing cards, we update our range and opponent counterfactual values according to the following rules: (i) Own action: replace the opponent 
counterfactual values with those computed in the re-solved strategy for our chosen action.
Update our own range using the computed strategy and Bayes' rule.  (ii) Chance action: replace the opponent counterfactual values with those computed for this chance action from the last re-solve.  Update our own range by zeroing hands in the range that are impossible given new public cards. (iii) Opponent action: no change to our range or the opponent values are required.

These updates ensure the opponent counterfactual values satisfy our sufficient conditions, and the whole procedure produces arbitrarily close approximations of a Nash equilibrium (see Theorem~\ref{thm}).
Notice that continual re-solving never keeps track of the opponent's range, instead only keeping track of their counterfactual values.  Furthermore, it never requires knowledge of the opponent's action to update these values, which is an important difference from traditional re-solving.  Both will prove key to making this algorithm efficient and avoiding any need for the translation step required with action abstraction methods~\cite{Gilpin08:Tartanian,Schnizlein09:Translation}.

Continual re-solving is theoretically sound, but by itself impractical.  
While it does not ever maintain a complete strategy, re-solving itself is
intractable except near the end of the game. 
In order to make continual re-solving practical, we need to limit the depth and breadth of the re-solved subtree.

\paragraph*{Limited depth lookahead via intuition.} 
As in heuristic search for perfect information games, we would like to limit the depth of the subtree we have to reason about when re-solving.  However, in imperfect information games we cannot simply replace a subtree with a heuristic or precomputed value.  The counterfactual values at a public state are not fixed, but depend on how players play to reach the public state, i.e., the players' ranges~\cite{cprg:cfrd}.  When using an iterative algorithm, such as CFR, to re-solve, these ranges change on each iteration of the solver.  

DeepStack overcomes this challenge by replacing subtrees beyond a certain depth with a learned counterfactual value function that approximates the resulting values if that public state were to be solved with the current iteration's ranges.  The inputs to this function are the ranges for both players, as well as the pot size and public cards, which are sufficient to specify the public state.  The outputs are a vector for each player containing the counterfactual values of holding each hand in that situation.  In other words, the input is itself a description of a poker game: the probability distribution of being dealt individual private hands, the stakes of the game, and any public cards revealed; the output is an estimate of how valuable holding certain cards would be in such a game.  The value function is a sort of intuition, a fast estimate of the value of finding oneself in an arbitrary poker situation.  With a depth limit of four actions, this approach reduces the size of the game for re-solving from $10^{160}$ decision points at the start of the game down to no more than $10^{17}$ decision points.  DeepStack uses a deep neural network as its learned value function, which we describe later.

\paragraph*{Sound reasoning.}
DeepStack's depth-limited continual re-solving is sound.  If DeepStack's intuition is ``good'' and ``enough'' computation is used in each re-solving step, then DeepStack plays an arbitrarily close approximation to a Nash equilibrium.
\begin{theorem}
If the values returned by the value function used when the depth limit is reached have error less than $\epsilon$, and $T$ iterations of CFR are used to re-solve, then the resulting strategy's exploitability is less than $k_1\epsilon + k_2 / \sqrt{T}$, where $k_1$ and $k_2$ are game-specific constants.  For the proof, see \cite{SOM}.
\label{thm}
\end{theorem}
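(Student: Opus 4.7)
I would decompose the argument into three parts: (a) an error-robust bound for a single re-solve at a public state, (b) a depth-limited CFR bound that controls how errors in the leaf value function propagate to the exploitability of the locally re-solved strategy, and (c) a continual re-solving argument showing that errors from successive re-solves do not compound uncontrollably along an actual line of play.

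For (a), I would build on the re-solving gadget of \cite{cprg:cfrd}. The original construction shows that if a re-solve is started with our own range and a vector of opponent counterfactual values that exactly match some Nash equilibrium $\sigma^*$ at the public state, then $T$ iterations of CFR on the gadget yield a strategy whose exploitability on the subtree is no more than that of $\sigma^*$ plus the standard $O(1/\sqrt{T})$ CFR regret term. DeepStack only maintains the weaker sufficient condition highlighted in the text: the stored opponent values need only upper-bound what the opponent could attain at the public state and be no larger than what the opponent could attain by deviating to reach it. I would show that this weaker pair of inequalities still suffices, because any opponent best response in the gadget is dominated by the better of ``enter the public state and best-respond in the subtree'' and ``deviate before reaching it'', whose values are bounded by the first and second inequalities respectively.

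For (b), I would modify the standard CFR regret analysis to account for an additive error of at most $\epsilon$ at every leaf of the depth-limited lookahead. Since the counterfactual value at any information set is a linear combination of leaf values weighted by reach probabilities, an $\epsilon$ perturbation at each leaf contributes at most $\epsilon$ to the instantaneous regret at each ancestor information set. Summing over the $T$ iterations and invoking the usual regret-matching bound yields an average-regret bound of $c\epsilon + O(1/\sqrt{T})$ per information set, where $c$ is a game-specific constant reflecting the branching structure between the re-solved root and the depth limit. The standard reduction from CFR regret to exploitability then gives the per-re-solve bound needed for (c).

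For (c), I would argue inductively along the sequence of public states that actually arise during play. The base case uses the uniform initial range and the fixed values obtained by evaluating each initially dealt hand. The inductive step verifies, case by case for the three update rules (own action, chance, opponent action), that the updated opponent counterfactual values continue to satisfy the two invariants required by (a). Combining the per-re-solve exploitability bound from (a)--(b) with the fact that each trajectory passes through only a bounded (game-specific) number of re-solves, and collecting constants, gives the claimed bound $k_1\epsilon + k_2/\sqrt{T}$. The main obstacle I expect is precisely this third step: formally checking that the invariant on opponent counterfactual values is preserved by all three update rules---in particular the chance update that zeros out hands made impossible by newly revealed cards, where the ``deviate earlier'' branch of the invariant is most delicate---and arguing that the errors from successive re-solves add rather than multiply along the trajectory. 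Step (a)'s relaxation is conceptually subtle but self-contained, and step (b) is a careful but essentially standard extension of CFR's regret analysis to approximate leaf evaluations.
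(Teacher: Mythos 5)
Your three-part decomposition (single re-solve soundness under the relaxed value constraints, depth-limited solving with an approximate leaf evaluator, and chaining across re-solves via the three update rules) mirrors the paper's own structure almost exactly, and your identification of the invariant-preservation step and the additivity of errors across re-solves as the delicate points is on target; the paper's final bound is indeed of the form $(d+1)k/\sqrt{T}+(2d+1)j\epsilon_E$ over $d$ re-solving steps, with the opponent-action case handled by showing that taking $w_I=\max_{a}\text{BV}_{I\cdot a}(\sigma)$ cannot increase exploitability.

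The genuine gap is in your part (b). You model the value function's error as an additive $\epsilon$ perturbation of fixed leaf utilities and propagate it linearly through the standard regret-matching analysis. But the depth-limit leaves are not terminals with fixed payoffs: they are roots of subgames whose ``true'' counterfactual values depend on the current iteration's ranges and on what strategy would actually be played below the depth limit. A regret bound for the perturbed trunk game does not by itself bound exploitability in the real game, because the real game requires an actual subgame strategy that realizes (approximately) the values the network reported. The paper therefore measures the estimator's error against the counterfactual values of some Nash equilibrium of the \emph{induced} subgame, $\min_{\sigma_S^*\in\text{NE}_S}\sum_{I}|v^{\sigma_S^*}(I)-v_I|\le\epsilon_E$, and proves the depth-limited bound by a decomposition argument in the style of Theorem 2 of \cite{cprg:cfrd}: an induction over public states in the trunk in which one implicitly plays $\sigma_S^*$ below each reachable subgame root, accruing $T\epsilon_E$ full regret per reachable subgame, so the error constant ends up proportional to the number of subgames reachable from the root rather than to a per-information-set branching factor. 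Your plan labels this step ``essentially standard,'' but it is one of the two places where the paper must genuinely generalize the prior decomposition machinery, and without tying the leaf estimates to realizable near-equilibrium subgame strategies the reduction from trunk regret to exploitability does not go through.
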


\paragraph*{Sparse lookahead trees.} 
The final ingredient in DeepStack is the reduction in the number of actions considered so as to construct a sparse lookahead tree.
DeepStack builds the lookahead tree using only the actions fold (if valid), 
call, 2 or 3 bet actions, and all-in.  
This step voids the soundness property of Theorem~\ref{thm}, but it allows DeepStack to play at conventional human speeds.
With sparse and depth-limited
lookahead trees, the re-solved games have approximately $10^{7}$ decision points, and are solved 
in under five seconds using a single NVIDIA GeForce GTX 1080 graphics card.  We also use the sparse and depth-limited lookahead solver from the start of the game to compute the opponent counterfactual values used to initialize DeepStack's continual re-solving.

\iffinal\else 
\begin{figure}[t]
\centering
\includegraphics[width=0.7\textwidth]{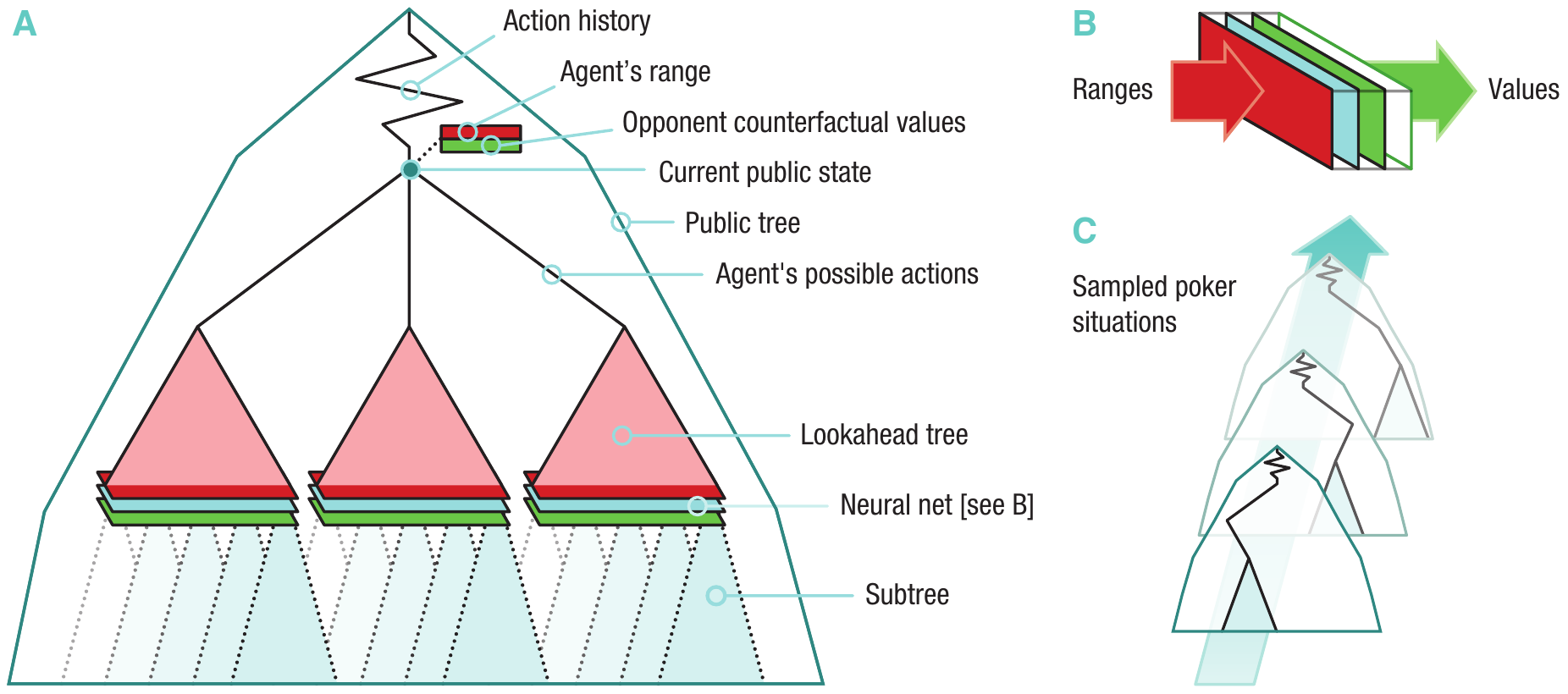}
\caption{%
{\bf DeepStack overview.} 
{\bf (A)} DeepStack reasons in the public tree always producing action probabilities for all cards it can hold in a public state.  It maintains two vectors while it plays: its own range and its opponent's counterfactual values.  As the game proceeds, its own range is updated via Bayes' rule using its computed action probabilities after it takes an action.  Opponent counterfactual values are updated as discussed under ``Continual re-solving''.  To compute action probabilities when it must act, it performs a re-solve using its range and the opponent counterfactual values.  To make the re-solve tractable it restricts the available actions of the players and lookahead is limited to the end of the round.
During the re-solve, counterfactual values for public states beyond its lookahead are approximated using DeepStack's learned evaluation function.  {\bf (B)} The evaluation function is represented with a neural network that takes the public state and ranges from the current iteration as input and outputs counterfactual values for both players (Fig.~\ref{fig:dnn}).  {\bf (C)} The neural network is trained prior to play by generating random poker situations (pot size, board cards, and ranges) and solving them to produce training examples.  Complete pseudocode can be found in Algorithm S1~\cite{SOM}.%
}
\label{fig:deepstack-overview}
\end{figure}
\fi

\paragraph*{Relationship to heuristic search in perfect information games.}

There are three key challenges that DeepStack overcomes to incorporate heuristic search ideas in imperfect information games.
First, sound re-solving of public states cannot be done without knowledge of how and why the players acted to reach the public state.  Instead, two additional vectors, the agent's range and opponent counterfactual values, must be maintained to be used in re-solving.  Second, re-solving is an iterative process that traverses the lookahead tree multiple times instead of just once.  Each iteration requires querying the evaluation function again with different ranges for every public state beyond the depth limit.
Third, the evaluation function needed when the depth limit is reached is conceptually more complicated than in the perfect information setting.  Rather than returning a single value given a single state in the game, the counterfactual value function needs to return a vector of values given the public state and the players' ranges.  
Because of this complexity, to learn such a value function we use deep learning, which has also been successful at learning complex evaluation functions in perfect information games~\cite{Silver16:AlphaGo}.

\paragraph*{Relationship to abstraction-based approaches.}

Although DeepStack uses ideas from abstraction, it is fundamentally different from abstraction-based approaches.  DeepStack restricts the number of actions in its lookahead trees, much like action abstraction~\cite{Gilpin08:Tartanian,Schnizlein09:Translation}.  However, each re-solve in DeepStack starts from the actual public state and 
 so it always perfectly understands the current situation.  The algorithm also never needs to use the opponent's actual action to obtain correct ranges or opponent counterfactual values, 
thereby avoiding translation of opponent bets.  We used hand clustering as inputs to our counterfactual value functions, much like explicit card abstraction approaches~\cite{Gilpin07:Abstraction,Johanson13:Abstraction}.  However, our clustering is used to estimate counterfactual values at the end of a lookahead tree rather than limiting what information the player has about their cards when acting.  
We later show that these differences result in a strategy substantially more difficult to exploit.

\section*{Deep Counterfactual Value Networks}

Deep neural networks have proven to be powerful models and are responsible for major advances in image and speech recognition\cite{Krizhevsky12:ImageNet,Hinton12:Speech}, automated generation of music \cite{Oord16:WaveNet}, and game-playing~\cite{Mnih15:DQN,Silver16:AlphaGo}.
DeepStack uses deep neural networks with a tailor-made architecture, as the value function for its depth-limited lookahead (Fig. \ref{fig:dnn}).  Two separate networks are trained: one estimates the counterfactual values after the first three public cards are dealt (flop network), the other after dealing the fourth public card (turn network). An auxiliary network for values before any public cards are dealt is used to speed up the re-solving for early actions~\cite{SOM}.

\paragraph*{Architecture.}  DeepStack uses a standard feedforward network with seven fully connected hidden layers each with 500 nodes and parametric rectified linear units~\cite{He15:PReLU} for the output.  This architecture is embedded in an outer network that forces the counterfactual values to satisfy the zero-sum property.  The outer computation takes the estimated counterfactual values, and computes a weighted sum using the two players' input ranges resulting in separate estimates of the game value.  These two values should sum to zero, but may not.  Half the actual sum is then subtracted from the two players' estimated counterfactual values.  
This entire computation is differentiable and can be trained with gradient descent.  The network's inputs are the pot size as a fraction of the players' total stacks and an encoding of the players' ranges as a function of the public cards.  The ranges are encoded by clustering hands into 1,000 buckets, as in traditional abstraction methods~\cite{Shi00:Rhode,Gilpin07:Abstraction,Johanson13:Abstraction}, and input as a vector of probabilities over the buckets.  
The output of the network are vectors of counterfactual values for each player and hand, interpreted as fractions of the pot size.

\iffinal\else 
\begin{figure}[t]
\centering
\includegraphics[width=0.8\textwidth]{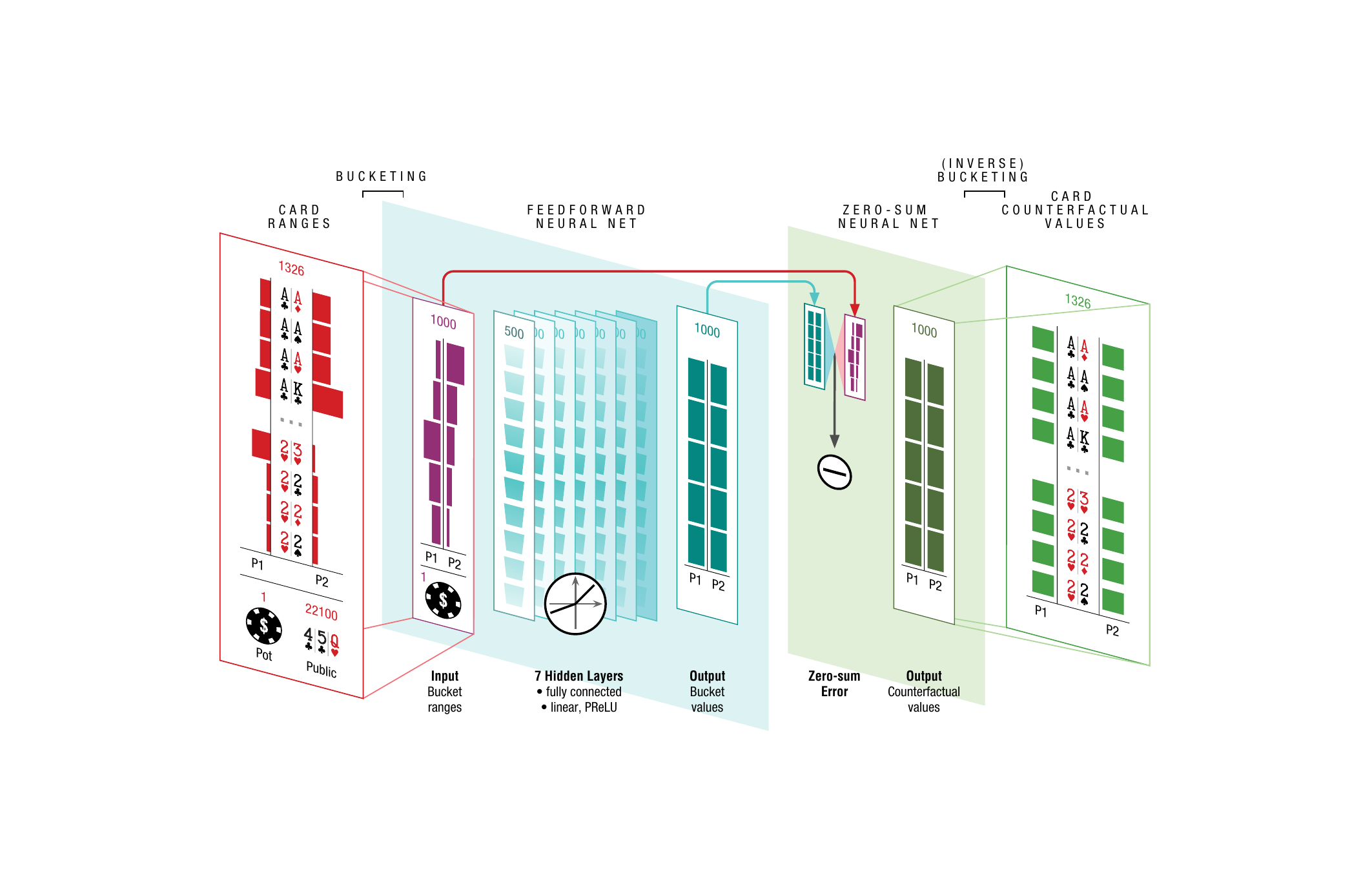}
\caption{{\bf Deep counterfactual value network.}  The inputs to the network are the pot size, public cards, and the player ranges, which are first processed into hand clusters.  The output from the seven fully connected hidden layers is post-processed to guarantee the values satisfy the zero-sum constraint, and then mapped back into a vector of counterfactual values. }\label{fig:dnn}
\end{figure}
\fi

\paragraph*{Training.}  The turn network was trained by solving 10 million randomly generated poker turn games.  These turn games used randomly generated ranges, public cards, and a random pot size~\cite{SOM}.  The target counterfactual values for each training game were generated by solving the game with players' actions restricted to fold, call, a pot-sized bet, and an all-in bet, but no card abstraction.  The flop network was trained similarly with 1 million randomly generated flop games.  However, the target counterfactual values were computed using our depth-limited solving procedure and our trained turn network.  The networks were trained using the Adam gradient descent optimization procedure\cite{kingma2014adam} with a Huber loss \cite{huber1964}. 

\section*{Evaluating DeepStack}

We evaluated DeepStack by playing it against a pool of professional poker players recruited by the International Federation of Poker\cite{IFP}.  Thirty-three players from 17 countries were recruited.  Each was asked to complete a 3,000 game match over a period of four weeks between November 7th and December 12th, 2016.  Cash incentives were given to the top three performers (\$5,000, \$2,500, and \$1,250 CAD).  

Evaluating performance in \HUNL{} is challenging because of the large variance in per-game outcomes owing to randomly dealt cards and stochastic choices made by the players.  The better player may lose in a short match simply because they were dealt weaker hands or their rare bluffs were made at inopportune times.  As seen in the Claudico match~\cite{Wood15:PokerFuse-Claudico}, even 80,000 games may not be enough to statistically significantly separate players whose skill differs by a considerable margin.  We evaluate performance using AIVAT~\cite{Burch17:AIVAT}, a provably unbiased low-variance technique for evaluating performance in imperfect information games based on carefully constructed control variates.  AIVAT requires an estimated value of holding each hand in each public state, and then uses the expected value changes that occur due to chance actions and actions of players with known strategies (i.e., DeepStack) to compute the control variate.  DeepStack's own value function estimate is perfectly suited for AIVAT.  Indeed, when used with AIVAT we get an unbiased performance estimate with an impressive $85\%$ reduction in standard deviation.  Thanks to this technique, we can show statistical significance~\cite{stat-sig} in matches with as few as 3,000 games.  

\iffinal\else 
\begin{figure}[t]
\centering
\includegraphics[width=0.7\textwidth]{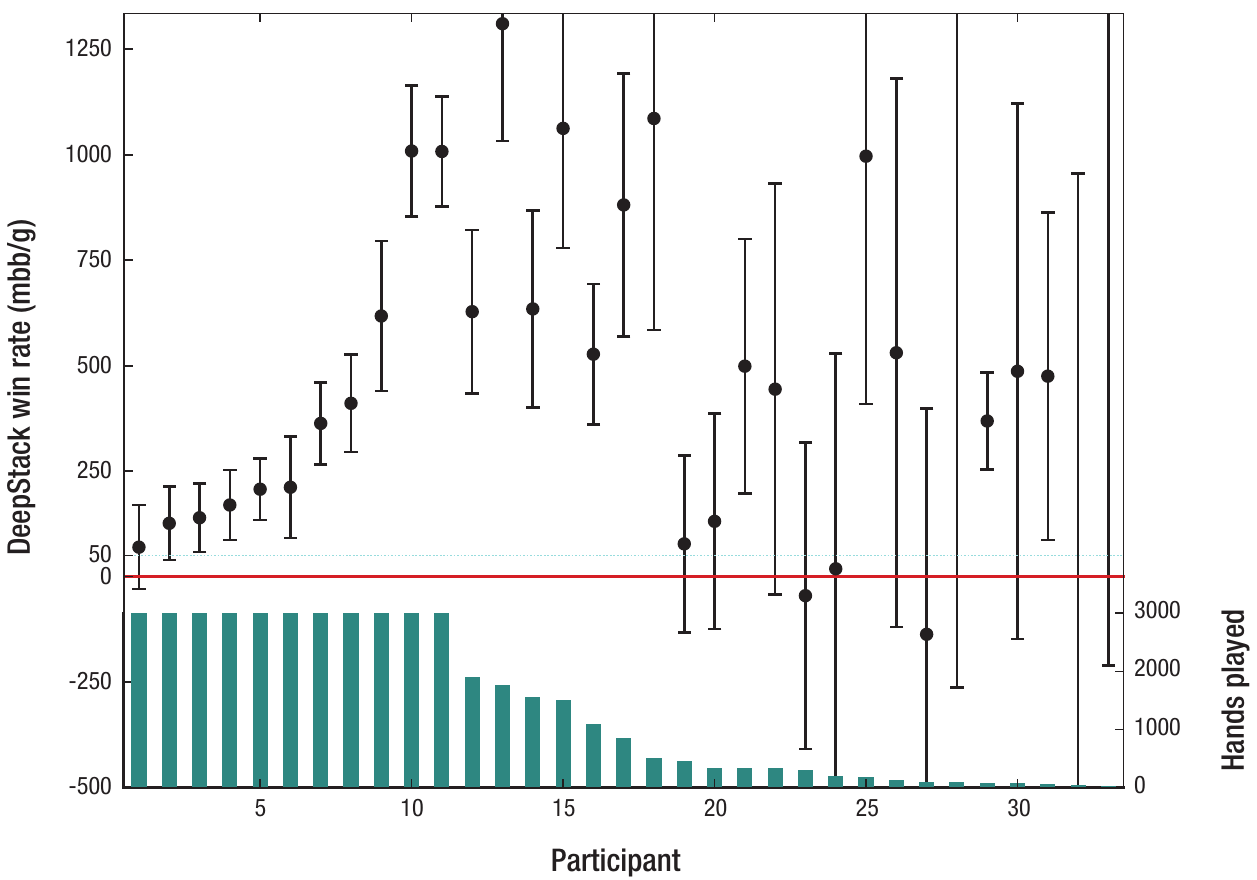}
\caption{{\bf Performance of professional poker players against DeepStack.}  Performance estimated with AIVAT along with a 95\% confidence interval. The solid bars at the bottom show the number of games the participant completed.}\label{fig:humans}
\end{figure}
\fi

In total 44,852 games were played by the thirty-three players with 11 players completing the requested 3,000 games.
Over all games played, DeepStack won 492 mbb/g.  This is over 4 standard deviations away from zero, and so, highly significant.  Note that professional poker players consider 50 mbb/g a sizable margin.  Using AIVAT to evaluate performance, we see DeepStack was overall a bit lucky, with its estimated performance actually 486 mbb/g.  However, as a lower variance estimate, this margin is over 20 standard deviations from zero.

The performance of individual participants measured with AIVAT is summarized in Figure~\ref{fig:humans}.
Amongst those players that completed the requested 3,000 games, DeepStack is estimated to be winning by 394 mbb/g, and individually beating 10 out of 11 such players by a statistically significant margin.  Only for the best performing player, still estimated to be losing by 70 mbb/g, is the result not statistically significant.
More details on the participants and their results are presented in~\cite{SOM}.

\subsection*{Exploitability}

The main goal of DeepStack is to approximate Nash equilibrium play, i.e., minimize exploitability.
While the exact exploitability of a \HUNL{} poker strategy is intractable to compute, the recent local best-response technique (LBR) can provide a lower bound on a strategy's exploitability~\cite{Lisy17:LocalBR} given full access to its action probabilities.  LBR uses the action probabilities to compute the strategy's range at any public state.  Using this range it chooses its response action from a fixed set using the  assumption that no more bets will be placed for the remainder of the game.  Thus it best-responds locally to the opponent's actions, providing a lower bound on their overall exploitability.  As already noted, abstraction-based programs from the Annual Computer Poker Competition are highly exploitable by LBR: four times more exploitable than folding every game (Table~\ref{tab-localbr-abbrev}).  However, even under a variety of settings, LBR fails to exploit DeepStack at all --- itself losing by over 350 mbb/g to DeepStack~\cite{SOM}.  Either a more sophisticated lookahead is required to identify DeepStack's weaknesses or it is substantially less exploitable. 

\iffinal\else
\begin{table}
\centering
\caption{%
{\bf Exploitability bounds from Local Best Response.}  For all listed programs, the value reported is the largest estimated exploitability when applying LBR using a variety of different action sets.  Table S2 gives a more complete presentation of these results~\cite{SOM}.  $\ddagger$: LBR was unable to identify a positive lower bound for DeepStack's exploitability.%
}
\label{tab-localbr-abbrev}
\begin{tabular}{c|r}
\toprule
Program & LBR (mbb/g) \\
\midrule
Hyperborean (2014) & 4675 \\
Slumbot (2016) & 4020 \\
Act1 (2016) & 3302 \\
Always Fold & 750 \\
\midrule
{\bf DeepStack} & 
{\bf 0 $\ddagger$} \\
\bottomrule
\end{tabular}
\end{table}
\fi

\section*{Discussion}

DeepStack defeated professional poker players at \HUNL{} with statistical significance~\cite{Libratus}, a game that is similarly sized to go, but with the added complexity of imperfect information.
It achieves this goal with little domain knowledge and no training from expert human games.  The implications go beyond being a milestone for artificial intelligence.  DeepStack represents a paradigm shift in approximating solutions to large, sequential imperfect information games.  
Abstraction and offline computation of complete strategies has been the dominant approach for almost 20 years~\cite{Shi00:Rhode,BillingsEtAl03,Sandholm12}.  
DeepStack allows computation to be focused on specific situations that arise when making decisions and the use of automatically trained value functions.  These are two of the core principles that have powered successes in perfect information games, albeit conceptually simpler to implement in those settings.  As a result, the gap between the largest perfect and imperfect information games to have been mastered is mostly closed.

With many real world problems involving information asymmetry, DeepStack also has implications for seeing powerful AI applied more in settings that do not fit the perfect information assumption.  The abstraction paradigm for handling imperfect information has shown promise in applications like defending strategic resources~\cite{Lisy16:cfr-security} and robust decision making as needed for medical treatment recommendations~\cite{ChenBowling12}.  DeepStack's continual re-solving paradigm will hopefully open up many more possibilities.  

\bibliography{paper}
\bibliographystyle{Science}

\nocite{ZinkevichLittman06}
\nocite{DustinGUISource}
\nocite{Tammelin15:CFR+}
\nocite{collobert2011torch7}
\nocite{Ganzfried14:EMD}

\begin{scilastnote}
The hand histories of all games played in the human study as well as those used to generate the LBR results against DeepStack are in~\cite{SOM}.
We would like to thank the IFP, all of the professional players who committed valuable time to play against DeepStack, the anonymous reviewers' invaluable feedback and suggestions, and
R.~Holte, A.~Brown, and K.~Bla\v{z}kov\'a for comments on early drafts of this article.
We especially would like to thank IBM for their support of this research through an IBM faculty grant.  
The research was also supported by Alberta Innovates through the Alberta Machine Intelligence Institute, the Natural Sciences and Engineering Research Council of Canada, and Charles University (GAUK) Grant no. 391715.  
This work was only possible thanks to computing resources provided by Compute Canada and Calcul Qu\'ebec.  
MM and MS are on leave from IBM Prague. MJ serves as a Research Scientist, and MB as a Contributing Brain Trust Member, at Cogitai, Inc. DM owns 200 shares of Gamehost Inc.
\end{scilastnote}

\iffinal
\subsubsection*{Supplementary Materials}

\begin{trivlist}
\item Supplementary Text
\item Figs. S1 and S2, Tables S1 -- S6, Algorithm S1
\item Hand histories of all games played in the human study with AIVAT analysis
\item Hand histories of all games played between LBR and DeepStack (plus LBR and other University of Alberta poker programs)
\end{trivlist}

\subsubsection*{Figure Captions}
\noindent 
Figure 1: 
{\bf A portion of the public tree in \HUNL{}.}  Nodes represent public states, whereas edges represent actions: red and turquoise showing player betting actions, and green representing public cards revealed by chance.  The game ends at terminal nodes, shown as a chip with an associated value.  For terminal nodes where no player folded the value is returned by a function of the players' joint private information.
\bigskip

\noindent 
Figure 2: 
{\bf DeepStack overview.} 
{\bf (A)} DeepStack reasons in the public tree always producing action probabilities for all cards it can hold in a public state.  It maintains two vectors while it plays: its own range and its opponent's counterfactual values.  As the game proceeds, its own range is updated via Bayes' rule using its computed action probabilities after it takes an action.  Opponent counterfactual values are updated as discussed under ``Continual re-solving''.  To compute action probabilities when it must act, it performs a re-solve using its range and the opponent counterfactual values.  To make the re-solve tractable it restricts the available actions of the players and lookahead is limited to the end of the round.
During the re-solve, counterfactual values for public states beyond its lookahead are approximated using DeepStack's learned evaluation function.  {\bf (B)} The evaluation function is represented with a neural network that takes the public state and ranges from the current iteration as input and outputs counterfactual values for both players (Fig.~\ref{fig:dnn}).  {\bf (C)} The neural network is trained prior to play by generating random poker situations (pot size, board cards, and ranges) and solving them to produce training examples.  Complete pseudocode can be found in Algorithm S1~\cite{SOM}.%
\bigskip

\noindent 
Figure 3:
{\bf Deep counterfactual value network.}  The inputs to the network are the pot size, public cards, and the player ranges, which are first processed into hand clusters.  The output from the seven fully connected hidden layers is post-processed to guarantee the values satisfy the zero-sum constraint, and then mapped back into a vector of counterfactual values.
\bigskip

\noindent 
Figure 4:
{\bf Performance of professional poker players against DeepStack.}  Performance estimated with AIVAT along with a 95\% confidence interval. The solid bars at the bottom show the number of games the participant completed.
\bigskip

\noindent
Table 1:
{\bf Exploitability bounds from Local Best Response.}  For all listed programs, the value reported is the largest estimated exploitability when applying LBR using a variety of different action sets.  Table S2 gives a more complete presentation of these results~\cite{SOM}.  $\ddagger$: LBR was unable to identify a positive lower bound for DeepStack's exploitability.%
\fi 

\ifarxiv
\clearpage
\begin{center}
{\LARGE\baselineskip24pt 
Supplementary Materials for \\
{\large DeepStack: Expert-Level AI in No-Limit Poker}
\par\bigskip\bigskip\bigskip
}
\end{center}

\renewcommand{\topfraction}{.85}
\renewcommand{\bottomfraction}{.7}
\renewcommand{\textfraction}{.15}
\renewcommand{\floatpagefraction}{.66}
\renewcommand{\dbltopfraction}{.66}
\renewcommand{\dblfloatpagefraction}{.66}
\setcounter{topnumber}{9}
\setcounter{bottomnumber}{9}
\setcounter{totalnumber}{20}
\setcounter{dbltopnumber}{9}

\section*{Game of Heads-Up No-Limit Texas Hold'em}
Heads-up no-limit Texas hold'em (\HUNL{}) is a two-player poker game. It is a repeated game, in which the
two players play a match of individual games, usually called hands,
while alternating
who is the dealer. In each of the individual games, one player will win some number of chips
from the other player, and the goal is to win as many chips as possible over the course of the
match.  

Each individual game begins with both players placing a number of chips in the pot: the
player in the dealer position puts in the small blind, and the other player puts in the big blind,
which is twice the small blind amount.   
During a game, a player can only wager and win up to a fixed amount known as their stack.  In the
particular format of \HUNL{} used in the Annual Computer Poker Competition~\cite{ZinkevichLittman06} and this article, the big blind is 100 chips and the stack is 20,000 chips or 200 big blinds.
Resetting the stacks after each game is called ``Doyle's Game'', named for the
professional poker player Doyle Brunson who
publicized this variant~\cite{Gilpin08:Tartanian}.  It is used in the Annual Computer Poker Competitions because it allows for each game to be an independent sample of the same game.

A game of \HUNL{} progresses through four rounds: the
pre-flop, flop, turn, and river. Each round consists of cards being dealt followed by player
actions in the form of wagers as to who will hold the strongest hand at the end of the game. In
the pre-flop, each player is given two private cards, unobserved by their opponent. In the later
rounds, cards are dealt face-up in the center of the table, called public cards. A total of five
public cards are revealed over the four rounds: three on the flop, one on the turn, and one
on the river.

After the cards for the round are dealt, players alternate taking actions of
three types: fold, call, or raise. A player folds by declining to match the last opponent wager,
thus forfeiting to the opponent all chips in the pot and ending the game with no player revealing
their private cards. A player calls by adding chips into the pot to match the last opponent wager,
which causes the next round to begin. A player raises by adding chips into the pot to match the
last wager followed by adding additional chips to make a wager of their own. At the beginning
of a round when there is no opponent wager yet to match, the raise action is called bet, and
the call action is called check, which only ends the round if both players check. 
An all-in wager is one
involving all of the chips remaining the player's stack.  If the wager is called, there is no further 
wagering in later rounds.
The size of
any other wager can be any whole number of chips remaining in the player's stack, as long as it
is not smaller than the last wager in the current round or the big blind.  

The dealer acts first in the pre-flop round and must decide whether to fold, call, or raise the opponent's big blind bet.  In all subsequent rounds, the non-dealer acts first.
If the river round ends with no player previously folding to end the game, the outcome is
determined by a showdown. Each player reveals their two private cards and the player that can
form the strongest five-card poker hand (see ``List of poker hand categories'' on Wikipedia; accessed January
1, 2017) wins all the chips in the pot. To form their hand each player may use any cards from
their two private cards and the five public cards. At the end of the game, whether ended by
fold or showdown, the players will swap who is the dealer and begin the next game.

Since the game can be played for different stakes, such as a big blind being worth \$0.01
or \$1 or \$1000, players commonly measure their performance over a match as their average
number of big blinds won per game. Researchers have standardized on the unit milli-big-blinds
per game, or mbb/g, where one milli-big-blind is one thousandth of one big blind. A player that always
folds will lose 750 mbb/g (by losing 1000 mbb as the big blind and 500 as the small blind).
A human rule-of-thumb is that a professional should aim to win at least 50 mbb/g from their
opponents. Milli-big-blinds per game is also used as a unit of exploitability, when it is computed
as the expected loss per game against a worst-case opponent.  In the poker community, it is common to use big blinds per one hundred games (bb/100) to measure win rates, where 10 mbb/g equals 1 bb/100.

\section*{Poker Glossary}
\begin{description}
\item[all-in] A wager of the remainder of a player's stack.  The opponent's only response can be call or fold.
\item[bet] The first wager in a round; putting more chips into the pot. 
\item[big blind] Initial wager made by the non-dealer before any cards are dealt.  The big blind is twice the size of the small blind.  
\item[call] Putting enough chips into the pot to match the current wager; ends the round.
\item[check] Declining to wager any chips when not facing a bet.
\item[chip] Marker representing value used for wagers; all wagers must be a whole numbers of chips.
\item[dealer] The player who puts the small blind into the pot.  Acts first on round 1, and second on the later rounds.  Traditionally, they would 
distribute public and private cards from the deck.  
\item[flop] The second round; can refer to either the 3 revealed public cards, or the betting round after these cards are revealed.
\item[fold] Give up on the current game, forfeiting all wagers placed in the pot.  Ends a player's participation in the game.
\item[hand] Many different meanings: the combination of the best 5 cards from the public cards and private cards, just the private cards themselves, or a single game of poker (for clarity, we avoid this final meaning). 
\item[milli-big-blinds per game (mbb/g)] Average winning rate over a number of games, measured in thousandths of big blinds.
\item[pot] The collected chips from all wagers.
\item[pre-flop] The first round; can refer to either the hole cards, or the betting round after these cards are distributed.
\item[private cards] Cards dealt face down, visible only to one player.  Used in combination with public cards to create a hand. Also called hole cards.
\item[public cards] Cards dealt face up, visible to all players.  Used in combination with private cards to create a hand. Also called community cards.
\item[raise] Increasing the size of a wager in a round, putting more chips into the pot than is required to call the current bet.
\item[river] The fourth and final round; can refer to either the 1 revealed public card, or the betting round after this card is revealed.
\item[showdown] After the river, players who have not folded show their private cards to determine the player with the best hand.  The player with the best hand takes all of the chips in the pot.
\item[small blind] Initial wager made by the dealer before any cards are dealt.  The small blind is half the size of the big blind.
\item[stack] The maximum amount of chips a player can wager or win in a single game.
\item[turn] The third round; can refer to either the 1 revealed public card, or the betting round after this card is revealed.
\end{description}

\section*{Performance Against Professional Players}

To assess DeepStack relative to expert humans, players were
recruited with assistance from the International Federation of Poker~\cite{IFP} to identify and recruit professional poker players through their 
member nation organizations.  We only selected participants from those who self-identified as a ``professional poker player'' during registration.
Players were given four weeks to complete a 3,000 game match.  To incentivize
players, monetary prizes of \$5,000, \$2,500, and \$1,250 (CAD) were awarded to
the top three players (measured by AIVAT) that completed their match.  
The participants were informed of all of these details when they registered to participate.
Matches
were played between November 7th and December 12th, 2016, and run using an
online user interface~\cite{DustinGUISource} where players had the option to play up to
four games simultaneously as is common in online poker sites.  A total of 33 players from 17 countries played
against DeepStack.  DeepStack's performance against each individual is
presented in Table~\ref{tab-human}, with complete game histories available as part of the supplementary online materials. 

\begin{table}[!hp]
\centering
\small
\caption{Results against professional poker players estimated with AIVAT (Luck Adjusted Win Rate) and chips won (Unadjusted Win Rate), both measured in mbb/g.  Recall 10mbb/g equals 1bb/100.  Each estimate
is followed by a 95\% confidence interval.  $\ddagger$ marks a participant who completed the 3000 games after their allotted four week period.}

\label{tab-human}

\fboxrule0.2pt
\fboxsep0pt

\begin{tabular}{m{0.21\textwidth}m{0.03\textwidth}|rrr@{~$\pm$~}R{0.0485\textwidth}r@{~$\pm$~}R{0.0605\textwidth}}
\toprule
Player & & Rank & Hands & \multicolumn{2}{r}{\begin{tabular}{c}Luck Adjusted\\Win Rate\end{tabular}} & \multicolumn{2}{r}{\begin{tabular}{c}Unadjusted\\Win Rate\end{tabular}} \\
\midrule
Martin Sturc & \fbox{\includegraphics[width=\linewidth]{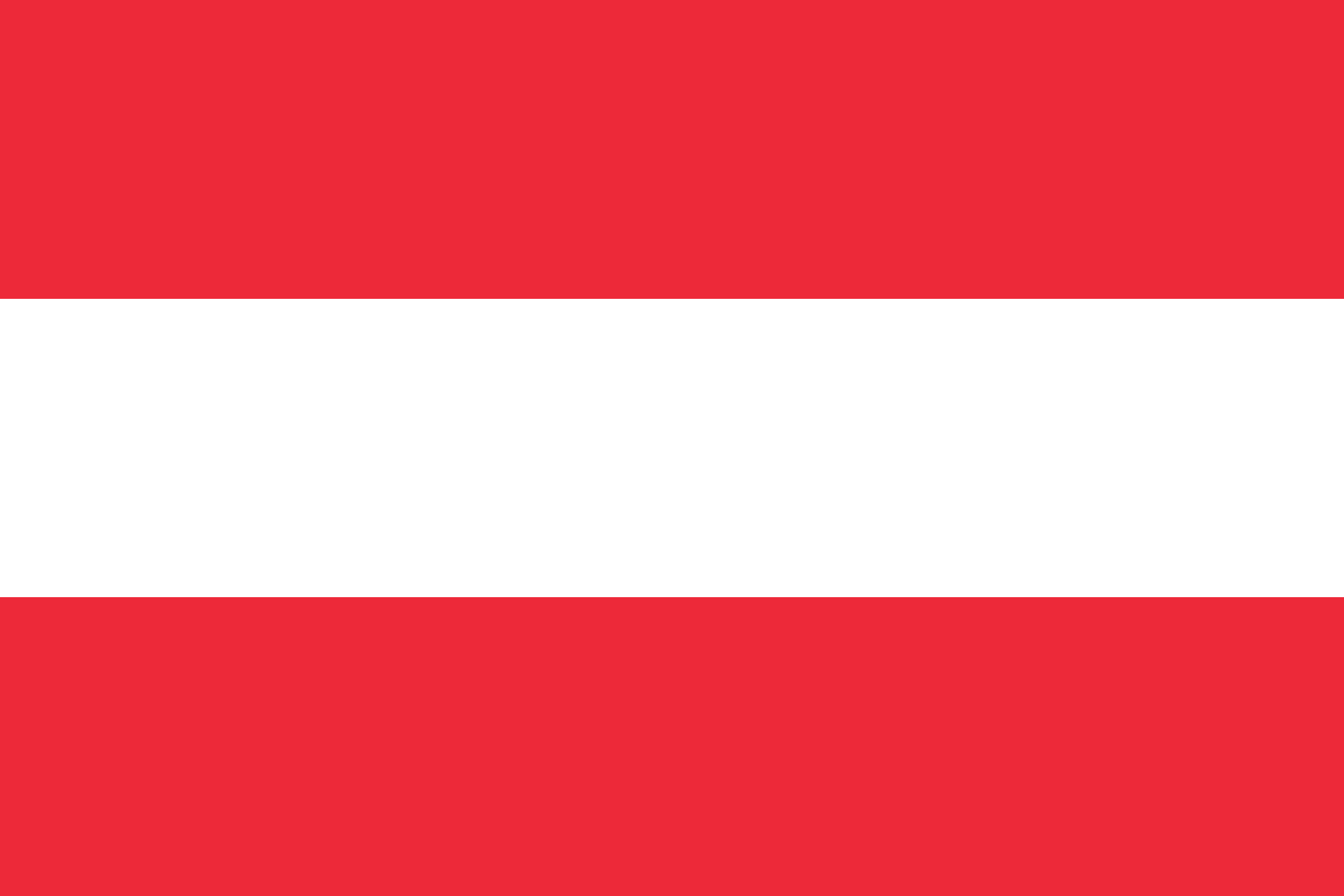}} & 1 & 3000  & $70$ & $119$ & $-515$ & $575$ \\
Stanislav Voloshin & \fbox{\includegraphics[width=\linewidth]{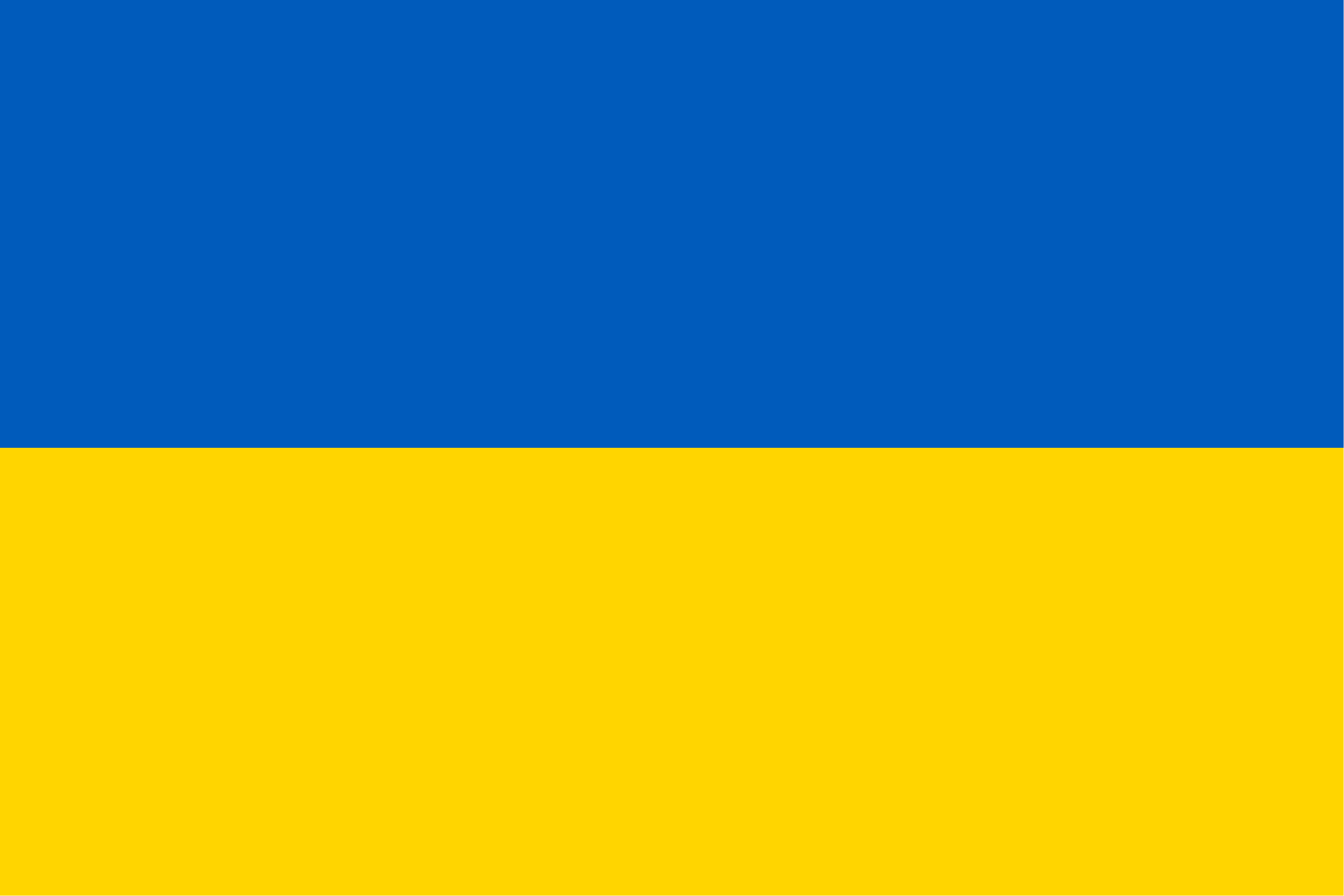}} & 2 & 3000  & $126$ & $103$ & $-65$ & $648$ \\
Prakshat Shrimankar & \fbox{\includegraphics[width=\linewidth]{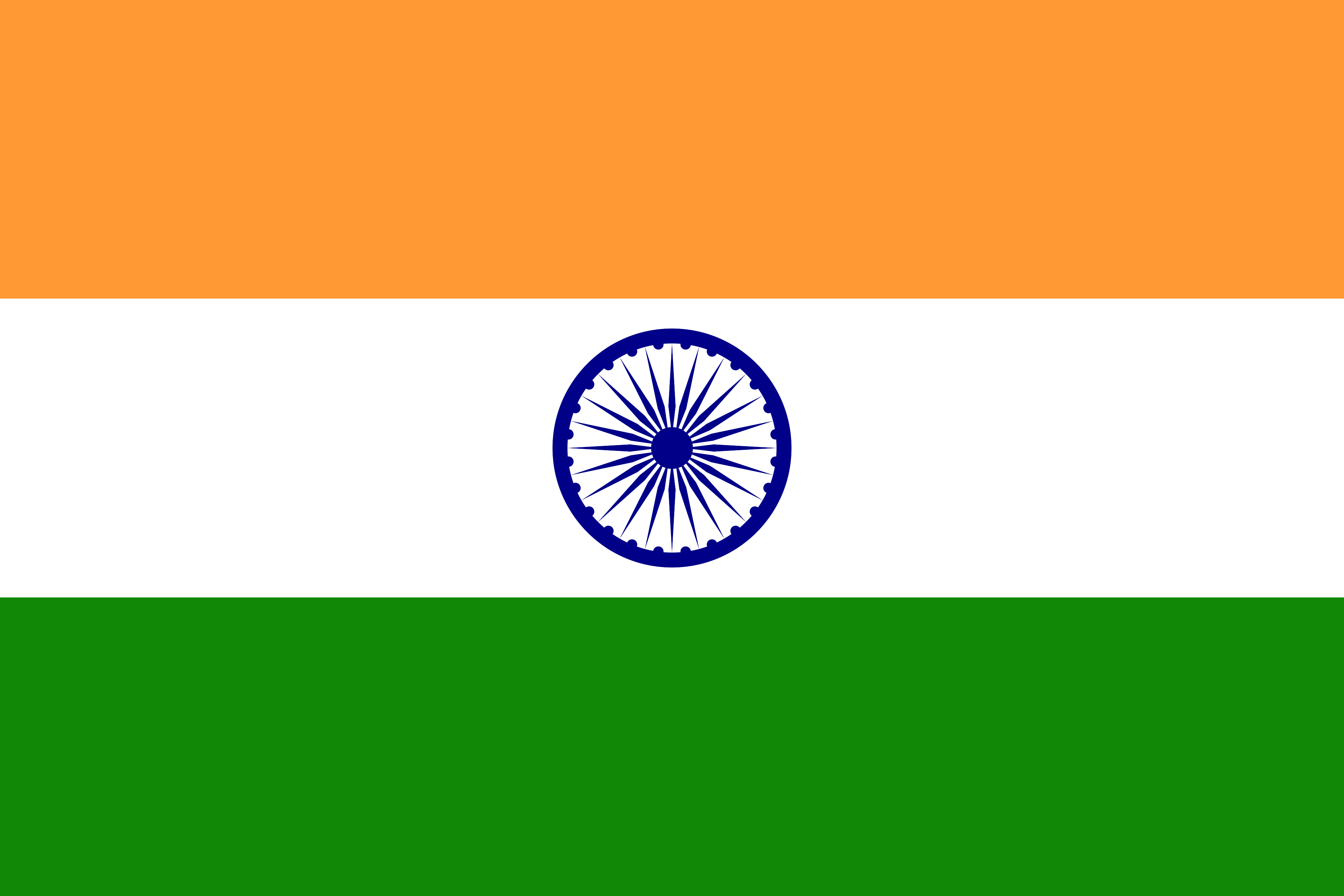}} & 3 & 3000  & $139$ & $97$ & $174$ & $667$ \\
Ivan Shabalin & \fbox{\includegraphics[width=\linewidth]{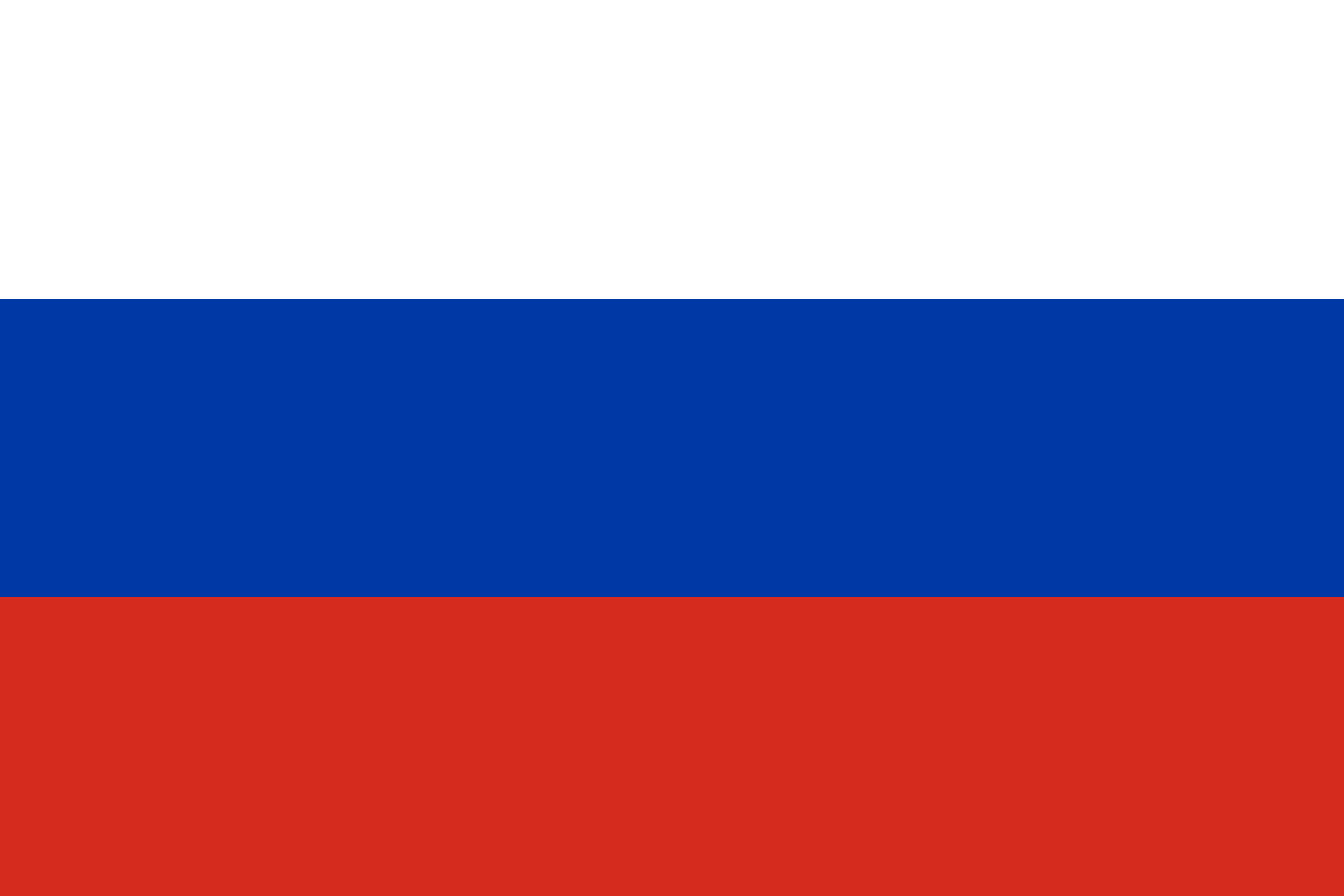}} & 4 & 3000  & $170$ & $99$ & $153$ & $633$ \\
Lucas Schaumann & \fbox{\includegraphics[width=\linewidth]{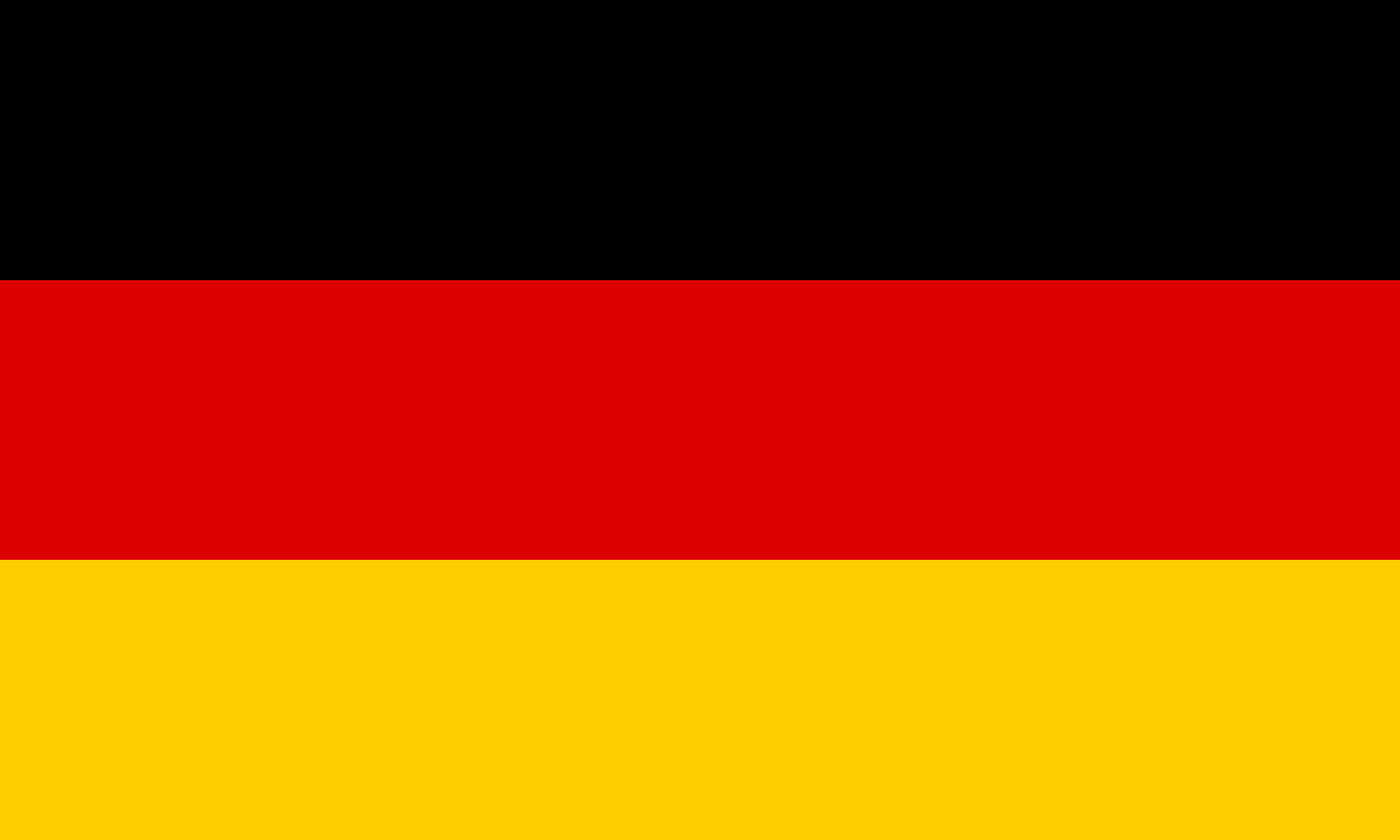}} & 5 & 3000  & $207$ & $87$ & $160$ & $576$ \\
Phil Laak & \fbox{\includegraphics[width=\linewidth]{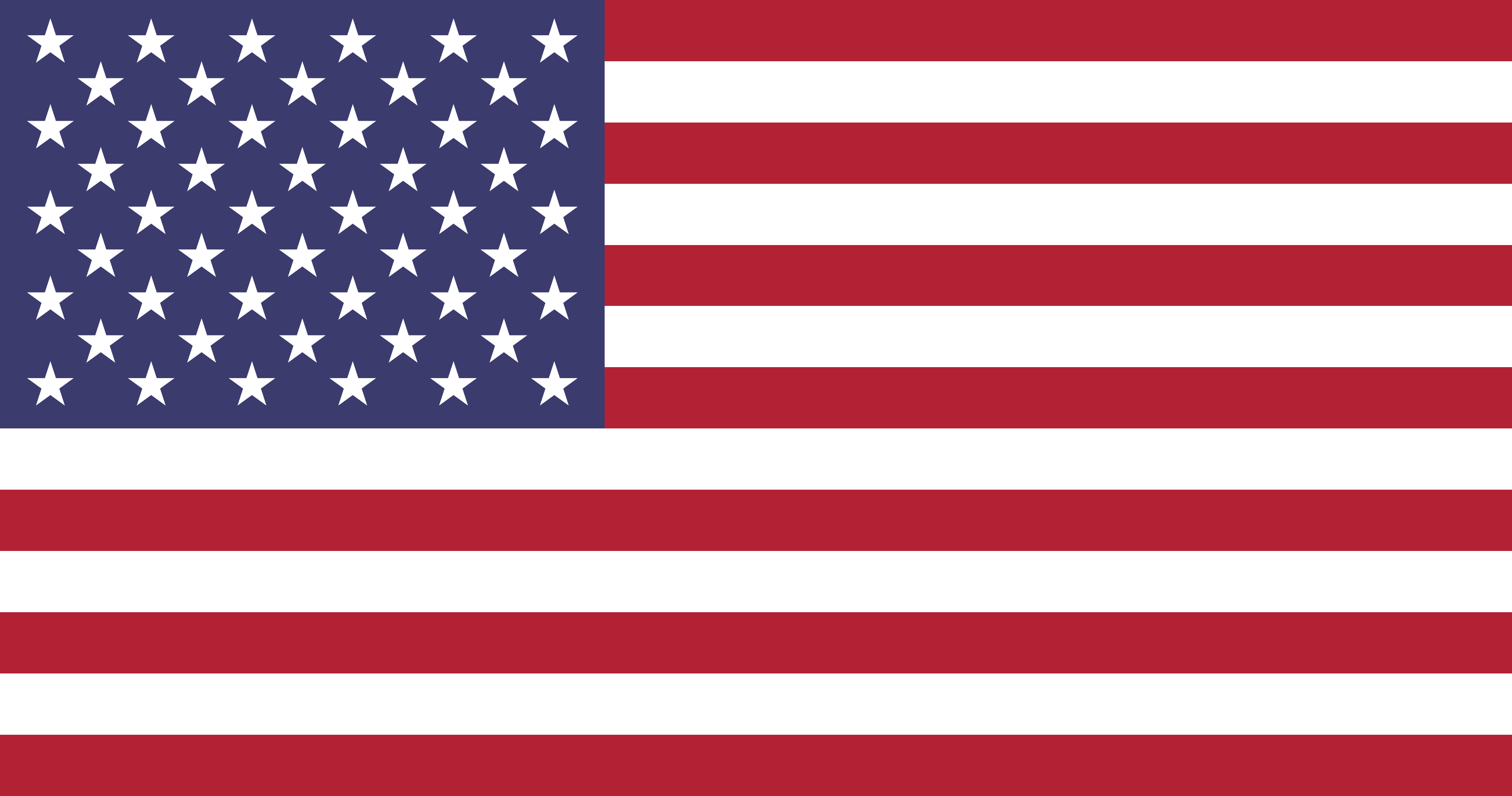}} & 6 & 3000  & $212$ & $143$ & $774$ & $677$ \\
Kaishi Sun & \fbox{\includegraphics[width=\linewidth]{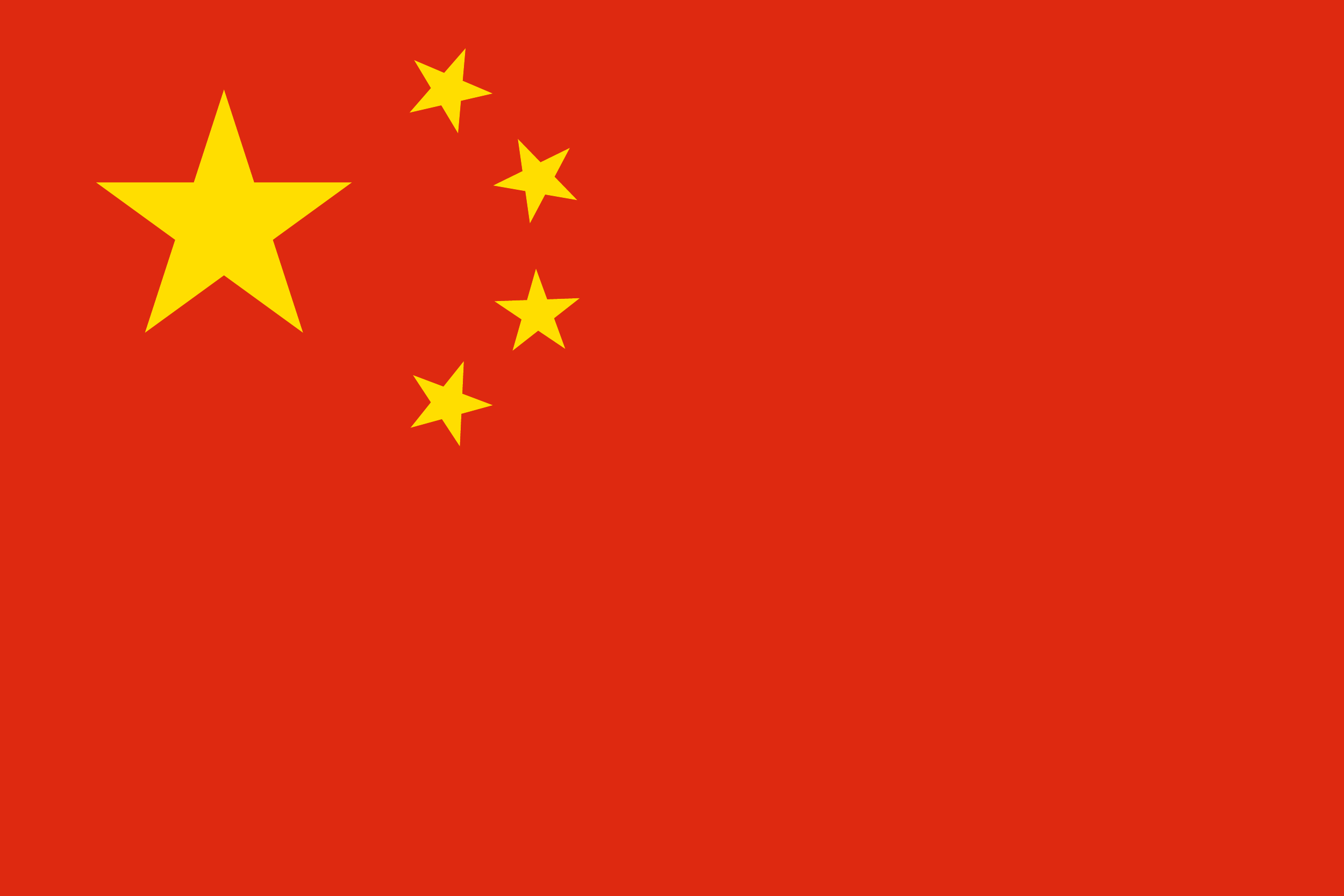}} & 7 & 3000  & $363$ & $116$ & $5$ & $729$ \\
Dmitry Lesnoy & \fbox{\includegraphics[width=\linewidth]{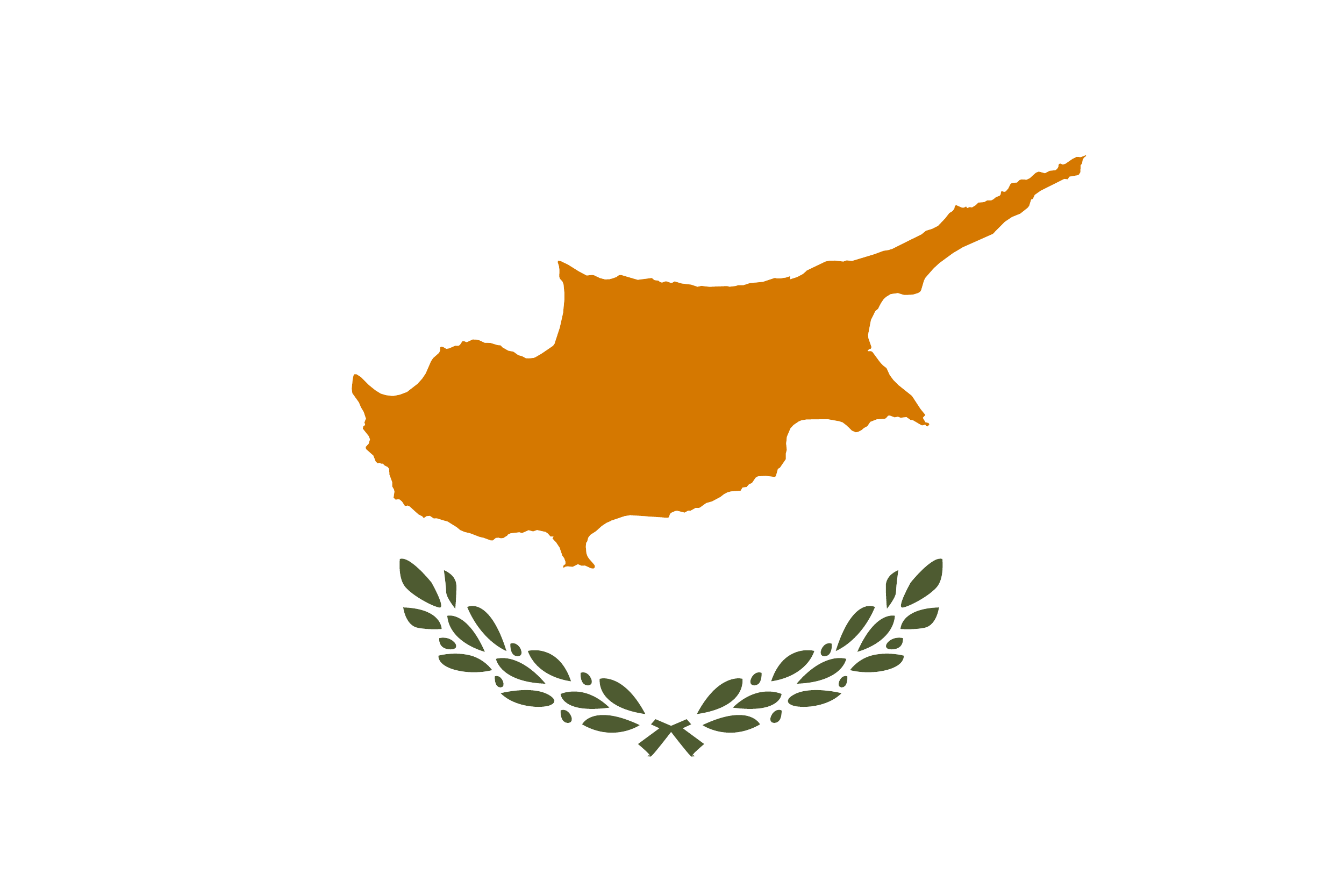}} & 8 & 3000  & $411$ & $138$ & $-87$ & $753$ \\
Antonio Parlavecchio & \fbox{\includegraphics[width=\linewidth]{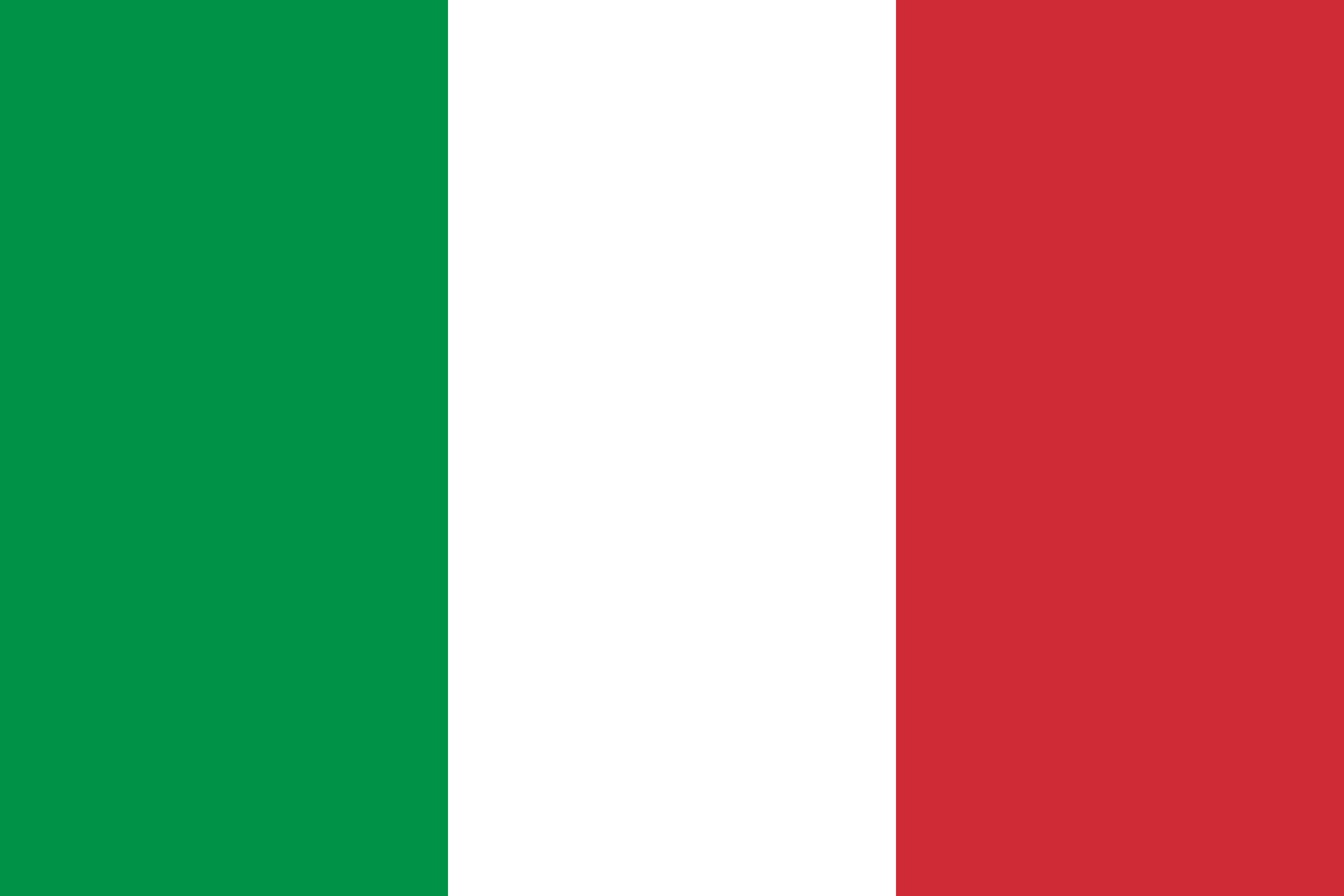}} & 9 & 3000  & $618$ & $212$ & $1096$ & $962$ \\
Muskan Sethi & \fbox{\includegraphics[width=\linewidth]{figs/flags/wiki/India}} & 10 & 3000  & $1009$ & $184$ & $2144$ & $1019$ \\
\midrule
\midrule
Pol Dmit\textsuperscript{$\ddagger$} & \fbox{\includegraphics[width=\linewidth]{figs/flags/wiki/Russia}} & -- & 3000  & $1008$ & $156$ & $883$ & $793$ \\
Tsuneaki Takeda & \fbox{\includegraphics[width=\linewidth]{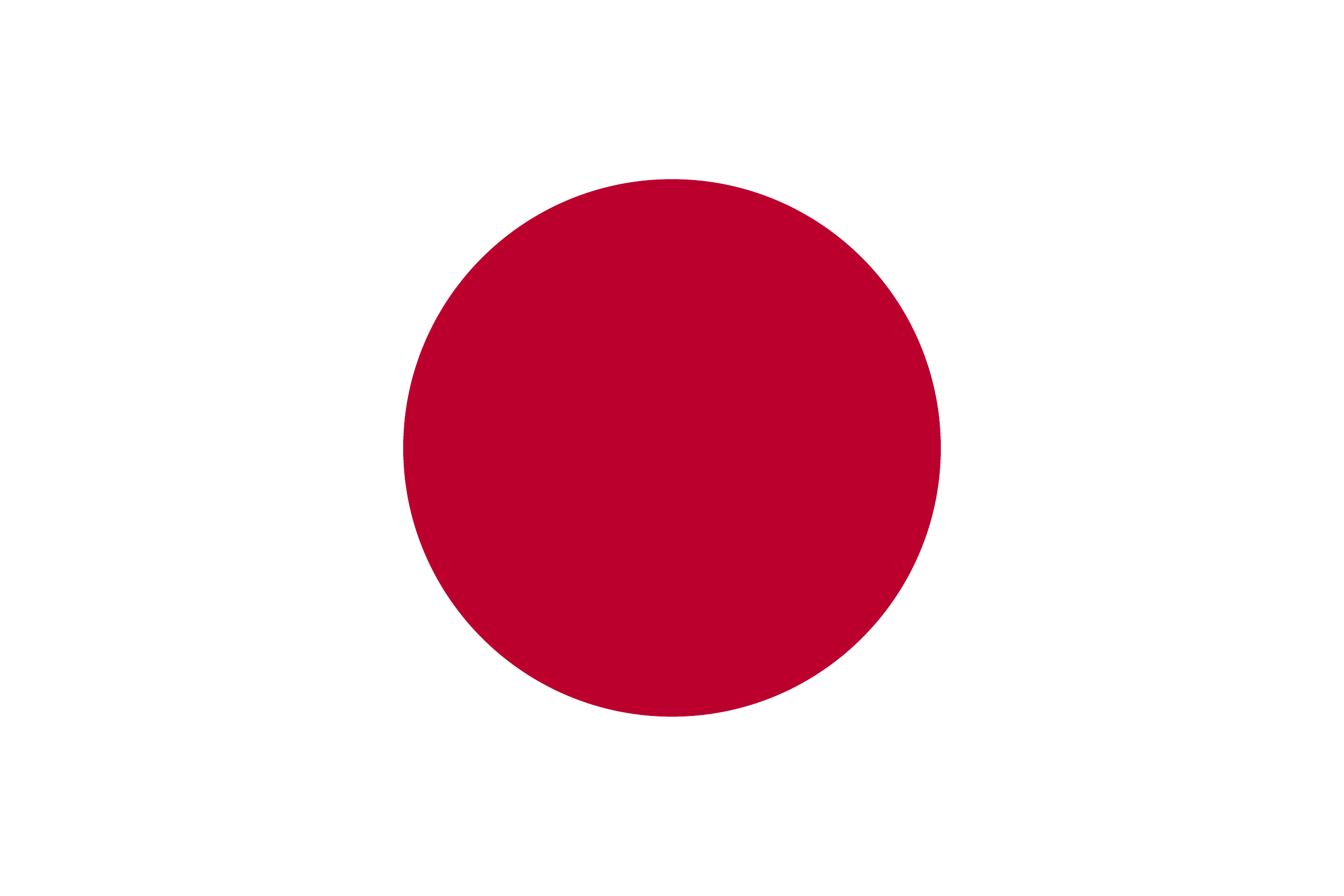}} & -- & 1901  & $628$ & $231$ & $-332$ & $1228$ \\
Youwei Qin & \fbox{\includegraphics[width=\linewidth]{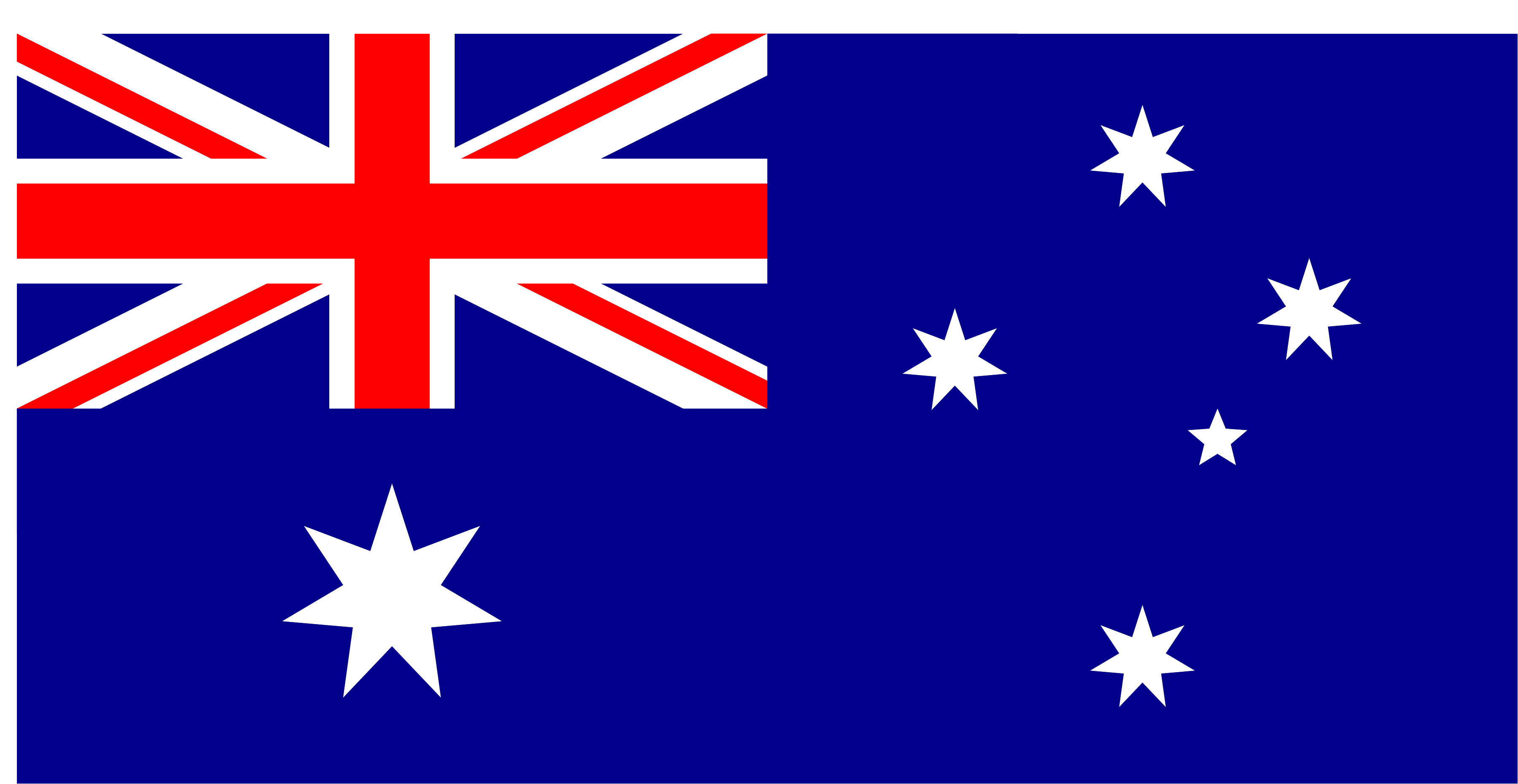}} & -- & 1759  & $1311$ & $331$ & $1958$ & $1799$ \\
Fintan Gavin & \fbox{\includegraphics[width=\linewidth]{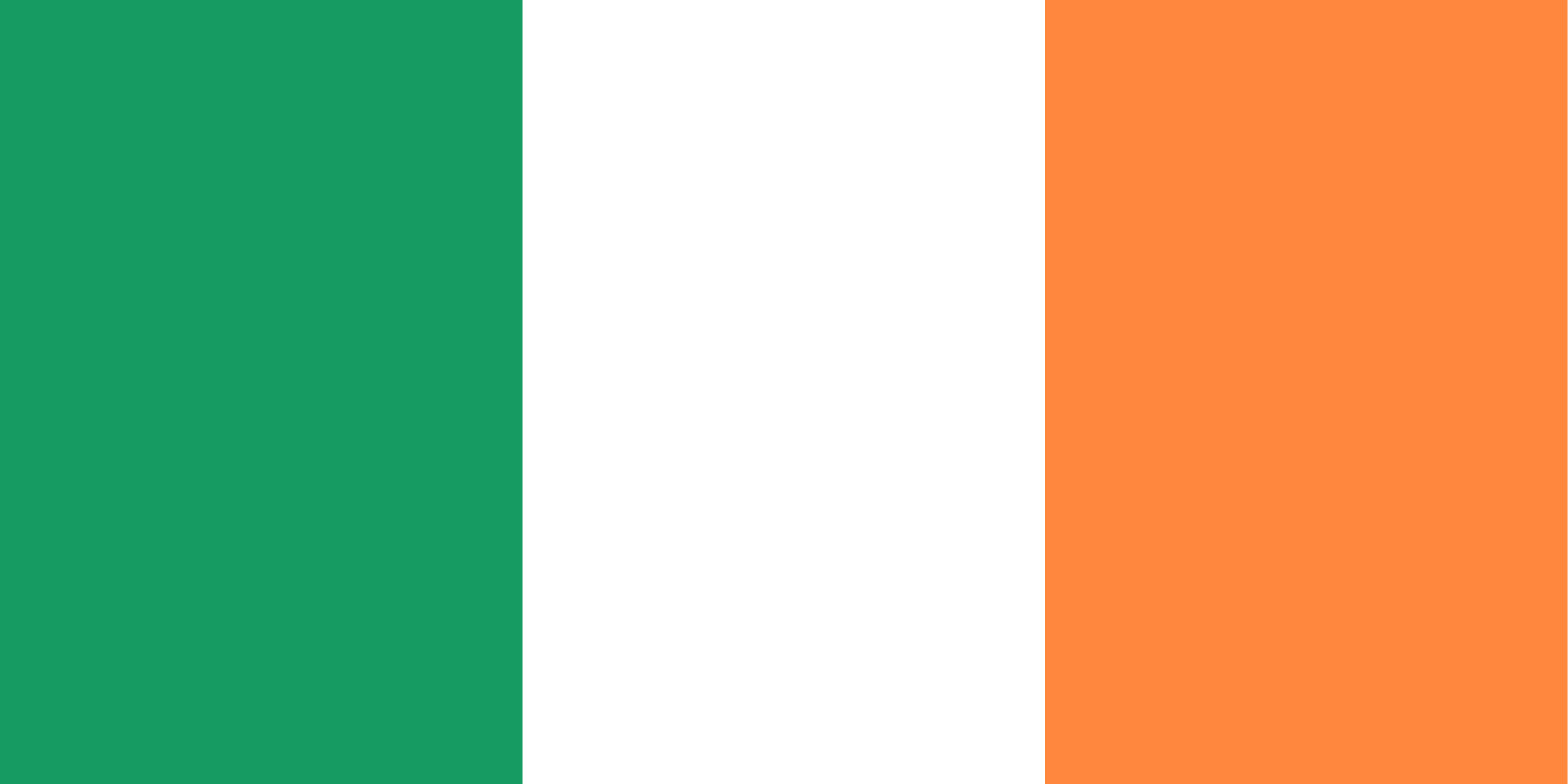}} & -- & 1555  & $635$ & $278$ & $-26$ & $1647$ \\
Giedrius Talacka & \fbox{\includegraphics[width=\linewidth]{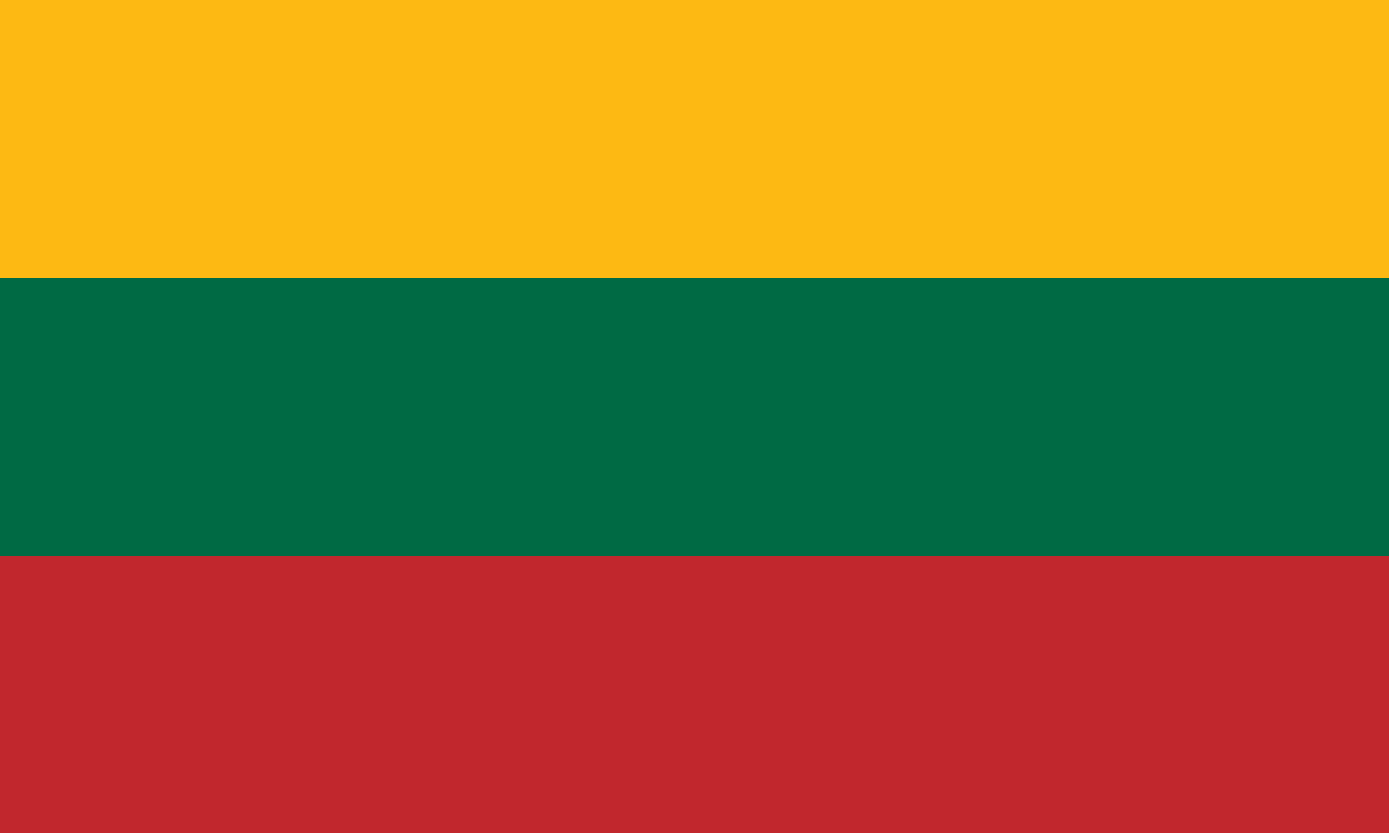}} & -- & 1514  & $1063$ & $338$ & $459$ & $1707$ \\
Juergen Bachmann & \fbox{\includegraphics[width=\linewidth]{figs/flags/wiki/Germany}} & -- & 1088  & $527$ & $198$ & $1769$ & $1662$ \\
Sergey Indenok & \fbox{\includegraphics[width=\linewidth]{figs/flags/wiki/Russia}} & -- & 852  & $881$ & $371$ & $253$ & $2507$ \\
Sebastian Schwab & \fbox{\includegraphics[width=\linewidth]{figs/flags/wiki/Germany}} & -- & 516  & $1086$ & $598$ & $1800$ & $2162$ \\
Dara O'Kearney & \fbox{\includegraphics[width=\linewidth]{figs/flags/wiki/Ireland}} & -- & 456  & $78$ & $250$ & $223$ & $1688$ \\
Roman Shaposhnikov & \fbox{\includegraphics[width=\linewidth]{figs/flags/wiki/Russia}} & -- & 330  & $131$ & $305$ & $-898$ & $2153$ \\
Shai Zurr & \fbox{\includegraphics[width=\linewidth]{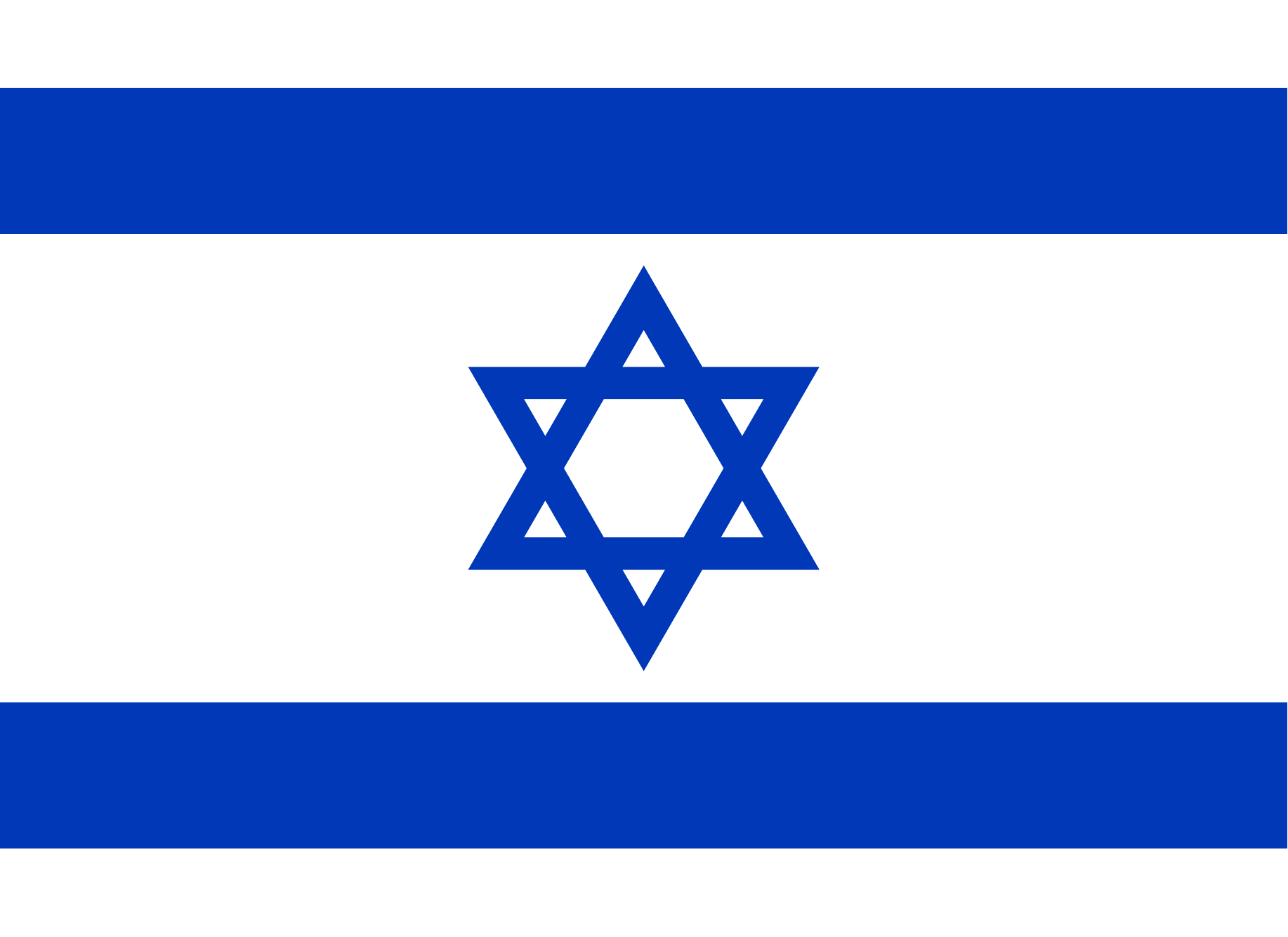}} & -- & 330  & $499$ & $360$ & $1154$ & $2206$ \\
Luca Moschitta & \fbox{\includegraphics[width=\linewidth]{figs/flags/wiki/Italy}} & -- & 328  & $444$ & $580$ & $1438$ & $2388$ \\
Stas Tishekvich & \fbox{\includegraphics[width=\linewidth]{figs/flags/wiki/Israel}} & -- & 295  & $-45$ & $433$ & $-346$ & $2264$ \\
Eyal Eshkar & \fbox{\includegraphics[width=\linewidth]{figs/flags/wiki/Israel}} & -- & 191  & $18$ & $608$ & $715$ & $4227$ \\
Jefri Islam & \fbox{\includegraphics[width=\linewidth]{figs/flags/wiki/Austria}} & -- & 176  & $997$ & $700$ & $3822$ & $4834$ \\
Fan Sun & \fbox{\includegraphics[width=\linewidth]{figs/flags/wiki/China}} & -- & 122  & $531$ & $774$ & $-1291$ & $5456$ \\
Igor Naumenko & \fbox{\includegraphics[width=\linewidth]{figs/flags/wiki/Ukraine}} & -- & 102  & $-137$ & $638$ & $851$ & $1536$ \\
Silvio Pizzarello & \fbox{\includegraphics[width=\linewidth]{figs/flags/wiki/Italy}} & -- & 90  & $1500$ & $2100$ & $5134$ & $6766$ \\
Gaia Freire & \fbox{\includegraphics[width=\linewidth]{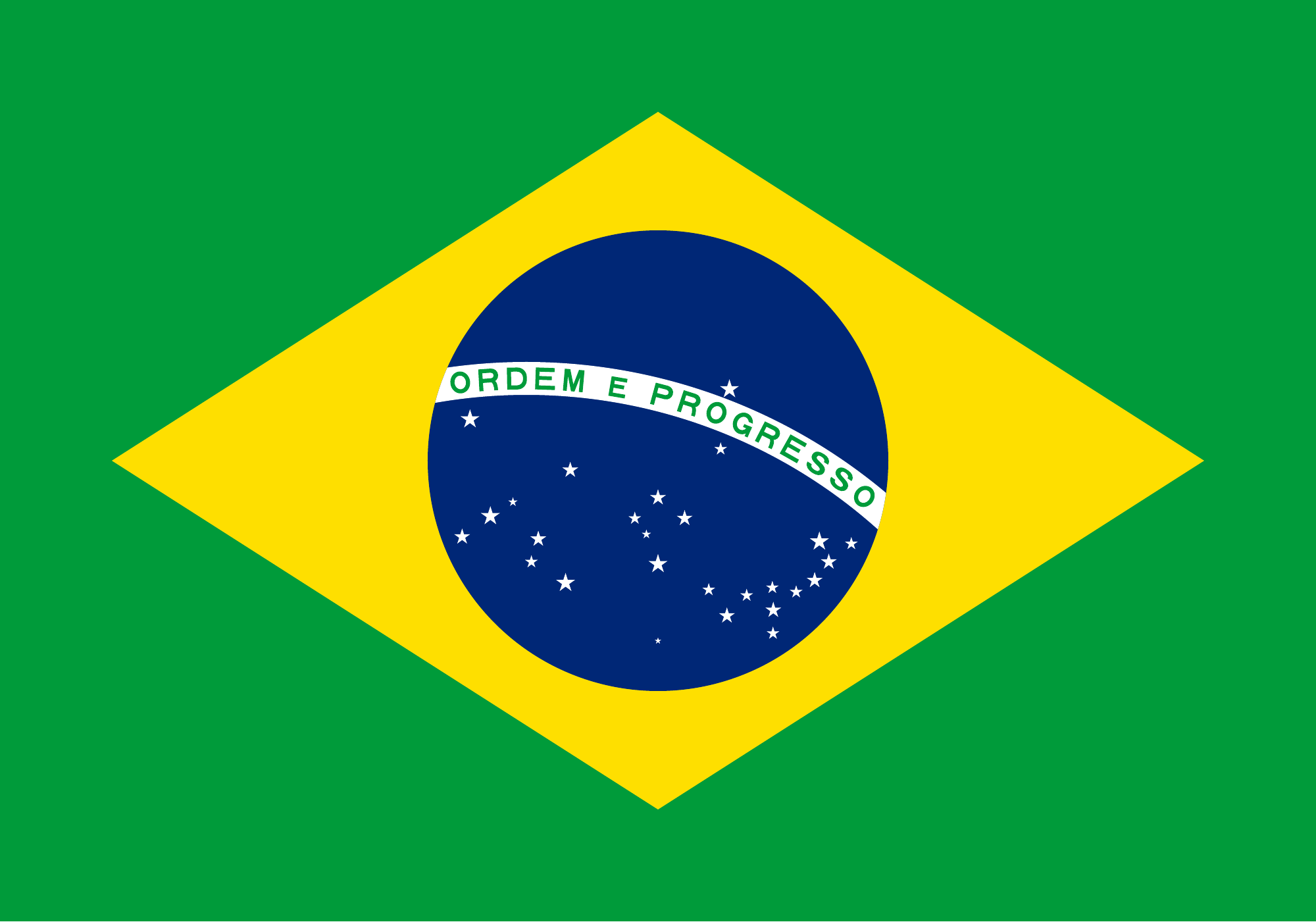}} & -- & 76  & $369$ & $136$ & $138$ & $694$ \\
Alexander B\"{o}s & \fbox{\includegraphics[width=\linewidth]{figs/flags/wiki/Austria}} & -- & 74  & $487$ & $756$ & $1$ & $2628$ \\
Victor Santos & \fbox{\includegraphics[width=\linewidth]{figs/flags/wiki/Brazil}} & -- & 58  & $475$ & $462$ & $-1759$ & $2571$ \\
Mike Phan & \fbox{\includegraphics[width=\linewidth]{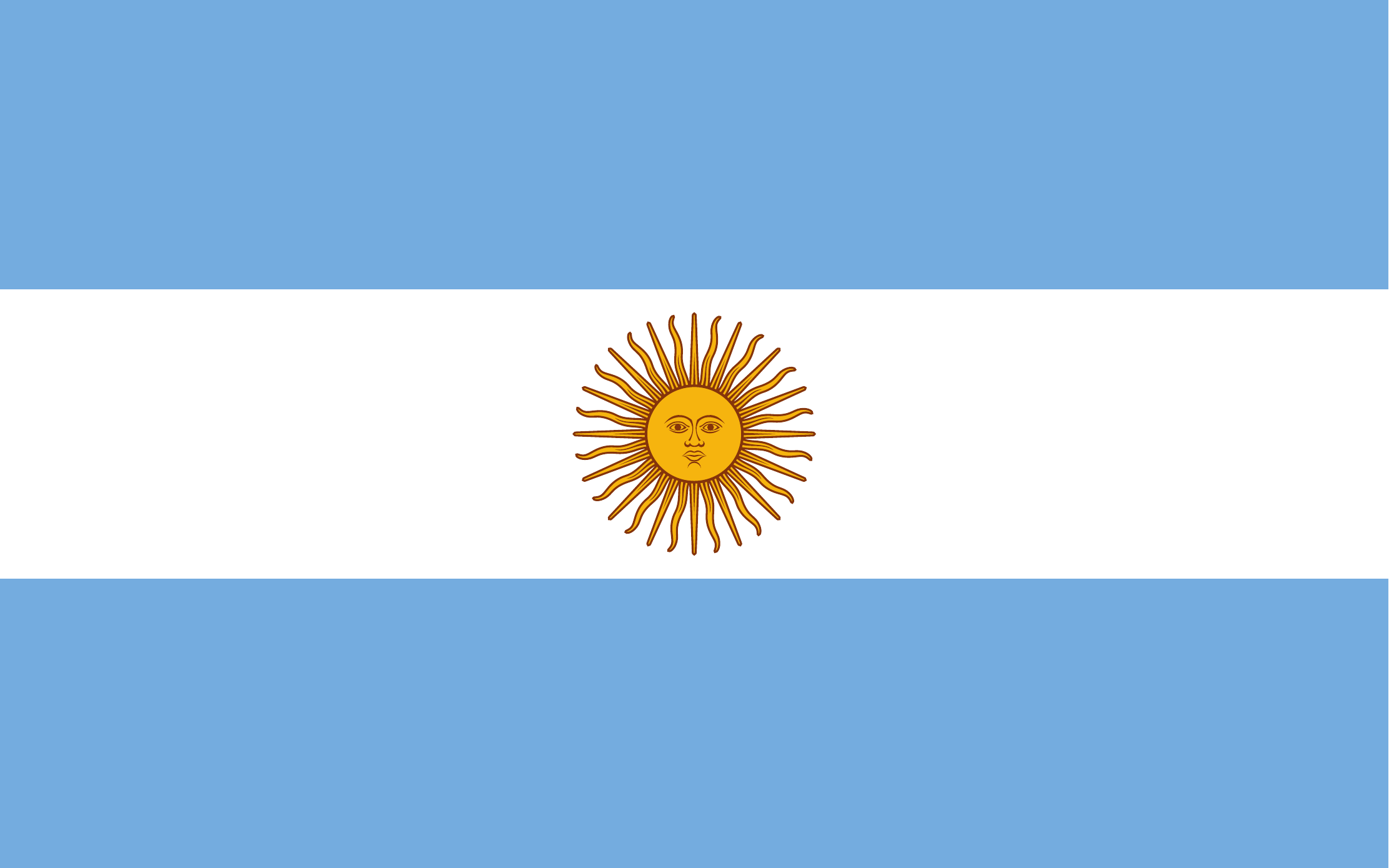}} & -- & 32  & $-1019$ & $2352$ & $-11223$ & $18235$ \\
Juan Manuel Pastor & \fbox{\includegraphics[width=\linewidth]{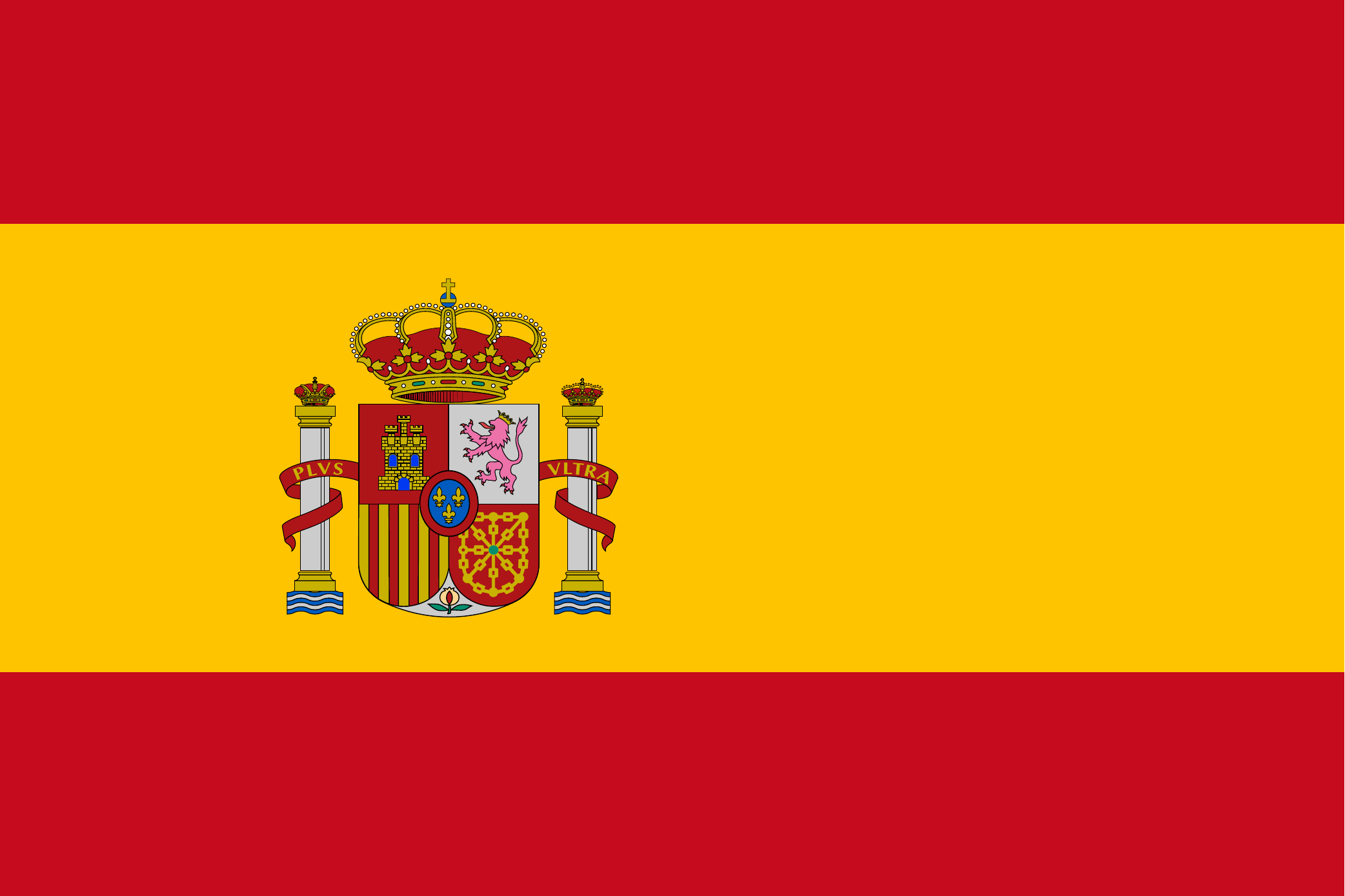}} & -- & 7  & $2744$ & $3521$ & $7286$ & $9856$ \\
\midrule
Human Professionals &  & & 44852 & $486$ & $40$ & $492$ & $220$ \\
\bottomrule
\end{tabular}
\end{table}

\section*{Local Best Response of DeepStack}

The goal of DeepStack, and much of the work on AI in poker, is to approximate a Nash equilibrium, i.e., produce a strategy with low exploitability.  The size of HUNL makes an explicit best-response computation intractable and so exact exploitability cannot be measured.  A common alternative is to play two strategies against each other.
However, head-to-head performance in imperfect information games has repeatedly been shown to be a poor estimation of equilibrium approximation quality.  For example, consider an exact Nash equilibrium strategy in the game of Rock-Paper-Scissors playing against a strategy that almost always plays ``rock''.  The results are a tie, but their playing strengths in terms of exploitability are vastly different.  This same issue has been seen in heads-up limit Texas hold'em as well (Johanson, IJCAI 2011), where the relationship between head-to-head play and exploitability, which is tractable in that game, is indiscernible.  The introduction of local best response (LBR) as a technique for finding a lower-bound on a strategy's exploitability gives evidence of the same issue existing in HUNL.  Act1 and Slumbot (second and third place in the previous ACPC) were statistically indistinguishable in head-to-head play (within 20 mbb/g), but Act1 is 1300mbb/g less exploitable as measured by LBR.  This is why we use LBR to evaluate DeepStack.

LBR is a simple, yet powerful, technique to produce a lower bound on a strategy's exploitability in HUNL~\cite{Lisy17:LocalBR} .  It explores a fixed set of options to find a ``locally'' good action against the strategy.  While it seems natural that more options would be better, this is not always true.  More options may cause it to find a locally good action that misses out on a future opportunity to exploit an even larger flaw in the opponent.  In fact, LBR sometimes results in larger lower bounds when not considering any bets in the early rounds, so as to increase the size of the pot and thus the magnitude of a strategy's future mistakes.  LBR was recently used to show that abstraction-based agents are significantly exploitable (see Table~\ref{tab-localbr}). The first three strategies are submissions from recent Annual Computer Poker Competitions. They all use both card and action abstraction and were found to be even more exploitable than simply folding every game in all tested cases.  The strategy ``Full Cards'' does not use any card abstraction, but uses only the sparse fold, call, pot-sized bet, all-in betting abstraction using hard translation~\cite{Schnizlein09:Translation}. Due to computation and memory requirements, we computed this strategy only for a smaller stack of 100 big blinds. Still, this strategy takes almost 2TB of memory and required approximately 14 CPU years to solve. Naturally, it cannot be exploited by LBR within the betting abstraction, but it is heavily exploitable in settings using other betting actions that require it to translate its opponent's actions, again losing more than if it folded every game.

As for DeepStack, under all tested settings of LBR's available actions, it fails to find any exploitable flaw.  In fact, it is losing 350 mbb/g  or more to DeepStack.  Of particular interest is the final column aimed to exploit DeepStack's flop strategy.  The flop is where DeepStack is most dependent on its counterfactual value networks to provide it estimates through the end of the game.
While these experiments do not prove that DeepStack is flawless, it does suggest its flaws require a more sophisticated search procedure than what is needed to exploit abstraction-based programs.

\begin{table}
\centering
\caption{Exploitability lower bound of different programs using local best response (LBR). LBR evaluates only the listed actions in each round as shown in each row.  F, C, P, A, refer to fold, call, a pot-sized bet, and all-in, respectively. 56bets includes the actions fold, call and 56 equidistant pot fractions as defined in the original LBR paper \cite{Lisy17:LocalBR}.  $\ddagger$: Always Fold checks when not facing a bet, and so it cannot be maximally exploited without a betting action.}
\label{tab-localbr}

\begin{tabular}{cc|rrrr}
\toprule
\multicolumn{2}{c}{~} & \multicolumn{4}{c}{Local best response performance (mbb/g)}\\
\midrule
\multirow{4}{*}{LBR Settings} & Pre-flop & \multicolumn{1}{c}{F, C} & \multicolumn{1}{c}{C} & \multicolumn{1}{c}{C} & \multicolumn{1}{c}{C}\\
 & Flop & \multicolumn{1}{c}{F, C} & \multicolumn{1}{c}{C} & \multicolumn{1}{c}{C} & \multicolumn{1}{c}{56bets}\\ 
 & Turn & \multicolumn{1}{c}{F, C} & \multicolumn{1}{c}{F, C, P, A} & \multicolumn{1}{c}{56bets} & \multicolumn{1}{c}{F, C}\\
 & River& \multicolumn{1}{c}{F, C} & \multicolumn{1}{c}{F, C, P, A} & \multicolumn{1}{c}{56bets} & \multicolumn{1}{c}{F, C}\\
\midrule
\multicolumn{2}{c|}{Hyperborean (2014)} & 721 $\pm$ 56 & 3852 $\pm$ 141 & 4675 $\pm$ 152 & 983 $\pm$ ~~95 \\
\multicolumn{2}{c|}{Slumbot  (2016)} & 522 $\pm$ 50 & 4020 $\pm$ 115 & 3763 $\pm$ 104 & 1227 $\pm$ ~~79  \\
\multicolumn{2}{c|}{Act1 (2016)} & 407 $\pm$ 47 & 2597 $\pm$ 140 & 3302 $\pm$ 122 &  847 $\pm$ ~~78\\ 
\multicolumn{2}{c|}{Always Fold} & $\ddagger$250 $\pm$ ~~0 & 750 $\pm$ ~~~~0 & 750 $\pm$ ~~~~0 & 750 $\pm$ ~~~~0 \\
\multicolumn{2}{c|}{Full Cards [100 BB]} &  -424 $\pm$ 37 & -536 $\pm$ ~~87 & 2403 $\pm$ ~~87  & 1008 $\pm$ ~~68 \\
\multicolumn{2}{c|}{\cellcolor[HTML]{C0C0C0} DeepStack} & \cellcolor[HTML]{C0C0C0}-428 $\pm$ 87 & \cellcolor[HTML]{C0C0C0}-383 $\pm$ 219 &  \cellcolor[HTML]{C0C0C0}-775 $\pm$ 255  &  \cellcolor[HTML]{C0C0C0}-602 $\pm$ 214\\
\bottomrule
\end{tabular}
\end{table}

\section*{DeepStack Implementation Details}

Here we describe the specifics for how DeepStack employs continual re-solving and how its deep counterfactual value networks were trained.

\subsection*{Continual Re-Solving}

As with traditional re-solving, the re-solving step of the DeepStack algorithm solves an augmented game.  The augmented game is designed to produce a strategy for the player such that the bounds for the opponent's counterfactual values are satisfied.  DeepStack uses a modification of the original CFR-D gadget~\cite{cprg:cfrd} for its augmented game, as discussed below.  While the max-margin gadget~\cite{Moravcik16:Subgames} is designed to improve the performance of poor strategies for abstracted agents near the end of the game, the CFR-D gadget performed better in early testing.

The algorithm DeepStack uses to solve the augmented game is a hybrid of vanilla CFR  \cite{ZinkevichEtAl07} and CFR$^+$ \cite{Tammelin15:CFR+}, which uses regret matching$^+$ like CFR$^+$, but does uniform weighting and simultaneous updates like vanilla CFR.  When computing the final average strategy and average counterfactual values, we omit the early iterations of CFR in the averages.

A major design goal for DeepStack's implementation was to typically play at least as fast as a human would using commodity hardware and a single GPU.
The degree of lookahead tree sparsity and the number of re-solving iterations are the principle decisions that we tuned to achieve this goal.
These properties were chosen separately for each round to achieve a consistent speed on each round. Note that DeepStack has no fixed requirement on the density of its lookahead tree besides those imposed by hardware limitations and speed constraints.

The lookahead trees vary in the actions available to the player acting, the actions available for the opponent's response, and the actions available to either player for the remainder of the round. 
We use the end of the round as our depth limit, except on the turn when the remainder of the game is solved.  
On the pre-flop and flop, we use trained counterfactual value networks to return values after the flop or turn card(s) are revealed.
Only applying our value function to public states at the start of a round is particularly convenient in that that we don't need to include the bet faced as an input to the function.  
Table~\ref{tab-lookahead} specifies lookahead tree properties for each round. 

\def\halfP{\textonehalf{}P}
\def\twoP{2P}

\begin{table}[tb]
\centering
\caption{Lookahead re-solving specifics by round. The abbreviations of F, C, \halfP, P, \twoP, and A refer to fold, call, half of a pot-sized bet, a pot-sized bet, twice a pot-sized bet, and all in, respectively.  The final column specifies which neural network was used when the depth limit was exceeded: the flop, turn, or the auxiliary network. \label{tab-lookahead}}
{\small\begin{tabular}{l|llllll}
\toprule
           & CFR          & Omitted    & First    & Second & Remaining & NN \\
Round & Iterations & Iterations & Action & Action & Actions       & Eval \\
\midrule
Pre-flop & 1000  & 980 & F, C, \halfP, P, A & F, C, \halfP, P, \twoP, A & F, C, P, A & Aux/Flop \\
Flop &  1000 & 500 & F, C, \halfP, P, A & F, C, P, A & F, C, P, A & Turn\\
Turn & 1000 & 500 & F, C, \halfP, P, A & F, C, P, A & F, C, P, A  & --- \\
River & 2000  & 1000 & F, C, \halfP, P, \twoP, A & F, C, \halfP, P, \twoP, A & F, C, P, A & --- \\
 \bottomrule
\end{tabular}}
\end{table}

The pre-flop round is particularly expensive as it requires enumerating all 22,100 possible public cards on the flop and evaluating each with the flop network.  To speed up pre-flop play, we trained an additional auxiliary neural network to estimate the expected value of the flop network over all possible flops.  However, we only used this network during the initial omitted iterations of CFR.  During the final iterations used to compute the average strategy and counterfactual values, we did the expensive enumeration and flop network evaluations.  Additionally, we cache the re-solving result for every observed pre-flop situation.  When the same betting sequence occurs again, we simply reuse the cached results rather than recomputing.  For the turn round, we did not use a neural network after the final river card, but instead solved to the end of the game.  However, we used a bucketed abstraction for all actions on the river.  For acting on the river, the re-solving includes the remainder of the game and so no counterfactual value network was used.

\paragraph*{Actions in Sparse Lookahead Trees.} 
DeepStack's sparse lookahead trees use only a small subset of the game's possible actions. The first layer of actions immediately after the current public state defines the options considered for DeepStack's next action. The only purpose of the remainder of the tree is to estimate counterfactual values for the first layer during the CFR algorithm. Table~\ref{tab:cfvs} presents how well counterfactual values can be estimated using sparse lookahead trees with various action subsets. 

\begin{table}[tb]
\centering
\caption{Absolute ($L_1$), Euclidean ($L_2$), and maximum absolute ($L_{\infty}$) errors, in mbb/g, of counterfactual values computed with 1,000 iterations of CFR on sparse trees, averaged over 100 random river situations.  
The ground truth values were estimated by solving the game with 9 betting options and 4,000 iterations (first row).
}
\label{tab:cfvs}
\begin{tabular}{lr|rrr }
\toprule
 Betting & Size  & $L_1$  & $L_2$ & $L_\infty$ \\
  \midrule
 F, C, Min, \textonequarter{}P, \textonehalf{}P, \textthreequarters{}P, P, 2P, 3P, 10P, A [4,000 iterations]  & 555k & 0.0 & 0.0 & 0.0  \\
 F, C, Min, \textonequarter{}P, \textonehalf{}P, \textthreequarters{}P, P, 2P, 3P, 10P, A  & 555k & 18.06 & 0.891 & 0.2724 \\
  F, C, 2P, A & 48k & 64.79 & 2.672 & 0.3445\\
  F, C, \halfP, A & 100k & 58.24 & 3.426 & 0.7376 \\
  F, C, P, A & 61k & 25.51 & 1.272 & 0.3372 \\
  F, C, \halfP, P, A & 126k & 41.42 & 1.541 & 0.2955 \\
  F, C, P, 2P, A & 204k & 27.69 & 1.390 & 0.2543 \\
  F, C, \halfP, P, 2P, A & 360k & 20.96 & 1.059 & 0.2653 \\
 \bottomrule
\end{tabular}
\end{table}

The results show that the F, C, P, A, actions provide an excellent tradeoff between computational requirements via the size of the solved lookahead tree and approximation quality. Using more actions quickly increases the size of the lookahead tree, but does not substantially improve errors. Alternatively, using a single betting action that is not one pot has a small effect on the size of the tree, but causes a substantial error increase.

To further investigate the effect of different betting options, Table~\ref{tab-lbr_various_actions} presents the results of evaluating DeepStack with different action sets using LBR.
We used setting of LBR that proved most effective against the default set of DeepStack actions (see Table~\ref{tab-lookahead}).
While the extent of the variance in the 10,000 hand evaluation shown in Table~\ref{tab-lbr_various_actions} prevents us from declaring a best set of actions with certainty, the crucial point is that LBR is significantly losing to each of them, and that we can produce play that is difficult to exploit even choosing from a small number of actions.  Furthermore, the improvement of a small number of additional actions is not dramatic.

\begin{table}[tb]
\centering
\caption{Performance of LBR exploitation of DeepStack with different actions allowed on the first level of its lookahead tree using the best LBR configuration against the default version of DeepStack. LBR cannot exploit DeepStack regardless of its available actions.}
\label{tab-lbr_various_actions}
\begin{tabular}{lr}
\toprule
First level actions & LBR performance\\
\midrule
F, C, P, A & -479 $\pm$ 216\\
Default   & -383 $\pm$ 219\\
F, C, \halfP, P, 1\halfP, 2P, A & -406 $\pm$ 218\\
\bottomrule
\end{tabular}
\end{table}

\paragraph*{Opponent Ranges in Re-Solving.}
Continual re-solving does not require keeping track of the opponent's range.  The re-solving step essentially reconstructs a suitable range using the bounded counterfactual values.  In particular, the CFR-D gadget does this by giving the opponent the option, after being dealt a uniform random hand, of terminating the game (T) instead of following through with the game (F), allowing them to simply earn that hand's bound on its counterfactual value.  Only hands which are valuable to bring into the subgame will then be observed by the re-solving player.  However, this process of the opponent learning which hands to follow through with can make re-solving require many iterations.  An estimate of the opponent's range can be used to effectively warm-start the choice of opponent ranges, and help speed up the re-solving.

One conservative option is to replace the uniform random deal of opponent hands with any distribution over hands as long as it assigns non-zero probability to every hand.  For example, we could linearly combine an estimated range of the opponent from the previous re-solve (with weight $b$) and a uniform range (with weight $1-b$).  This approach still has the same theoretical guarantees as re-solving, but can reach better approximate solutions in fewer iterations.  Another option is more aggressive and sacrifices the re-solving guarantees when the opponent's range estimate is wrong.  It forces the opponent with probability $b$ to follow through into the game with a hand sampled from the estimated opponent range.  With probability $1-b$ they are given a uniform random hand and can choose to terminate or follow through.  This could prevent the opponent's strategy from reconstructing a correct range, but can speed up re-solving further when we have a good opponent range estimate.

DeepStack uses an estimated opponent range during re-solving only for the first action of a round, as this is the largest lookahead tree to re-solve.  The range estimate comes from the last re-solve in the previous round.  When DeepStack is second to act in the round, the opponent has already acted, biasing their range, so we use the conservative approach.  When DeepStack is first to act, though, the opponent could only have checked or called since our last re-solve.  Thus, the lookahead has an estimated range following their action.  So in this case, we use the aggressive approach.  In both cases, we set $b=0.9$.
 
\paragraph{Speed of Play.} 
The re-solving computation and neural network evaluations are both implemented in Torch7~\cite{collobert2011torch7} and run on a single NVIDIA GeForce GTX 1080 graphics card. 
This makes it possible to do fast batched calls to the counterfactual value networks for multiple public subtrees at once, which is key to making DeepStack fast.

Table \ref{tab-times} reports the average times between the end of the previous (opponent or chance) action and submitting the next action by both humans and DeepStack in our study.  DeepStack, on average, acted considerably faster than our human players.  It should be noted that some human players were playing up to four games simultaneously (although few players did more than two), and so the human players may have been focused on another game when it became their turn to act.

\begin{table}[tb]
\centering
\caption{Thinking times for both humans and DeepStack.
DeepStack's extremely fast pre-flop speed shows that pre-flop situations often resulted in cache hits.}
\label{tab-times}
\begin{tabular}{l|ll|ll}
\toprule
           & \multicolumn{4}{c}{Thinking Time (s)} \\
& \multicolumn{2}{c}{Humans} & \multicolumn{2}{c}{DeepStack} \\
Round & Median & Mean & Median & Mean \\
\midrule
Pre-flop & 10.3 & 16.2 & 0.04 & 0.2 \\
Flop & 9.1 & 14.6 & 5.9 & 5.9 \\
Turn & 8.0 & 14.0 & 5.4 & 5.5 \\
River & 9.5 & 16.2 & 2.2 & 2.1 \\
\midrule
Per Action & 9.6 & 15.4 & 2.3 & 3.0 \\
Per Hand & 22.0 & 37.4 & 5.7 & 7.2 \\
 \bottomrule
\end{tabular}
\end{table}

\subsection*{Deep Counterfactual Value Networks}

DeepStack uses two counterfactual value networks, one for the flop and one for the turn, as well as an auxiliary network that gives counterfactual values at the end of the pre-flop.  In order to train the networks, we generated random poker situations at the start of the flop and turn.  Each poker situation is defined by the pot size, ranges for both players, and dealt public cards.  The complete betting history is not necessary as the pot and ranges are a sufficient representation.  The output of the network are vectors of counterfactual values, one for each player.  The output values are interpreted as fractions of the pot size to improve generalization across poker situations. 

The training situations were generated by first sampling a pot size from a fixed distribution which was designed to approximate observed pot sizes from older HUNL programs.\footnote{The fixed distribution selects an interval from the set of intervals $\{[100, 100),$ $[200,400),$ $[400, 2000),$ $[2000, 6000),$ $[6000, 19950]\}$ with uniform probability, followed by uniformly selecting an integer from within the chosen interval.}
The player ranges for the training situations need to cover the space of possible ranges that CFR might encounter during re-solving, not just ranges that are likely part of a solution.  So we generated pseudo-random ranges that attempt to cover the space of possible ranges.  We used a recursive procedure $R(S, p)$, that assigns probabilities to the hands in the set $S$ that sum to probability $p$, according to the following procedure.
\begin{enumerate}
\item If $|S| =1$, then $\Pr(s) = p$.
\item Otherwise,
\begin{enumerate}
\item Choose $p_1$ uniformly at random from the interval $(0, p)$, and let $p_2 = p-p_1$.
\item Let $S_1 \subset S$ and $S_2 = S \setminus S_1$ such that $|S_1| = \left\lfloor|S|/2\right\rfloor$ and all of the hands in $S_1$ have a hand strength no greater than hands in $S_2$.  Hand strength is the probability of a hand beating a uniformly selected random hand from the current public state.
\item Use $R(S_1, p_1)$ and $R(S_2, p_2)$ to assign probabilities to hands in $S=S_1 \bigcup S_2$.
\end{enumerate}
\end{enumerate}
Generating a range involves invoking $R(\mbox{\em all hands}, 1)$.
To obtain the target counterfactual values for the generated poker situations for the main networks, the situations were approximately solved using 1,000 iterations of CFR$^+$ with only betting actions fold, call, a pot-sized bet, and all-in.  For the turn network, ten million poker turn situations (from after the turn card is dealt) were generated and solved with 6,144 CPU cores of the Calcul Qu\'{e}bec MP2 research cluster, using over 175 core years of computation time.  For the flop network, one million poker flop situations (from after the flop cards are dealt) were generated and solved.  These situations were solved using DeepStack's depth limited solver with the turn network used for the counterfactual values at public states immediately after the turn card.  We used a cluster of 20 GPUS and one-half of a GPU year of computation time.  For the auxiliary network, ten million situations were generated and the target values were obtained by enumerating all 22,100 possible flops and averaging the counterfactual values from the flop network's output.  

\paragraph*{Neural Network Training.}
All networks were trained using built-in Torch7 libraries, with the Adam stochastic gradient descent procedure \cite{kingma2014adam} minimizing the average of the Huber losses~\cite{huber1964} over the counterfactual value errors.  Training used a mini-batch size of 1,000, and a learning rate 0.001, which was decreased to 0.0001 after the first 200 epochs. 
Networks were trained for approximately 350 epochs over two days on a single GPU, and the epoch with the lowest validation loss was chosen. 

\paragraph*{Neural Network Range Representation.}
In order to improve generalization over input player ranges, we map the distribution of individual hands (combinations of public and private cards) into distributions of buckets.  The buckets were generated using a clustering-based abstraction technique, which cluster strategically similar hands using $k$-means clustering with earth mover's distance over hand-strength-like features~\cite{Johanson13:Abstraction,Ganzfried14:EMD}.  For both the turn and flop networks we used 1,000 clusters and map the original ranges into distributions over these clusters as the first layer of the neural network (see Figure~3 of the main article).  This bucketing step was not used on the auxiliary network as there are only 169 strategically distinct hands pre-flop, making it feasible to input the distribution over distinct hands directly.

\paragraph*{Neural Network Accuracies.}
The turn network achieved an average Huber loss of 0.016 of the pot size on the training set and 0.026 of the pot size on the validation set.   The flop network, with a much smaller training set, achieved an average Huber loss of 0.008 of the pot size on the training set, but 0.034 of the pot size on the validation set.  Finally, 
 the auxiliary network had average Huber losses of 0.000053 and 0.000055 on the training and validation set, respectively.
Note that there are, in fact, multiple Nash equilibrium solutions to these poker situations, with each giving rise to different counterfactual value vectors.  So, these losses may overestimate the true loss as the network may accurately model a different equilibrium strategy.

\paragraph*{Number of Hidden Layers.}
We observed in early experiments that the neural network had a lower validation loss with an increasing number of hidden layers.  From these experiments, we chose to use seven hidden layers in an attempt to tradeoff accuracy, speed of execution, and the available memory on the GPU.  The result of a more thorough experiment examining the turn network accuracy as a function of the number of hidden layers is in Figure~\ref{fig:nn-errors-by-layers}.  It appears that seven hidden layers is more than strictly necessary as the validation error does not improve much beyond five.  However, all of these architectures were trained using the same ten million turn situations.  With more training data it would not be surprising to see the larger networks see a further reduction in loss due to their richer representation power.

\begin{figure}[t]
\centering
\includegraphics[width=0.5\textwidth]{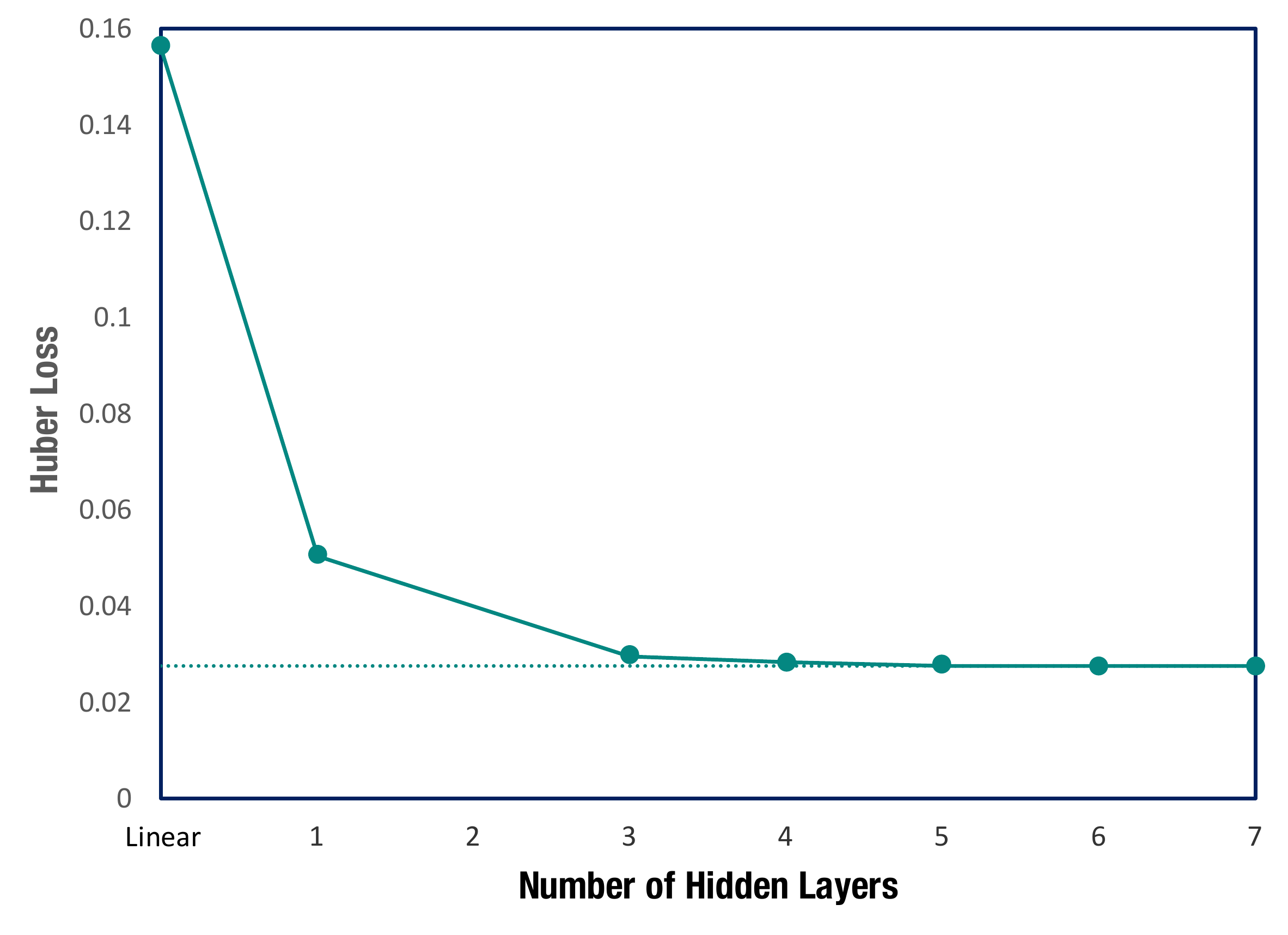}
\caption{Huber loss with different numbers of hidden layers in the neural network.}\label{fig:nn-errors-by-layers}
\end{figure}

\section*{Proof of Theorem 1}

The formal proof of Theorem 1, which establishes the soundness of DeepStack's
depth-limited continual re-solving, is conceptually easy to follow.   It
requires three parts.  First, we establish that the
exploitability introduced in a re-solving step has two linear components; one due
to approximately solving the subgame, and one due to the error in DeepStack's
counterfactual value network (see Lemmas 1 through 5).  Second, we enable estimates of
subgame counterfactual values that do not arise from actual subgame strategies (see Lemma 6).
Together, parts one and two enable us to use DeepStack's counterfactual value
network for a single re-solve.\footnote{The first part is a generalization and improvement
on the re-solving exploitability bound given by Theorem 3 in Burch et al.~\cite{cprg:cfrd},
and the second part generalizes the bound on decomposition regret given by Theorem 2 of
the same work.}  Finally, we show that using the opponent's values from the
best action, rather than the observed action, does not increase overall
exploitability (see Lemma 7).  This allows us to carry forward estimates of the opponent's
counterfactual value, enabling continual re-solving.  Put together,
these three parts bound the error after any finite number of continual re-solving
steps, concluding the proof.  We now formalize each step.

There are a number of concepts we use throughout this section.
We use the notation from Burch et al.~\cite{cprg:cfrd} without any
introduction here.  We assume player 1 is performing the continual
re-solving.  We call player 2 the opponent.
We only consider the re-solve player's strategy $\sigma$, as the opponent is always using a best response to $\sigma$.
All values are considered with respect to the opponent, unless specifically stated.
We say $\sigma$ is $\epsilon$-exploitable if the opponent's best
response value against $\sigma$ is no more than $\epsilon$ away from
the game value for the opponent.

A public state $S$ corresponds to the root of an imperfect information subgame.
We write $\mathcal{I}_2^S$ for the collection of player 2 information sets in $S$. 
Let $G\langle S, \sigma, w\rangle$ be the subtree gadget game (the re-solving game of Burch et al.~\cite{cprg:cfrd}), where $S$ is some public state, $\sigma$ is used to get player 1 reach probabilities $\pi^{\sigma}_{-2}(h)$ for each $h \in S$, and $w$ is a vector where $w_I$ gives the value of player 2 taking the terminate action (T) from information set $I \in \mathcal{I}_2^S$.
Let 
\begin{align*}
\text{BV}_I(\sigma) = \max_{\sigma^*_2}\sum_{h \in I} \pi^{\sigma}_{-2}(h) u^{\sigma,\sigma_2^*}(h) / \pi^{\sigma}_{-2}(I),
\end{align*}
be the counterfactual value for $I$ given we play $\sigma$ and our opponent is playing a best response.
For a subtree strategy $\sigma^S$, we write $\sigma \to \sigma^S$ for the strategy that plays according to $\sigma^S$ for any state in the subtree and according to $\sigma$ otherwise.  For the gadget game $G\langle S, \sigma, w\rangle$, the gadget value of a subtree strategy $\sigma^S$ is defined to be:
\begin{align*}
  \text{GV}^S_{w,\sigma}(\sigma^S) = \sum_{I \in \mathcal{I}^S_2} \max(w_I,\text{BV}_I(\sigma\to\sigma^S)),
\end{align*}
and the underestimation error is defined to be:
\begin{align*}
  \text{U}^S_{w,\sigma} = \min_{\sigma^S} \text{GV}^S_{w,\sigma}(\sigma^S) - \sum_{I \in \mathcal{I}^S_2}w_I.
\end{align*}

\begin{lemma}
  The game value of a gadget game $G\langle S, \sigma, w\rangle$ is 
\begin{equation*}\sum_{I \in \mathcal{I}^S_2} w_I + \text{U}^S_{w,\sigma}.\end{equation*}
  \label{lem:underestimation}
\end{lemma}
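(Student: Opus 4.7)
The plan is to exploit the fact that the gadget game $G\langle S, \sigma, w\rangle$ places a binary terminate/follow choice at each opponent information set $I \in \mathcal{I}^S_2$ at the very root of the re-solving subtree, with the re-solve player's reach into each $I$ fixed to $\pi^\sigma_{-2}$. Because these binary decisions live at distinct information sets and the T-payoff at $I$ depends only on $w_I$ (not on any other information set's choice), the opponent's best response against any fixed subtree strategy $\sigma^S$ decomposes across $\mathcal{I}^S_2$: at each $I$ the opponent independently picks T whenever $w_I > \text{BV}_I(\sigma\to\sigma^S)$ and F otherwise, collecting $\max(w_I, \text{BV}_I(\sigma\to\sigma^S))$ from $I$. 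Summing these contributions over $\mathcal{I}^S_2$ recovers exactly the quantity $\text{GV}^S_{w,\sigma}(\sigma^S)$ as defined.

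Next I would argue that, because the gadget game is zero-sum, the re-solve player chooses $\sigma^S$ to minimize the opponent's overall value, so the game value equals $\min_{\sigma^S} \text{GV}^S_{w,\sigma}(\sigma^S)$. Applying the definition $\text{U}^S_{w,\sigma} = \min_{\sigma^S} \text{GV}^S_{w,\sigma}(\sigma^S) - \sum_{I\in\mathcal{I}^S_2} w_I$ then rearranges to $\sum_I w_I + \text{U}^S_{w,\sigma}$, giving the claim.

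The main obstacle is justifying the per-information-set decomposition of the opponent's optimization. Two facts from the gadget construction of \cite{cprg:cfrd} must be checked: (i) choosing T at $I$ contributes exactly $w_I$ to the opponent's total, with no coupling to the T/F choices or payoffs at any other $I' \neq I$; and (ii) choosing F at $I$ leads into the original subtree under $\sigma\to\sigma^S$ and yields the opponent's best-response counterfactual value, which is precisely $\text{BV}_I(\sigma\to\sigma^S)$ given the player 1 reach $\pi^\sigma_{-2}$ into $I$. Fact (i) rests on the gadget structure in which chance first selects an opponent hand and the corresponding root information set independently chooses T or F, so the payoffs from distinct information sets are summed additively; fact (ii) is just the definition of $\text{BV}_I$ specialized to $\sigma\to\sigma^S$. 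Once both are in hand, the statement follows by pure bookkeeping from the definitions of $\text{GV}^S_{w,\sigma}$ and $\text{U}^S_{w,\sigma}$.
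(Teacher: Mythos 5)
Your proposal is correct and follows essentially the same route as the paper's proof: both write the opponent's best-response value against a fixed subtree strategy as a per-information-set maximum over the terminate/follow choice (using the decoupling of these choices across $\mathcal{I}^S_2$ and the fact that following yields $\text{BV}_I(\sigma\to\sigma^S)$), sum to obtain $\text{GV}^S_{w,\sigma}(\sigma^S)$, take the minimum over the re-solve player's subtree strategy, and invoke the definition of $\text{U}^S_{w,\sigma}$. The only difference is presentational: the paper carries the explicit chance-reach normalization through the algebra, while you state the same decomposition in words.
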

\begin{proof}
Let $\tilde{\sigma}_2^S$ be a gadget game strategy for player $2$ which must choose from the F and T actions at starting information set $I$.
Let $\tilde{u}$ be the utility function for the gadget game.
\begin{align*}
\min_{\sigma_1^S}\max_{\tilde{\sigma}_2^S} \tilde{u}(\sigma_1^S,\tilde{\sigma}_2^S)
&= \min_{\sigma_1^S}\max_{\sigma_2^S}\sum_{I \in \mathcal{I}^S_2} \frac{\pi^{\sigma}_{-2}(I)}{\sum_{I' \in \mathcal{I}^S_2} \pi^{\sigma}_{-2}(I')} \max_{a \in \{F,T\}} \tilde{u}^{\sigma^S}(I,a)\\
&= \min_{\sigma_1^S}\max_{\sigma_2^S}\sum_{I \in \mathcal{I}^S_2} \max( w_I, \sum_{h \in I} \pi_{-2}^{\sigma}(h) u^{\sigma^S}(h))
\intertext{A best response can maximize utility at each information set independently:}
&= \min_{\sigma_1^S}\sum_{I \in \mathcal{I}^S_2} \max( w_I, \max_{\sigma_2^S} \sum_{h \in I} \pi_{-2}^{\sigma}(h) u^{\sigma^S}(h))\\
&= \min_{\sigma_1^S}\sum_{I \in \mathcal{I}^S_2} \max( w_I, \text{BV}_I(\sigma\to\sigma_1^S))\\
&= \text{U}^S_{w,\sigma} + \sum_{I \in \mathcal{I}^S_2}  w_I
\end{align*}
\end{proof}

\begin{lemma}
  If our strategy $\sigma^S$ is $\epsilon$-exploitable in the gadget game
  $G\langle S,\sigma,w\rangle$, then $\text{GV}^S_{w,\sigma}(\sigma^S) \le \sum_{I \in \mathcal{I}^S_2} w_I +
  \text{U}^S_{w,\sigma} + \epsilon$
  \label{lem:resolve_underestimation}
\end{lemma}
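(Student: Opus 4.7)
\textbf{Proof plan for Lemma~\ref{lem:resolve_underestimation}.} The plan is to reduce the claim to Lemma~\ref{lem:underestimation} together with the definition of $\epsilon$-exploitability, by observing that the opponent's best-response value in the gadget game against any fixed $\sigma^S$ is exactly $\text{GV}^S_{w,\sigma}(\sigma^S)$.

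First, I would recall that in the gadget game $G\langle S,\sigma,w\rangle$ the opponent, at each starting information set $I \in \mathcal{I}_2^S$, chooses between the terminate action $T$ (worth $w_I$) and the follow action $F$ (which enters the subgame under $\sigma\to\sigma^S$). Repeating the manipulation used in the proof of Lemma~\ref{lem:underestimation}, but keeping $\sigma^S$ fixed rather than minimizing over it, the opponent's best-response value against $\sigma^S$ in the gadget game is
\begin{equation*}
\max_{\tilde{\sigma}_2^S}\tilde{u}(\sigma^S,\tilde{\sigma}_2^S)
= \sum_{I \in \mathcal{I}_2^S} \max\bigl(w_I,\text{BV}_I(\sigma\to\sigma^S)\bigr)
= \text{GV}^S_{w,\sigma}(\sigma^S),
\end{equation*}
which is the key identification step.

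Next I would invoke Lemma~\ref{lem:underestimation}, which gives the game value of the gadget game as $\sum_{I \in \mathcal{I}_2^S} w_I + \text{U}^S_{w,\sigma}$. Since $\sigma^S$ is $\epsilon$-exploitable in this zero-sum game by hypothesis, its best-response value exceeds the game value by at most $\epsilon$, hence
\begin{equation*}
\text{GV}^S_{w,\sigma}(\sigma^S) \;=\; \max_{\tilde{\sigma}_2^S}\tilde{u}(\sigma^S,\tilde{\sigma}_2^S) \;\le\; \sum_{I \in \mathcal{I}_2^S} w_I + \text{U}^S_{w,\sigma} + \epsilon,
\end{equation*}
which is the desired conclusion.

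There is no substantial obstacle; the lemma is essentially a bookkeeping consequence of Lemma~\ref{lem:underestimation} combined with the definition of exploitability. The only thing that requires care is making sure the weighting of information sets used in the gadget utility $\tilde u$ matches the unweighted sum appearing inside $\text{GV}^S_{w,\sigma}$, i.e.\ that the same convention chosen in the proof of Lemma~\ref{lem:underestimation} is reused consistently so that the equality of best-response value and gadget value goes through cleanly.
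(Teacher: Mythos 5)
Your proposal is correct and matches the paper's intent exactly: the paper's own proof is simply the one-line remark that the result ``follows from Lemma~\ref{lem:underestimation} and the definitions of $\epsilon$-Nash, $\text{U}^S_{w,\sigma}$, and $\text{GV}^S_{w,\sigma}(\sigma^S)$,'' and your argument is precisely the expansion of that remark, identifying the opponent's best-response value against the fixed $\sigma^S$ with $\text{GV}^S_{w,\sigma}(\sigma^S)$ and then applying the game value from Lemma~\ref{lem:underestimation} together with $\epsilon$-exploitability. Your closing caveat about the information-set weighting in $\tilde{u}$ is well taken, since the paper's proof of Lemma~\ref{lem:underestimation} silently drops the normalization constant between its first and second lines, and your identification step relies on the same convention.
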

\begin{proof}
  This follows from Lemma~\ref{lem:underestimation} and the definitions
  of $\epsilon$-Nash, $\text{U}^S_{w,\sigma}$, and $\text{GV}^S_{w,\sigma}(\sigma^S)$.
\end{proof}

\begin{lemma}
  Given an $\epsilon_O$-exploitable $\sigma$ in the original game, if
  we replace a subgame with a strategy $\sigma^S$ such that
  $\text{BV}_I(\sigma \to \sigma^S) \le w_I$ for all $I \in \mathcal{I}^S_2$, then the new combined
  strategy has an exploitability no more than $\epsilon_O + \text{EXP}^S_{w,\sigma}$ where
  \begin{align*}
    \text{EXP}^S_{w,\sigma} = \sum_{I \in \mathcal{I}^S_2} \max( \text{BV}_I( \sigma ), w_I ) - \sum_{I \in \mathcal{I}^S_2} \text{BV}_I( \sigma )
  \end{align*}
  \label{lem:error}
\end{lemma}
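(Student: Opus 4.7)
The plan is to compare the opponent's best-response value against the new combined strategy $\sigma' := \sigma \to \sigma^S$ with the best-response value against the original $\sigma$, and show the increase is at most $\text{EXP}^S_{w,\sigma}$. Adding this to the hypothesized $\epsilon_O$-exploitability of $\sigma$ then yields the claim.

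First I would fix an opponent best response $\mathrm{BR}(\sigma')$ and construct an auxiliary opponent strategy $\tau$ that copies $\mathrm{BR}(\sigma')$ at every information set outside the subtree rooted at $S$, but plays an opponent best response against $\sigma$ at information sets inside $S$. This $\tau$ is well-defined and can simultaneously attain counterfactual value $\text{BV}_I(\sigma)$ at every $I \in \mathcal{I}^S_2$, because these root information sets correspond to distinct opponent private hands, and their descendant information sets within $S$ are partitioned by hand, so the subtree below each $I$ can be best-responded to independently. Since $\sigma$ and $\sigma'$ agree outside $S$ and $\tau$ and $\mathrm{BR}(\sigma')$ also agree outside $S$, the expected payoffs $v_2(\mathrm{BR}(\sigma'), \sigma')$ and $v_2(\tau, \sigma)$ have identical contributions from terminal histories that never pass through $S$; their difference comes only from histories through $S$. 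Decomposing by $I \in \mathcal{I}^S_2$ then gives
\begin{align*}
v_2(\mathrm{BR}(\sigma'), \sigma') - v_2(\tau, \sigma) \;=\; \sum_{I \in \mathcal{I}^S_2} \pi^{\sigma}_{-2}(I)\bigl[\text{BV}_I(\sigma') - \text{BV}_I(\sigma)\bigr].
\end{align*}

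By the hypothesis $\text{BV}_I(\sigma') \le w_I$, each bracketed term is at most $\max(w_I, \text{BV}_I(\sigma)) - \text{BV}_I(\sigma)$, which is non-negative; since $\pi^\sigma_{-2}(I) \le 1$, the weighted sum is upper bounded by its unweighted version, which is exactly $\text{EXP}^S_{w,\sigma}$. Finally, $v_2(\tau, \sigma) \le v_2(\mathrm{BR}(\sigma), \sigma)$ since $\tau$ is merely a candidate opponent strategy against $\sigma$, and by hypothesis $v_2(\mathrm{BR}(\sigma), \sigma) - v_2^\ast \le \epsilon_O$ where $v_2^\ast$ is the game value. Combining gives $v_2(\mathrm{BR}(\sigma'), \sigma') - v_2^\ast \le \epsilon_O + \text{EXP}^S_{w,\sigma}$, i.e.\ $\sigma'$ is $(\epsilon_O + \text{EXP}^S_{w,\sigma})$-exploitable.

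The main obstacle is making the per-info-set decomposition above fully rigorous given the paper's normalized definition of $\text{BV}_I$: one has to carefully bookkeep the factor $\pi^\sigma_{-2}(I)$ arising when aggregating conditional values into an actual expected payoff, and verify that the ``independent best-response'' construction for $\tau$ is valid, i.e.\ that no opponent information set inside $S$ mixes histories from two different $I, I' \in \mathcal{I}^S_2$. Once those structural facts are in hand, the bounding argument itself is short — the key inequality $\text{BV}_I(\sigma') \le \max(w_I, \text{BV}_I(\sigma))$ is exactly what forces the excess into the $\text{EXP}^S_{w,\sigma}$ term, and the drop from reach-weighted to unweighted sum is free because all summands are non-negative.
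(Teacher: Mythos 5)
Your argument is correct and is essentially the paper's proof in different clothing: your auxiliary strategy $\tau$ is exactly the paper's hybrid $\sigma_2^*\to\sigma_2^S$ (a best response to $\sigma'$ outside $S$ spliced with a best response to $\sigma$ inside $S$), and the decomposition over terminals through $S$ versus not, followed by the per-information-set bound $\text{BV}_I(\sigma')-\text{BV}_I(\sigma)\le\max(w_I,\text{BV}_I(\sigma))-\text{BV}_I(\sigma)\ge 0$ and the dropping of reach weights, mirrors the paper's chain of inequalities. The only slips are cosmetic: your displayed decomposition should be an inequality rather than an equality (a best response need only attain $\text{BV}_I(\sigma')$ at information sets it reaches with positive probability), and the weight should be $\pi_2^{\mathrm{BR}(\sigma')}(I)\,\pi^{\sigma}_{-2}(I)$ rather than $\pi^{\sigma}_{-2}(I)$ alone — neither affects the conclusion since the weights lie in $[0,1]$ and are discarded only after the summands are made non-negative.
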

\begin{proof}
  We only care about the information sets where the opponent's
  counterfactual value increases, and a worst case upper bound occurs
  when the opponent best response would reach every such information
  set with probability 1, and never reach information sets where the
  value decreased.

Let $Z[S] \subseteq Z$ be the set of terminal states reachable from some $h \in S$ and let $v_2$ be the game value of the full game for player 2. Let $\sigma_2$ be a best response to $\sigma$ and let $\sigma^S_2$ be the part of $\sigma_2$ that plays in the subtree rooted at $S$. Then necessarily $\sigma^S_2$ achieves counterfactual value $\text{BV}_I(\sigma)$ at each $I \in \mathcal{I}^S_2$.

\begin{alignat*}{2}
\max_{\sigma_2^*} &\mathrlap{(u(\sigma\to\sigma^S,\sigma_2^*))}\\
&= \max_{\sigma_2^*} &&\biggl[\sum_{z \in Z[S]} \pi_{-2}^{\sigma \to \sigma^S}(z) \pi_2^{\sigma_2^*}(z) u(z) + \sum_{z \in Z \setminus Z[S]} \pi_{-2}^{\sigma \to \sigma^S}(z) \pi_2^{\sigma_2^*}(z) u(z) \biggr]\\
&= \max_{\sigma_2^*} &&\biggl[\sum_{z \in Z[S]} \pi_{-2}^{\sigma \to \sigma^S}(z) \pi_2^{\sigma_2^*}(z) u(z) - \sum_{z \in Z[S]} \pi_{-2}^{\sigma}(z) \pi_2^{\sigma_2^*\to\sigma_2^S}(z)u(z)\\
&&&+  \sum_{z \in Z[S]} \pi_{-2}^{\sigma}(z) \pi_2^{\sigma_2^*\to\sigma_2^S}(z)u(z)+ \sum_{z \in Z \setminus Z[S]} \pi_{-2}^{\sigma}(z) \pi_2^{\sigma_2^*}(z) u(z) \biggr]\\
&\leq \max_{\sigma_2^*} &&\biggl[\sum_{z \in Z[S]} \pi_{-2}^{\sigma \to \sigma^S}(z) \pi_2^{\sigma_2^*}(z) u(z) - \sum_{z \in Z[S]} \pi_{-2}^{\sigma}(z) \pi_2^{\sigma_2^*\to\sigma_2^S}(z)u(z)\biggr]\\
&&&+\max_{\sigma_2^*}\biggl[ \sum_{z \in Z[S]} \pi_{-2}^{\sigma}(z) \pi_2^{\sigma_2^*\to\sigma_2^S}(z)u(z)+ \sum_{z \in Z \setminus Z[S]} \pi_{-2}^{\sigma}(z) \pi_2^{\sigma_2^*}(z) u(z) \biggr]\\
&\leq \max_{\sigma_2^*} &&\biggl[\sum_{I \in \mathcal{I}^S_2} \sum_{h \in I}\pi_{-2}^\sigma(h)\pi_2^{\sigma_2^*}(h)u^{\sigma^S,\sigma_2^*}(h) \\
  &&& - \sum_{I \in \mathcal{I}^S_2} \sum_{h \in I}\pi_{-2}^\sigma(h)\pi_2^{\sigma_2^*}(h)u^{\sigma,\sigma_2^S}(h)\biggr] + \max_{\sigma_2^*} (u(\sigma,\sigma_2^*))
\intertext{By perfect recall $\pi_2(h)=\pi_2(I)$ for each $h \in I$:}
&\leq \max_{\sigma_2^*} &&\biggl[\sum_{I \in \mathcal{I}^S_2} \pi_2^{\sigma_2^*}(I)\biggl(\sum_{h \in I}\pi_{-2}^\sigma(h)u^{\sigma^S,\sigma_2^*}(h) - \sum_{h \in I}\pi_{-2}^\sigma(h)u^{\sigma,\sigma_2^S}(h)\biggr)\biggr]
\\ &&&+ v_2 + \epsilon_O\\
&= \max_{\sigma_2^*} &&\biggl[\mathrlap{\sum_{I \in \mathcal{I}^S_2}\pi_2^{\sigma_2^*}(I)\pi_{-2}^{\sigma}(I)\biggl(\text{BV}_I(\sigma \to \sigma^S)-\text{BV}_I(\sigma)\biggr)\biggr]+ v_2 + \epsilon_O}\\
&\leq &&\biggl[\sum_{I \in \mathcal{I}^S_2}\max(\text{BV}_I(\sigma \to \sigma^S)-\text{BV}_I(\sigma),0)\biggr]+ v_2 + \epsilon_O\\
&\leq &&\biggl[\sum_{I \in \mathcal{I}^S_2}\max(w_I-\text{BV}_I(\sigma),\text{BV}_I(\sigma)-\text{BV}_I(\sigma))\biggr]+ v_2 + \epsilon_O\\
&= &&\biggl[\sum_{I \in \mathcal{I}^S_2}\max(\text{BV}_I(\sigma),w_I) - \sum_{I \in \mathcal{I}^S_2} \text{BV}_I( \sigma )\biggr]+ v_2 + \epsilon_O
\end{alignat*}
\end{proof}

\begin{lemma}
  Given an $\epsilon_O$-exploitable $\sigma$ in the original game, if
  we replace the strategy in a subgame with a strategy $\sigma^S$ that
  is $\epsilon_S$-exploitable in the gadget game $G\langle S, \sigma,
  w\rangle$, then the new combined strategy has an exploitability no
  more than $\epsilon_O + \text{EXP}^S_{w,\sigma} +
  \text{U}^S_{w,\sigma} + \epsilon_S$.
  \label{lem:resolve_error}
\end{lemma}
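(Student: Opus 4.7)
The plan is to reduce Lemma~\ref{lem:resolve_error} to Lemma~\ref{lem:error} by inflating the vector $w$ just enough to dominate the actual counterfactual values that $\sigma^S$ produces, and then to pay for that inflation using the gadget-game exploitability bound from Lemma~\ref{lem:resolve_underestimation}. Concretely, define the auxiliary vector
\begin{equation*}
  w'_I \;=\; \max\bigl(w_I,\ \text{BV}_I(\sigma \to \sigma^S)\bigr)
  \qquad \text{for each } I \in \mathcal{I}^S_2.
\end{equation*}
By construction $w'_I \ge \text{BV}_I(\sigma\to\sigma^S)$, so the hypothesis of Lemma~\ref{lem:error} is satisfied with $w'$, and we get that the new combined strategy is no worse than $\epsilon_O + \text{EXP}^S_{w',\sigma}$-exploitable.

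The second step is to control $\text{EXP}^S_{w',\sigma}$ in terms of $\text{EXP}^S_{w,\sigma}$. Since $w' \ge w$ pointwise, for every $I$ we have the elementary inequality $\max(\text{BV}_I(\sigma), w'_I) \le \max(\text{BV}_I(\sigma), w_I) + (w'_I - w_I)$. Summing over $I \in \mathcal{I}^S_2$ and subtracting $\sum_I \text{BV}_I(\sigma)$ from both sides gives
\begin{equation*}
  \text{EXP}^S_{w',\sigma} \;\le\; \text{EXP}^S_{w,\sigma} \;+\; \sum_{I \in \mathcal{I}^S_2}\bigl(w'_I - w_I\bigr).
\end{equation*}

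The third step recognizes the residual sum as exactly what the gadget game controls. By the definition of $w'$ and of the gadget value,
\begin{equation*}
  \sum_{I \in \mathcal{I}^S_2} w'_I \;=\; \sum_{I \in \mathcal{I}^S_2}\max\bigl(w_I,\,\text{BV}_I(\sigma\to\sigma^S)\bigr) \;=\; \text{GV}^S_{w,\sigma}(\sigma^S),
\end{equation*}
so $\sum_I (w'_I - w_I) = \text{GV}^S_{w,\sigma}(\sigma^S) - \sum_I w_I$. Because $\sigma^S$ is $\epsilon_S$-exploitable in $G\langle S,\sigma,w\rangle$, Lemma~\ref{lem:resolve_underestimation} bounds this by $\text{U}^S_{w,\sigma} + \epsilon_S$. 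Chaining the three bounds yields the claimed exploitability $\epsilon_O + \text{EXP}^S_{w,\sigma} + \text{U}^S_{w,\sigma} + \epsilon_S$.

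The only mildly delicate point, and the one I would double-check carefully, is the monotonicity step $\max(a, w') \le \max(a, w) + (w' - w)$ used pointwise on each information set: it is trivial when $w' \ge w$, but it is what allows us to decouple the $\text{EXP}$ penalty (charged at vector $w$, where the caller's assumptions live) from the $\text{U} + \epsilon_S$ penalty (paid entirely by the gadget-game analysis). Everything else is bookkeeping: once $w'$ is introduced, Lemmas~\ref{lem:underestimation}--\ref{lem:error} are applied as black boxes in sequence, and there is no need to re-derive best-response inequalities of the kind used inside Lemma~\ref{lem:error}.
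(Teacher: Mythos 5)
Your proof is correct and follows essentially the same route as the paper's: both reduce the claim to Lemma~\ref{lem:error} applied with constraint values dominating $\text{BV}_I(\sigma\to\sigma^S)$, and then bound the resulting excess by recognizing $\sum_{I}\max\bigl(w_I,\text{BV}_I(\sigma\to\sigma^S)\bigr)$ as the gadget value $\text{GV}^S_{w,\sigma}(\sigma^S)$ and invoking Lemma~\ref{lem:resolve_underestimation}. Your introduction of the auxiliary vector $w'$ merely repackages the paper's $\max(a,b)=a+b-\min(a,b)$ bookkeeping into a single monotonicity inequality, which is a clean but not substantively different presentation.
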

\begin{proof}
  We use that $\max(a,b) = a + b - \min(a,b)$. From applying Lemma~\ref{lem:error} with\\ 
  ${w_I = \text{BV}_I(\sigma \to \sigma^S)}$ and expanding 
  ${\text{EXP}^S_{\text{BV}(\sigma \to \sigma^S),\sigma}}$ 
  we get exploitability no more than\\ ${\epsilon_O
  - \sum_{I \in \mathcal{I}^S_2} \text{BV}_I( \sigma )}$ plus
  \begin{align*}
    \sum_{I \in \mathcal{I}^S_2} &\max( \text{BV}_I( \sigma \to \sigma^S), \text{BV}_I( \sigma ) )  \\
    &\le \sum_{I \in \mathcal{I}^S_2} \max( \text{BV}_I( \sigma \to \sigma^S ), \max( w_I, \text{BV}_I( \sigma ) )  \\
    &= \sum_{I \in \mathcal{I}^S_2} \bigl( \text{BV}_I( \sigma \to \sigma^S ) + \max( w_I, \text{BV}_I( \sigma )) \\
    &\qquad - \min( \text{BV}_I( \sigma \to \sigma^S ), \max( w_I, \text{BV}_I( \sigma ))) \bigr)\\
    &\leq \sum_{I \in \mathcal{I}^S_2} \bigl( \text{BV}_I( \sigma \to \sigma^S ) + \max( w_I, \text{BV}_I( \sigma )) \\
    &\qquad - \min( \text{BV}_I( \sigma \to \sigma^S ), w_I) \bigr)\\
    &= \sum_{I \in \mathcal{I}^S_2} \bigl( \max( w_I, \text{BV}_I( \sigma )) + \max( w_I, \text{BV}_I( \sigma\to\sigma^S ) ) - w_I \bigr) \\
    &= \sum_{I \in \mathcal{I}^S_2} \max( w_I, \text{BV}_I( \sigma ) ) + \sum_{I \in \mathcal{I}^S_2} \max( w_I, \text{BV}_I( \sigma\to\sigma^S ))  - \sum_{I \in \mathcal{I}^S_2} w_I \\
  \end{align*}
  From Lemma~\ref{lem:resolve_underestimation} we get
  \begin{align*}
    \le \sum_{I \in \mathcal{I}^S_2} \max( w_I, \text{BV}_I( \sigma ) ) + \text{U}^S_{w,\sigma} + \epsilon_S \\
  \end{align*}
  Adding $\epsilon_O - \sum_I \text{BV}_I( \sigma )$ we get the
  upper bound $\epsilon_O + \text{EXP}^S_{w,\sigma} + \text{U}^S_{w,\sigma} + \epsilon_S$.
\end{proof}

\begin{lemma}
  Assume we are performing one step of re-solving on subtree $S$, with
  constraint values $w$ approximating opponent best-response values to
  the previous strategy $\sigma$, with an approximation error bound
  $\sum_I |w_I-\text{BV}_I(\sigma)| \le \epsilon_E$. Then we have
  $\text{EXP}^S_{w,\sigma} + \text{U}^S_{w,\sigma} \le \epsilon_E$.
  \label{lem:resolve_step}
\end{lemma}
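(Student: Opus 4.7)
The strategy is to split $\text{EXP}^S_{w,\sigma} + \text{U}^S_{w,\sigma}$ into a piece controlled by the overestimates $(w_I - \text{BV}_I(\sigma))^+$ and a piece controlled by the underestimates $(\text{BV}_I(\sigma) - w_I)^+$, then assemble the two halves into the $L_1$-difference $\sum_I |w_I - \text{BV}_I(\sigma)|$, which is bounded by $\epsilon_E$ by hypothesis. The key algebraic identity is $\max(a,b) - a = (b-a)^+$, which turns each definition into a sum of positive parts.

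First I would bound $\text{EXP}^S_{w,\sigma}$ directly from its definition. Since
\[
\text{EXP}^S_{w,\sigma} \;=\; \sum_{I \in \mathcal{I}^S_2} \bigl(\max(w_I,\text{BV}_I(\sigma)) - \text{BV}_I(\sigma)\bigr) \;=\; \sum_{I \in \mathcal{I}^S_2} (w_I - \text{BV}_I(\sigma))^+,
\]
this half of the sum already collects exactly the positive overestimation error per information set.

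Next I would bound $\text{U}^S_{w,\sigma}$ by exhibiting a specific subtree strategy, namely the restriction of $\sigma$ to $S$ itself, so that $\sigma \to \sigma^S = \sigma$ and $\text{BV}_I(\sigma \to \sigma^S) = \text{BV}_I(\sigma)$ for every $I \in \mathcal{I}^S_2$. Plugging this candidate into the definition of $\text{U}^S_{w,\sigma}$ gives
\[
\text{U}^S_{w,\sigma} \;\le\; \sum_{I \in \mathcal{I}^S_2} \max(w_I,\text{BV}_I(\sigma)) \;-\; \sum_{I \in \mathcal{I}^S_2} w_I \;=\; \sum_{I \in \mathcal{I}^S_2} (\text{BV}_I(\sigma) - w_I)^+,
\]
which captures precisely the underestimation error per information set.

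Adding the two bounds and using $(x)^+ + (-x)^+ = |x|$ yields
\[
\text{EXP}^S_{w,\sigma} + \text{U}^S_{w,\sigma} \;\le\; \sum_{I \in \mathcal{I}^S_2} |w_I - \text{BV}_I(\sigma)| \;\le\; \epsilon_E,
\]
as desired. The only delicate step is the choice of witness strategy for $\text{U}^S_{w,\sigma}$: one must be sure that $\sigma$, restricted to $S$, is a legal subtree strategy and that evaluating $\text{BV}_I$ under $\sigma \to \sigma|_S$ recovers $\text{BV}_I(\sigma)$. Both are immediate from the definitions, so I expect no serious obstacle; the lemma is essentially a careful accounting of positive and negative parts.
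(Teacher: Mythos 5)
Your proof is correct and follows essentially the same route as the paper's: both bound the minimum in $\text{U}^S_{w,\sigma}$ by using the original strategy $\sigma$ restricted to $S$ as the witness subtree strategy, and both then combine the resulting positive parts $(w_I-\text{BV}_I(\sigma))^+$ and $(\text{BV}_I(\sigma)-w_I)^+$ into $\sum_I|w_I-\text{BV}_I(\sigma)|\le\epsilon_E$. The only difference is presentational — you bound each term separately before adding, whereas the paper sums the two definitions first and then substitutes the witness.
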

\begin{proof}
  $\text{EXP}^S_{w,\sigma}$ measures the amount that the $w_I$ exceed $\text{BV}_I(\sigma)$, while $\text{U}^S_{w,\sigma}$ bounds the amount that the $w_I$ underestimate $\text{BV}_I(\sigma\to\sigma^S)$ for any $\sigma^S$, including the original $\sigma$. Thus, together they are bounded by $|w_I-\text{BV}_I(\sigma)|$:
  \begin{align*}
    \text{EXP}^S_{w,\sigma} + \text{U}^S_{w,\sigma} &= \sum_{I \in \mathcal{I}^S_2} \max( \text{BV}_I( \sigma ), w_I ) -
  \sum_{I \in \mathcal{I}^S_2} \text{BV}_I( \sigma )\\
    &\qquad+ \min_{\sigma^S} \sum_{I \in \mathcal{I}^S_2} \max(w_I,\text{BV}_I(\sigma\to\sigma^S)) - \sum_{I \in \mathcal{I}^S_2}w_I\\
    &\leq \sum_{I \in \mathcal{I}^S_2} \max( \text{BV}_I( \sigma ), w_I ) - \sum_{I \in \mathcal{I}^S_2} \text{BV}_I( \sigma )\\
    &\qquad + \sum_{I \in \mathcal{I}^S_2} \max(w_I,\text{BV}_I(\sigma)) - \sum_{I \in \mathcal{I}^S_2}w_I\\
    &= \sum_{I \in \mathcal{I}^S_2} \left[\max(w_I - \text{BV}_I(\sigma),0) + \max(\text{BV}_I(\sigma)-w_I,0)\right]\\
    &= \sum_{I \in \mathcal{I}^S_2} |w_I - \text{BV}_I(\sigma)| \leq \epsilon_E
  \end{align*}
\end{proof}

\begin{lemma}
  Assume we are solving a game $G$ with $T$ iterations of CFR-D where
  for both players $p$, subtrees $S$, and times $t$, we use subtree
  values $v_I$ for all information sets $I$ at the root of $S$ from
  some suboptimal black box estimator. If the estimation error is
  bounded, so that $\min_{\sigma_S^* \in \text{NE}_S} \sum_{I \in
    \mathcal{I}^S_2} |v^{\sigma_S^*} (I)-v_I| \le \epsilon_E$, then
  the trunk exploitability is bounded by $k_G/\sqrt{T} +
  j_G\epsilon_E$ for some game specific constant $k_G,j_G \ge 1$ which
  depend on how the game is split into a trunk and subgames.
  \label{lem:cfrd_estimator}
\end{lemma}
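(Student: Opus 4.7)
The plan is to reduce to the standard CFR-D analysis (Theorem 2 of Burch et al.\ \cite{cprg:cfrd}), which already handles the case where exact subtree Nash values are plugged in at the root of each subgame $S$, and then absorb the approximation error as an additive perturbation. Let $\sigma_S^*$ achieve the minimum in the hypothesis, so that $\sum_{I \in \mathcal{I}^S_2}|v^{\sigma_S^*}(I)-v_I| \le \epsilon_E$. I would first set up a ``reference'' run of CFR-D that is identical to the actual run except that whenever a subtree $S$ is entered at iteration $t$, the leaf utility at each $I \in \mathcal{I}^S_2$ is $v^{\sigma_S^*}(I)$ rather than the black-box estimate $v_I^t$. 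By Burch et al., the trunk strategy produced by this reference run has exploitability at most $c_G/\sqrt{T}$, where $c_G$ depends on the number of trunk information sets, the number of subtrees, and the maximum utility range in the trunk.

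Next I would bound the gap between the two runs. At every iteration $t$, the instantaneous counterfactual regrets computed at each trunk information set that reaches subtree $S$ differ between the two runs by at most the weighted $\ell_1$ error $\sum_I \pi^{\sigma^t}_{-p}(I)|v_I^t - v^{\sigma_S^*}(I)|$, which by hypothesis is bounded (after pulling the reach weight out by a game-dependent constant) by a constant multiple of $\epsilon_E$. Summing these per-iteration deviations over $T$ iterations and over all subtrees gives a cumulative difference of $O(T\epsilon_E)$ in the trunk regret bound. Since CFR's regret-matching updates are driven by cumulative counterfactual regrets, the actual trunk regret with the black-box values is at most $c_G\sqrt{T} + c_G' T \epsilon_E$ for some game-specific constant $c_G'$ that absorbs the number of subtrees and the weighting by reach probabilities.

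Dividing by $T$ converts trunk regret into average-strategy exploitability in the standard way, yielding $c_G/\sqrt{T} + c_G'\epsilon_E$. Applying the same argument symmetrically to both players and combining (via the regret-to-exploitability conversion for two-player zero-sum games, which contributes another constant factor) produces a bound of the promised form $k_G/\sqrt{T} + j_G \epsilon_E$ on the trunk exploitability of the average strategy, where the constants depend only on how $G$ is split into trunk and subgames.

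The main obstacle is the second step: carefully verifying that replacing the subgame's Nash leaf values with the black-box estimates injects only an additive $\epsilon_E$ error per iteration into each subtree's contribution to the trunk regret, rather than corrupting the CFR regret decomposition itself. This requires checking that the estimates do not need to correspond to any realizable subtree strategy for the trunk regret-matching bound to hold --- only the telescoping of immediate counterfactual regrets matters, and this is preserved under additive perturbation of leaf utilities. Once that robustness is established, the rest is bookkeeping through the existing CFR-D machinery.
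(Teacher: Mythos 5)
Your high-level idea --- that the estimator error should enter the CFR-D trunk-regret bound as an additive perturbation of order $T\epsilon_E$, which becomes $\epsilon_E$ after averaging --- is the right one, and it is what the paper's proof delivers. However, the mechanism you propose has a genuine gap. You compare the actual run against a ``reference'' run of CFR-D that uses the exact values $v^{\sigma_S^*}(I)$ at subgame roots, and then bound the per-iteration difference in instantaneous counterfactual regrets between the two runs. But the two runs cannot be ``identical except for the leaf utilities'': regret matching's iterate $\sigma^{t+1}$ is a highly non-Lipschitz function of the accumulated regrets, so after the first iteration the two runs play different trunk strategies, reach the subgames with different probabilities, and accumulate incomparable regrets. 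A per-iteration comparison of their instantaneous regrets is therefore not meaningful, and there is no useful relation between the exploitabilities of the two runs' (entirely different) average strategies. The sound single-run version of your perturbation argument is what the paper actually does: it argues by induction on public states using the regret decomposition $R^{T}_{full}(I) \le R^T(I) + \sum_{I' \in Succ(I)} R^{T,+}_{full}(I')$ from Zinkevich et al., so that the trunk's $O(\sqrt{T})$ immediate regrets and the subgames' $T\epsilon_E$ contributions are accounted for within one run, yielding $k_G = \Delta|\mathcal{I}_{TR}|\sqrt{A}$ and $j_G = N_{root}$.

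A second, related issue is your closing claim that ``the estimates do not need to correspond to any realizable subtree strategy.'' That is true for the trunk regret-matching bound itself, but it is false for converting trunk regret into trunk exploitability: the combined strategy must actually play something in each subgame, and the trunk's average strategy is only near-optimal in the real game if the values it was optimized against are approximately achievable by a genuine subgame strategy. This is precisely why the hypothesis measures the error against $\min_{\sigma_S^* \in \text{NE}_S}$, and it is where the paper's base case does real work: because the $v_I$ are within $\epsilon_E$ (in aggregate) of $v^{\sigma_S^*}(I)$ for an actual subgame equilibrium $\sigma_S^*$, implicitly playing $\sigma_S^*$ on every iteration accrues full regret at most $T\epsilon_E$ at the subgame root. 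Your argument needs this step; without it you bound only the regret against the fed-in values, not the exploitability in the real game.
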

\begin{proof}
  This follows from a modified version the proof of Theorem 2 of
  Burch~et al.~\cite{cprg:cfrd}, which uses a fixed error $\epsilon$
  and argues by induction on information sets. Instead, we argue
  by induction on entire public states.

  For every public state $s$, let $N_s$ be the number of subgames
  reachable from $s$, including any subgame rooted at $s$. Let $Succ(s)$
  be the set of our public states which are reachable from $s$ without going through
  another of our public states on the way. Note that if $s$ is in the trunk, then every
  $s' \in Succ(s)$ is in the trunk or is the root of a subgame. Let $D_{TR}(s)$
  be the set of our trunk public states reachable from $s$, including $s$ if $s$ is in the trunk.
  We argue that for any public state $s$ where we act in the trunk or at the root of a subgame
  \begin{equation}
  \label{eqn:inductiveregret}
  \sum_{I \in s} R^{T,+}_{full}(I) \leq \sum_{s' \in D_{TR}(s)} \sum_{I \in s'} R^{T,+}(I) + TN_s\epsilon_E
  \end{equation}
  First note that if no subgame is reachable from $s$, then $N_s = 0$
  and the statement follows from Lemma 7 of \cite{ZinkevichEtAl07}.
  For public states from which a subgame is reachable, we argue by
  induction on $|D_{TR}(s)|$. 
   
  For the base case, if $|D_{TR}(s)| = 0$ then $s$ is the root of a subgame $S$,
  and by assumption there is a Nash Equilibrium subgame strategy $\sigma_S^*$
  that has regret no more than $\epsilon_E$. If we implicitly play $\sigma_S^*$
  on each iteration of CFR-D, we thus accrue $\sum_{I \in s} R^{T,+}_{full}(I) \leq T\epsilon_E$.

  For the inductive hypothesis, we assume that (\ref{eqn:inductiveregret}) holds for all $s$
  such that $|D_{TR}(s)| < k$.

  Consider a public state $s$ where $|D_{TR}(s)| = k$. By Lemma 5 of \cite{ZinkevichEtAl07}
  we have
  \begin{align*}
  \sum_{I \in s} R^{T,+}_{full}(I) &\leq \sum_{I \in s}\left[R^T(I) + \sum_{I' \in Succ(I)}R^{T,+}_{full}(I)\right]\\
  &= \sum_{I \in s} R^T(I) + \sum_{s' \in Succ(s)} \sum_{I' \in s'} R^{T,+}_{full}(I')
  \end{align*}

  For each $s' \in Succ(s)$, $D(s') \subset D(s)$ and $s \not\in D(s')$, so $|D(s')|<|D(s)|$
  and we can apply the inductive hypothesis to show
  \begin{align*}
  \sum_{I \in s} R^{T,+}_{full}(I) &\leq \sum_{I \in s} R^T(I) + \sum_{s' \in Succ(s)} \left[\sum_{s'' \in D(s')}\sum_{I \in s''} R^{T,+}(I) + TN_{s'}\epsilon_E\right]\\
  &\leq \sum_{s' \in D(s)}\sum_{I \in s'} R^{T,+}(I) + T\epsilon_E \sum_{s' \in Succ(s)} N_{s'}\\
  &= \sum_{s' \in D(s)}\sum_{I \in s'} R^{T,+}(I) + T\epsilon_E N_s
  \end{align*}

  This completes the inductive argument. By using regret matching in the trunk, we ensure
  $R^T(I) \leq \Delta\sqrt{AT}$, proving the lemma for $k_G = \Delta|\mathcal{I_{TR}}|\sqrt{A}$
  and $j_G = N_{root}$.
\end{proof}

\begin{lemma}
  Given our strategy $\sigma$, if the opponent is acting at the root
  of a public subtree $S$ from a set of actions $A$, with opponent
  best-response values $\text{BV}_{I \cdot a}(\sigma)$ after each
  action $a \in A$, then replacing our subtree strategy with any
  strategy that satisfies the opponent constraints $w_I = \max_{a \in
    A} \text{BV}_{I \cdot a}(\sigma)$ does not increase our
  exploitability.
  \label{lem:opp_constraint_values}
\end{lemma}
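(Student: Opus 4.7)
The plan is to reduce this statement directly to Lemma~\ref{lem:error}, by observing that when the opponent acts at the root of $S$, the quantities $w_I = \max_{a \in A}\text{BV}_{I\cdot a}(\sigma)$ are exactly the opponent's best-response counterfactual values $\text{BV}_I(\sigma)$, so the extra exploitability term produced by Lemma~\ref{lem:error} vanishes.

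First I would establish the key identity $\text{BV}_I(\sigma) = \max_{a \in A}\text{BV}_{I\cdot a}(\sigma)$ for every $I \in \mathcal{I}^S_2$. Since the opponent is the acting player at $I$, any best response $\sigma_2^*$ is free to assign its action probabilities at $I$ however it wishes. Writing the counterfactual value as a weighted sum over the children of $I$ reached by each action $a$, and noting that the weights at $I$ itself are $\sigma_2^*(a\mid I)$ while the rest of the reach probability factors are fixed by $\sigma$ and the opponent's play below $I \cdot a$, the maximization decomposes: the opponent can independently best-respond below each child $I \cdot a$ (achieving $\text{BV}_{I \cdot a}(\sigma)$ there) and then place all probability at $I$ on whichever $a$ gives the largest such value. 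Hence $\text{BV}_I(\sigma) = \max_{a \in A}\text{BV}_{I\cdot a}(\sigma) = w_I$.

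Next I would invoke Lemma~\ref{lem:error} with this particular choice of constraints $w$. The hypothesis $\text{BV}_I(\sigma \to \sigma^S) \le w_I$ holds for the replacement strategy by the assumption in the lemma statement. Lemma~\ref{lem:error} then bounds the exploitability of the combined strategy by $\epsilon_O + \text{EXP}^S_{w,\sigma}$, where $\epsilon_O$ is the exploitability of $\sigma$. Substituting $w_I = \text{BV}_I(\sigma)$ collapses the extra term:
\begin{equation*}
\text{EXP}^S_{w,\sigma} = \sum_{I \in \mathcal{I}^S_2}\max(\text{BV}_I(\sigma), w_I) - \sum_{I \in \mathcal{I}^S_2}\text{BV}_I(\sigma) = 0.
\end{equation*}
Thus the combined strategy is also $\epsilon_O$-exploitable, which is the claim.

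The only nontrivial step is the identity in the first paragraph, and the main care required is simply that a best response truly attains the per-action maximum at an opponent decision point, i.e.\ that mixing at $I$ cannot strictly beat picking the best pure action there. This follows because the value of any mixed action at $I$ is a convex combination of the pure-action values $\text{BV}_{I\cdot a}(\sigma)$, which is at most the maximum; so the deterministic choice attaining $\max_a \text{BV}_{I\cdot a}(\sigma)$ is optimal and no further structural argument is needed.
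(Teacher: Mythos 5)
Your proposal is correct and follows essentially the same route as the paper: both hinge on the identity $\text{BV}_I(\sigma)=\max_{a\in A}\text{BV}_{I\cdot a}(\sigma)$ at an opponent decision point, so that the constraints $w_I$ coincide with the opponent's existing best-response values and the replacement gives them nothing new. The paper states this in two sentences and concludes directly that the values $\text{BV}_I$ are preserved, whereas you make the final bookkeeping explicit by invoking Lemma~\ref{lem:error} and observing that $\text{EXP}^S_{w,\sigma}=0$; this is a presentational difference only.
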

\begin{proof}
  If the opponent is playing a best response, every counterfactual
  value $w_I$ before the action must either satisfy $w_I =
  \text{BV}_I(\sigma) = \max_{a \in A} \text{BV}_{I \cdot a}(\sigma)$,
  or not reach state $s$ with private information $I$. If we replace
  our strategy in S with a strategy $\sigma'_S$ such that
  $\text{BV}_{I \cdot a}(\sigma'_S) \le \text{BV}_I(\sigma)$ we
  preserve the property that $\text{BV}_I(\sigma') =
  \text{BV}_I(\sigma)$.
\end{proof}

\begin{theorem}
  Assume we have some initial opponent constraint values $w$ from a
  solution generated using at least $T$ iterations of CFR-D, we use at
  least $T$ iterations of CFR-D to solve each re-solving game, and we
  use a subtree value estimator such that $\min_{\sigma_S^* \in
    \text{NE}_S} \sum_{I \in \mathcal{I}^S_2} |v^{\sigma_S^*} (I)-v_I|
  \le \epsilon_E$, then after $d$ re-solving steps the exploitability
  of the resulting strategy is no more than $(d+1)k/\sqrt{T} +
  (2d+1)j\epsilon_E$ for some constants $k,j$ specific to both the
  game and how it is split into subgames.
  \label{thm:deepstack}
\end{theorem}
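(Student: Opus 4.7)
The proof proceeds by induction on the number of re-solving steps $d$, chaining together the four tools developed above: Lemma~\ref{lem:resolve_error} for the per-step exploitability increase, Lemma~\ref{lem:resolve_step} for bounding $\text{EXP}^S_{w,\sigma} + \text{U}^S_{w,\sigma}$ from the constraint approximation error, Lemma~\ref{lem:cfrd_estimator} for CFR-D's exploitability under an approximate subtree value estimator, and Lemma~\ref{lem:opp_constraint_values} for forwarding opponent counterfactual values through opponent actions without observing them.

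For the base case ($d=0$), the initial strategy is the direct output of $T$ iterations of CFR-D applied to the full game with the deep counterfactual value network playing the role of the approximate subtree estimator. Lemma~\ref{lem:cfrd_estimator} immediately yields exploitability at most $k/\sqrt{T} + j\epsilon_E$ for appropriate game-specific constants $k, j \ge 1$, which matches the theorem's formula at $d=0$.

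For the inductive step, suppose the strategy $\sigma$ after $d-1$ re-solves has exploitability at most $\epsilon_{d-1} \le dk/\sqrt{T} + (2d-1)j\epsilon_E$. At the current public state $S$ we have carried forward opponent counterfactual values $w$ from the previous re-solve via the three transition rules described under ``Continual re-solving''. Two facts about $w$ are needed: (i) by the estimator hypothesis they satisfy $\sum_{I \in \mathcal{I}_2^S} |w_I - \text{BV}_I(\sigma)| \le \epsilon_E$, and (ii) by Lemma~\ref{lem:opp_constraint_values} treating them as max-over-actions opponent best-response values (as opposed to values following some specific observed opponent action) does not inflate exploitability. Lemma~\ref{lem:resolve_step} then bounds $\text{EXP}^S_{w,\sigma} + \text{U}^S_{w,\sigma}$ by $\epsilon_E$. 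Running $T$ iterations of CFR-D with the same estimator on the gadget game $G\langle S, \sigma, w\rangle$ produces, by Lemma~\ref{lem:cfrd_estimator}, an $\epsilon_S$-exploitable gadget-game strategy with $\epsilon_S \le k/\sqrt{T} + j\epsilon_E$. Feeding these estimates into Lemma~\ref{lem:resolve_error} bounds the new overall exploitability by $\epsilon_{d-1} + \epsilon_E + k/\sqrt{T} + j\epsilon_E$; using $j \ge 1$ from Lemma~\ref{lem:cfrd_estimator} to absorb the stray $\epsilon_E$ into $j\epsilon_E$ yields the recurrence $\epsilon_d \le \epsilon_{d-1} + k/\sqrt{T} + 2j\epsilon_E$, which telescopes to $(d+1)k/\sqrt{T} + (2d+1)j\epsilon_E$, as claimed.

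The main obstacle is fact (i) above: verifying that the carried constraint values $w$ genuinely track $\text{BV}_I(\sigma)$ within $\ell_1$ error $\epsilon_E$ at every re-solve. The three update rules behave differently — own moves copy out the per-action counterfactual values produced by the just-completed re-solve, chance moves restrict the appropriate slice of those values and zero hands made impossible by the new public cards, and opponent moves leave $w$ untouched — and each must be shown to preserve the $\epsilon_E$ approximation against the current running strategy. The non-observation of opponent actions is what makes Lemma~\ref{lem:opp_constraint_values} indispensable here, because it reinterprets $w$ as a max-over-opponent-actions upper bound rather than a value conditional on an observed opponent move, and this is precisely the object that the preceding re-solve produced and that the next re-solve consumes. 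Once this bookkeeping is in place, the remainder of the argument is a clean substitution into the lemmas above, and in particular the per-step error stays additive in $\epsilon_E$ rather than multiplicative, which is what gives the linear-in-$d$ dependence.
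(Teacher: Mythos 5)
Your proposal follows the paper's own argument essentially step for step: the initial solve is bounded by Lemma~\ref{lem:cfrd_estimator}, each re-solve's exploitability increase is bounded by combining Lemma~\ref{lem:opp_constraint_values} (to justify the carried-forward constraint values), Lemma~\ref{lem:resolve_step} (to get $\text{EXP}^{S}_{w,\sigma}+\text{U}^{S}_{w,\sigma}\le\epsilon_E$), Lemma~\ref{lem:resolve_error}, and Lemma~\ref{lem:cfrd_estimator} again for the gadget-game solve, yielding the same per-step recurrence $\epsilon_d\le\epsilon_{d-1}+k/\sqrt{T}+2j\epsilon_E$ and the same telescoped bound. The bookkeeping issue you flag (that the three update rules preserve the $\epsilon_E$ approximation of the carried values) is treated at the same level of brevity in the paper itself, so your proposal is correct and matches the paper's proof.
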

\begin{proof}
  Continual re-solving begins by solving from the root of the entire game, which we label as
  subtree $S_0$. We use CFR-D with the value estimator in place of subgame solving 
  in order to generate an initial strategy $\sigma_0$ for playing in $S_0$. 
  By Lemma~\ref{lem:cfrd_estimator}, the exploitability
  of $\sigma_0$ is no more than $k_0/\sqrt{T} + j_0\epsilon_E$.
  
  For each step of continual re-solving $i = 1,...,d$, we are re-solving some
  subtree $S_i$. From the previous step of re-solving, we have approximate opponent 
  best-response counterfactual values $\widetilde{\mathrm{BV}}_I(\sigma_{i-1})$ for each
  $I \in \mathcal{I}^{S_{i-1}}_2$, which by the estimator bound satisfy 
  $|\sum_{I \in \mathcal{I}^{S_{i-1}}_2} BV_I(\sigma_{i-1}) - \widetilde{\mathrm{BV}}_I(\sigma_{i-1})| \leq \epsilon_E$.
  Updating these values at each public state between $S_{i-1}$ and $S_{i}$ 
  as described in the paper yields approximate values $\widetilde{\mathrm{BV}}_I(\sigma_{i-1})$ 
  for each $I \in \mathcal{I}^{S_i}_2$ which by Lemma~\ref{lem:opp_constraint_values}
  can be used as constraints $w_{I,i}$ in re-solving. Lemma~\ref{lem:resolve_step} with these constraints
  gives us the bound $\text{EXP}^{S_i}_{w_i,\sigma_{i-1}} +
  \text{U}^{S_i}_{w_i,\sigma_{i-1}} \le \epsilon_E$.
  Thus by Lemma~\ref{lem:resolve_error} and Lemma~\ref{lem:cfrd_estimator} we
  can say that the increase in exploitability from $\sigma_{i-1}$ to
  $\sigma_i$ is no more than $\epsilon_E + \epsilon_{S_i} \leq \epsilon_E + k_i/\sqrt{T} + j_i\epsilon_E
  \leq  k_i/\sqrt{T} + 2j_i\epsilon_E$.

  Let $k = \max_i k_i$ and $j = \max_i j_i$. Then after $d$ re-solving
  steps, the exploitability is bounded by $(d+1)k/\sqrt{T} +
  (2d+1)j\epsilon_E$.
\end{proof}

\subsection*{Best-response Values Versus Self-play Values}
DeepStack uses self-play values within the continual re-solving
computation, rather than the best-response values described in
Theorem~\ref{thm:deepstack}. Preliminary tests using CFR-D to solve
smaller games suggested that strategies generated using self-play
values were generally less exploitable and had better one-on-one
performance against test agents, compared to strategies generated
using best-response values.  Figure~\ref{fig:selfplay-converge} shows
an example of DeepStack's exploitability in a particular river subgame 
with different numbers of re-solving iterations.  Despite lacking
a theoretical justification for its soundness, using self-play values appears
to converge to low exploitability strategies just as with using best-response values.

\begin{figure}[t]
\centering
\includegraphics[width=0.5\textwidth]{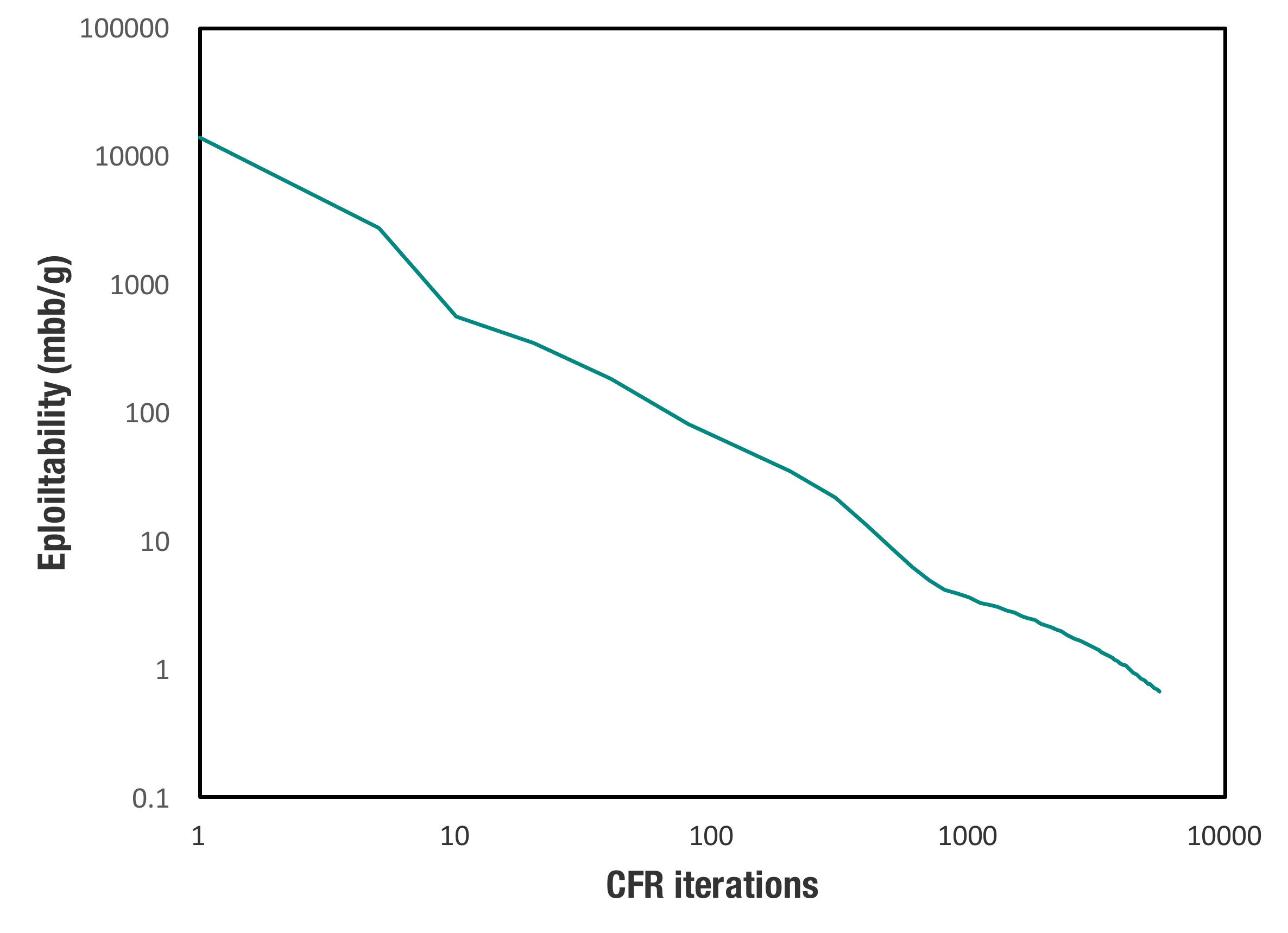}
\caption{DeepStack's exploitability within a particular public state at the start of the river as a function of the number of re-solving iterations.}\label{fig:selfplay-converge}
\end{figure}

One possible explanation for why self-play values work well with
continual re-solving is that at every re-solving step, we give away a
little more value to our best-response opponent because we are not
solving the subtrees exactly. If we use the self-play values for the
opponent, the opponent's strategy is slightly worse than a best
response, making the opponent values smaller and
counteracting the inflationary effect of an inexact solution.  
While this optimism could hurt us by setting unachievable
goals for the next re-solving step (an increased
$\text{U}^S_{w,\sigma}$ term), in poker-like games we find that the
more positive expectation is generally correct (a decreased
$\text{EXP}^S_{w,\sigma}$ term.)

\section*{Pseudocode}
Complete pseudocode for DeepStack's depth-limited continual re-resolving algorithm is in Algorithm~\ref{alg:pseudocode}.  Conceptually, DeepStack can be decomposed into four functions: \textproc{Re-solve}, \textproc{Values}, \textproc{UpdateSubtreeStrategies}, and \textproc{RangeGadget}. The main function is \textproc{Re-solve}, which is called every time DeepStack needs to take an action. It iteratively calls each of the other functions to refine the lookahead tree solution. After $T$ iterations, an action is sampled from the approximate equilibrium strategy at the root of the subtree to be played. 
According to this action, DeepStack's range, $\vec{r}_1$, and its opponent's counterfactual values, $\vec{v}_2$, are updated in preparation for its next decision point.

\let\veccopy\vec
\let\vec\mathbf

\begin{algorithm}[!htbp]
\small 
\caption{Depth-limited continual re-solving}
\label{alg:pseudocode}
\textbf{INPUT:} Public state $S$, player range $\vec{r}_1$ over our information sets in $S$, opponent counterfactual values $\vec{v}_2$ over their information sets in $S$, and player information set $I \in S$\\
\textbf{OUTPUT:} Chosen action $a$, and updated representation after the action $(S(a), \vec{r}_1(a), \vec{v}_2(a))$
\begin{algorithmic}[1]

\Statex
\Function{Re-solve}{$S,\vec{r}_1,\vec{v}_2,I$}
    \State $\sigma^0 \gets \text{arbitrary initial strategy profile}$
    \State $\vec{r}^0_2 \gets \text{arbitrary initial opponent range}$
    \State $R_G^0,R^0 \gets \vec{0}$ \Comment{Initial regrets for gadget game and subtree}
    \For{$t=1$ to $T$}
        \State $\vec{v}_1^{t},\vec{v}_2^{t} \gets$ \Call{Values}{$S,\sigma^{t-1},\vec{r}_1,\vec{r}^{t-1}_2,0$}
        \State $\sigma^t,R^t \gets$ \Call{UpdateSubtreeStrategies}{$S,\vec{v}_1^{t},\vec{v}_2^{t},R^{t-1}$}
        \State $\vec{r}^t_2,R_G^t \gets$    
            \Call{RangeGadget}{$\vec{v}_2,\vec{v}_2^{t}(S),R_G^{t-1}$}
    \EndFor
    \State $\overline{\sigma}^T \gets \frac{1}{T}\sum_{t=1}^T \sigma^t$ \Comment{Average the strategies}
    \State $a \sim \overline{\sigma}^T(\cdot |I)$ \Comment{Sample an action}
    \State $\vec{r}_1(a) \gets \langle \vec{r}_1, \sigma(a|\cdot) \rangle$ \Comment{Update the range based on the chosen action}
    \State $\vec{r}_1(a) \gets \vec{r}_1(a) / ||\vec{r}_1(a)||_1$ \Comment{Normalize the range}
    \State $\vec{v}_2(a) \gets \frac{1}{T}\sum_{t=1}^T \vec{v}^t_2(a)$ \Comment{Average of counterfactual values after action $a$}
    \State \Return $a,S(a),\vec{r}_1(a), \vec{v}_2(a)$
\EndFunction
\Statex

\Function{Values}{$S,\sigma,\vec{r}_1,\vec{r}_2,d$} \Comment{\parbox[t]{.6\linewidth}{Gives the counterfactual values of the subtree $S$ under $\sigma$,
    computed with a depth-limited lookahead.}}
    \If{$S$ is terminal}
        \State $\vec{v}_1(S) \gets U_S\vec{r}_2$ \Comment{Where $U_S$ is the matrix of the bilinear utility function at $S$, \hspace{2pt}~}
        \State $\vec{v}_2(S) \gets \vec{r}_1^\intercal U_S$ \hspace*{\fill}$U(S) = \vec{r}_1^\intercal U_S \vec{r_2}$, thus giving vectors of counterfactual values
        \State \Return $\vec{v}_1(S),\vec{v}_2(S)$
    \ElsIf{$d = \textrm{MAX-DEPTH}$}
        \State \Return \Call{NeuralNetEvaluate}{$S,\vec{r}_1,\vec{r}_2$}
    \EndIf
    \State $\vec{v}_1(S), \vec{v}_2(S) \gets \vec{0}$
    \For{action $a \in S$}
        \State $\vec{r}_{\text{Player}(S)}(a) \gets \langle \vec{r}_{\text{Player}(S)}, \sigma(a|\cdot) \rangle$
            \Comment Update range of acting player based on strategy
        \State $\vec{r}_{\text{Opponent}(S)}(a) \gets \vec{r}_{\text{Opponent}(S)}$
        \State $\vec{v}_1(S(a)), \vec{v}_2(S(a)) \gets \Call{Value}{S(a),\sigma,\vec{r}_1(a),\vec{r}_2(a),d+1}$
        \State $\vec{v}_{\text{Player}(S)}(S) \gets \vec{v}_{\text{Player}(S)}(S) + \sigma(a|\cdot)\vec{v}_{\text{Player(S)}}(S(a))$
            \Comment{Weighted average}
        \State $\vec{v}_{\text{Opponent}(S)}(S) \gets \vec{v}_{\text{Player}(S)}(S) + \vec{v}_{\text{Opponent(S)}}(S(a))$
            \Statex\Comment{\parbox[t]{.5\linewidth}{Unweighted sum, as our strategy is already included in opponent counterfactual values}}
    \EndFor
    \State \Return $\vec{v}_1, \vec{v}_2$
\EndFunction
\algstore{pseudocode}
\end{algorithmic}
\end{algorithm}

\begin{algorithm}
\small
\begin{algorithmic}
\algrestore{pseudocode}

\Function{UpdateSubtreeStrategies}{$S,\vec{v}_1,\vec{v}_2,R^{t-1}$}
    \For{$S' \in \{S\}\cup\text{SubtreeDescendants}(S)$ with $\text{Depth}(S') < \textrm{MAX-DEPTH}$}
        \For{action $a \in S'$}
            \State $R^t(a|\cdot) \gets R^{t-1}(a|\cdot) + \vec{v}_{\text{Player}(S')}(S'(a)) - \vec{v}_{\text{Player}(S')}(S')$
                \Statex\Comment{Update acting player's regrets}
        \EndFor
        \For{information set $I \in S'$}
            \State $\sigma^t(\cdot|I) \gets \frac{R^t(\cdot|I)^+}{\sum_{a}R^t(a|I)^+}$ \Comment{Update strategy with regret matching}
        \EndFor
    \EndFor
    \State \Return $\sigma^t, R^t$
\EndFunction
\Statex

\Function{RangeGadget}{$\vec{v}_2,\vec{v}_2^{t},R_G^{t-1}$} \Comment{\parbox[t]{.5\linewidth}{Let opponent choose to play in the subtree or receive the input value with each hand (see Burch~et~al.~\cite{cprg:cfrd})}}
    \State $\sigma_G(\textrm{F}|\cdot) \gets
        \frac{R_G^{t-1}(\textrm{F}|\cdot)^{+}}{R_G^{t-1}(\textrm{F}|\cdot)^{+} +
        R_G^{t-1}(\textrm{T}|\cdot)^{+}}$ \Comment{F is Follow action, T is Terminate}
    \State $\vec{r}_2^t \gets \sigma_G(\textrm{F}|\cdot)$
    \State $\vec{v}_G^{t} \gets \sigma_G(\textrm{F}|\cdot)\vec{v}_2^{t-1} +    
        (1-\sigma_G(\textrm{F}|\cdot))\vec{v}_2$ \Comment{Expected value of gadget strategy}
    \State $R_G^{t}(\textrm{T}|\cdot) \gets R_G^{t-1}(\textrm{T}|\cdot) + \vec{v}_2 - v_G^{t-1}$ \Comment{Update regrets}
    \State $R_G^{t}(\textrm{F}|\cdot) \gets R_G^{t-1}(\textrm{F}|\cdot) + \vec{v}_2^{t} - v_G^{t}$
    \State \Return $\vec{r}_2^t, R_G^t$
\EndFunction
\end{algorithmic}
\end{algorithm}

\let\vec\veccopy

\fi

\end{document}